\definecolor{darkred}{RGB}{200,0,0}
\newtheorem{theorem}{Theorem}
\newcommand\numberthis{\addtocounter{equation}{1}\tag{\theequation}}
\newcommand{\algname}{\textsc{FSR}}
\newcommand{\algnamec}{\algname-\textsc{C}}
\newcommand{\algnameu}{\algname-\textsc{U}}
\newcommand{\algnameuub}{\algnameu-\textsc{UB}}
\newcommand{\pars}[1]{\left( #1 \right)}
\newcommand{\sqpars}[1]{\left[ #1 \right]}
\newcommand{\sqparsmid}[1]{\Bigl[ #1 \Bigr]}
\newcommand{\brpars}[1]{\left\lbrace #1 \right\rbrace}
\newcommand{\qt}[1]{\lq\lq#1\rq\rq}
\newcommand{\qtm}[1]{\text{\lq\lq}#1\text{\rq\rq}}
\newcommand{\ind}[1]{\mathds{1} \left[ #1 \right] }
\newcommand{\dataset}{\mathcal{D}}
\newcommand{\sample}{\mathcal{S}}
\newcommand{\A}{\mathcal{A}}
\newcommand{\lang}{\mathcal{L}}
\newcommand{\patt}{\mathcal{P}}
\newcommand{\tqual}{ \mathsf{q}_{\patt}}
\newcommand{\qualp}{{\qual{\patt}{\dataset}}}
\newcommand{\nqualp}[2]{{\bar{\mathsf{q}}^{*}_{#1}(#2)}}
\newcommand{\squalp}{\squal{\patt}{\dataset}}
\newcommand{\squal}[2]{ \mathsf{q}_{#1}(#2) }
\newcommand{\qual}[2]{ \bar{\mathsf{q}}_{#1}(#2) }
\newcommand{\hsqual}[3]{\bar{\mathsf{q}}_{#1}(#2,#3) }
\newcommand{\tfreq}{ \mathsf{f}_{\patt}}
\newcommand{\freqp}{\mathsf{f}_{\patt}(\dataset)}
\newcommand{\unusp}{\mathsf{u}_{\patt}(\dataset)}
\newcommand{\covp}{\mathsf{C}_{\patt}(\dataset)}
\newcommand{\devest}{\tilde{d}(\resset)}
\newcommand{\devesth}{\tilde{d}(\resseth)}
\newcommand{\E}{\mathop{\mathbb{E}}}
\newcommand{\F}{\mathcal{F}}
\newcommand{\X}{\mathcal{X}}
\newcommand{\T}{\mathcal{T}}
\newcommand{\abs}[1]{\lvert#1 \rvert}
\newcommand{\probdist}{\gamma}
\newcommand{\R}{\mathbb{R}}
\newcommand{\resset}{\mathcal{R}^\star}
\newcommand{\resseth}{\hat{\mathcal{R}}^\star}
\newcommand\given[1][]{\:#1\vert\:}
\newcommand{\unionbdelta}{\ln \bigl( \frac{4}{\delta} \bigr)}
\newif\ifextversion
\newcommand\vldbdoi{XX.XX/XXX.XX}
\newcommand\vldbpages{XXX-XXX}
\newcommand\vldbvolume{17}
\newcommand\vldbissue{10}
\newcommand\vldbyear{2024}
\newcommand\vldbauthors{\authors}
\newcommand\vldbtitle{\shorttitle} 
\newcommand\vldbavailabilityurl{https://github.com/VandinLab/FSR}
\newcommand\vldbpagestyle{plain} 
\begin{document}\title[Efficient Discovery of Significant Patterns with Few-Shot Resampling]{Efficient Discovery of Significant Patterns \\ with Few-Shot Resampling}

%%
%% The "author" command and its associated commands are used to define the authors and their affiliations.
\author{Leonardo Pellegrina}
\affiliation{%
  \institution{Dept. of Information Engineering, University of Padova}
  \streetaddress{Via Gradenigo 6b}
  \city{Padova}
  \state{Italy}
  \postcode{35129}
}
\email{leonardo.pellegrina@unipd.it}

\author{Fabio Vandin}
\affiliation{%
  \institution{Dept. of Information Engineering, University of Padova}
  \streetaddress{Via Gradenigo 6b}
  \city{Padova}
  \state{Italy}
  \postcode{35129}
}
\email{fabio.vandin@unipd.it}

%%
%% The abstract is a short summary of the work to be presented in the
%% article.
\begin{abstract}
Significant pattern mining is a fundamental task in mining transactional data, 
%with applications in several domains including medicine and social networks, 
requiring to identify \emph{patterns} significantly associated with the value of a given feature, the \emph{target}. 
In several applications, such as biomedicine, basket market analysis, and social networks, the goal is to discover patterns whose association with the target is defined with respect to an underlying population, or process, of which the dataset represents only a collection of observations, or samples. A natural way to capture the association of a pattern with the target is to consider its \emph{statistical significance}, assessing its deviation from the (null) hypothesis of independence between the pattern and the target. While several algorithms have been proposed to find statistically significant patterns, it remains a computationally demanding task, and for complex patterns such as subgroups, no efficient solution exists.

We present \algname,  an efficient algorithm to identify statistically significant patterns with rigorous guarantees on the probability of false discoveries. \algname\ builds on a novel general framework for mining significant patterns that captures some of the most commonly considered patterns, including itemsets, sequential patterns, and subgroups. \algname\ uses a small number of resampled datasets, obtained by assigning i.i.d. labels to each transaction, to rigorously bound the supremum deviation of a quality statistic measuring the significance of patterns. \algname\ builds on novel tight bounds on the supremum deviation that require to mine a small number of resampled datasets, while providing a high effectiveness in discovering significant patterns. As a test case, we consider significant subgroup mining, and our evaluation on several real datasets shows that \algname\ is effective in discovering significant subgroups, while requiring a small number of resampled datasets.
\end{abstract}

\maketitle

%%% do not modify the following VLDB block %%
%%% VLDB block start %%%
\pagestyle{\vldbpagestyle}
\begingroup\small\noindent\raggedright\textbf{PVLDB Reference Format:}\\
\vldbauthors. \vldbtitle. PVLDB, \vldbvolume(\vldbissue): \vldbpages, \vldbyear.\\
\href{https://doi.org/\vldbdoi}{doi:\vldbdoi}
\endgroup
\begingroup
\renewcommand\thefootnote{}\footnote{\noindent
This work is licensed under the Creative Commons BY-NC-ND 4.0 International License. Visit \url{https://creativecommons.org/licenses/by-nc-nd/4.0/} to view a copy of this license. For any use beyond those covered by this license, obtain permission by emailing \href{mailto:info@vldb.org}{info@vldb.org}. Copyright is held by the owner/author(s). Publication rights licensed to the VLDB Endowment. \\
\raggedright Proceedings of the VLDB Endowment, Vol. \vldbvolume, No. \vldbissue\ %
ISSN 2150-8097. \\
\href{https://doi.org/\vldbdoi}{doi:\vldbdoi} \\
}\addtocounter{footnote}{-1}\endgroup
%%% VLDB block end %%%

%%% do not modify the following VLDB block %%
%%% VLDB block start %%%
\ifdefempty{\vldbavailabilityurl}{}{
\vspace{.3cm}
\begingroup\small\noindent\raggedright\textbf{PVLDB Artifact Availability:}\\
The source code, data, and/or other artifacts have been made available at \url{https://github.com/VandinLab/FSR} 
\endgroup
}
%%% VLDB block end %%%

% !TEX root = fewshotsignificantpatternmining.tex

\section{Introduction}
\label{sec:intro}

\emph{Pattern mining} is a fundamental task in data mining that, in its most common definition~\cite{han2007frequent}, requires to find patterns that occur more often than a given frequency threshold in a database of transactions. Pattern mining finds applications in several areas such as market basket analysis~\cite{agrawal1993mining}, graph databases~\cite{aggarwal2010dense,al2009output,chen2009mining}, and the analysis of spatial and temporal data~\cite{cao2019efficient,ceccarello2022fast,ho2022efficient}.

\emph{Significant pattern mining}~\cite{PellegrinaRV19b,hamalainen2019tutorial} is an extension of pattern mining that, in its most general formulation, requires to discover \emph{patterns} with a \emph{significant association} with a binary label from  a dataset consisting of a collection of elements, where each element comprises the values of \emph{features}, which may be categorical, binary, or continuous, and the value of the binary label of interests, also called the \emph{target}. Such formulation captures various types of patterns, such as \emph{itemsets}, when all features are binary, or \emph{subgroups}~\cite{atzmueller2015subgroup}, with more general features. This task finds applications in a wide range of domains, such as market basket analysis, medicine, and molecular biology, where finding reliable associations is paramount.

Significance is usually assessed using the \emph{statistical hypothesis testing} framework. In such framework one defines a measure of \emph{quality} for patterns, and assumes the \emph{null hypothesis} of no association between a pattern and the target label. The \emph{significant patterns} are then the ones with quality that significantly deviate from the \emph{null distribution}, that is, the distribution of the quality under the null hypothesis. The deviation from the null distribution is usually measured by a $p$-value, that is the probability, under the null distribution, that the pattern has quality as large as the one observed in the dataset.

A major complication in the use of the statistical hypothesis testing framework in data mining is given by the huge number of candidate patterns that are considered, resulting in a \emph{multiple hypothesis testing} problem. With a huge number of candidate patterns, some non-significant patterns display a substantial deviation from the null distribution just by chance. Therefore, it is critical to account for testing multiple hypotheses when mining significant patterns, in order to avoid reporting a large number of spurious discoveries.  Several methods have been proposed to deal with multiple hypothesis testing~\cite{benjamini1995controlling,bonferroni1936teoria,westfall1993resampling}. While these methods provide various guarantees, the one most commonly considered is the Family-Wise Error Rate (FWER), which is the probability of reporting in output one or more false discoveries.

Current approaches for significant pattern mining with guarantees on the FWER belong to one of two classes. The first class is given by approaches that assess the significance of each single pattern (e.g., through a $p$-value or related quantities), and then perform an analytical correction to account for multiple hypothesis testing~\cite{webb2006discovering,webb2007discovering,webb2008layered}. A widely used procedure in this class is given by Bonferroni correction~\cite{benjamini1995controlling}, which computes a \emph{corrected $p$-value} by multiplying the $p$-value $p_{\patt}$ of a pattern $\patt$ by the number $h$ of candidate hypotheses. If patterns with corrected $p$-value $h \times p_{\patt}$ below a threshold $\alpha$ are flagged as significant, the FWER of the output is guaranteed to be $\le \alpha$. While these approaches and their improved versions~\cite{TeradaOHTS13,minato2014fast} are fairly efficient, thanks to the use of analytical derivations, they suffer from a low \emph{statistical power}, that is they often fail in identifying significant patterns, due to the multiple hypothesis corrections that must provide guarantees for every situation, independently of the observed data.

The second class is given by approaches that use the dataset to estimate the overall distribution of the patterns' statistics (or corresponding measures, such as the $p$-values) under the null hypothesis~\cite{llinares2015fast,pellegrina2020efficient,terada2015high}. The distribution is estimated using permuted versions of the data, obtained by keeping the features in each element of the dataset fixed and randomly permuting the target labels among elements. For example, the Westfall-Young (WY) method~\cite{westfall1993resampling} uses permuted datasets to estimate the quantiles of the \emph{smallest} $p$-value under the null hypotheses (or, equivalently, largest qualities or test statistics), and uses such estimate to derive a corrected threshold to flag patterns as significant. 
These approaches usually improve the statistical power for detecting significant patterns compared to approaches in the first class, but are often computationally demanding, since they need to mine a large number of permuted datasets to obtain good estimates of the overall distribution of the patterns' statistics. While this can be achieved fairly efficiently for simple patterns such as itemsets~\cite{llinares2015fast,pellegrina2020efficient}, the overall approach is impractical for more complex patterns such as subgroups, for which mining even a single dataset is extremely time-consuming.

An additional limitation of permutational approaches is that they focus on \emph{conditional testing}~\cite{fisher1922interpretation}. In conditional testing one assumes that the variables of interest, in our case the frequency of patterns and the fraction of elements with target label $1$, are the same in \emph{every} dataset from the null distribution. In contrast, in \emph{unconditional testing}~\cite{barnard1945new} one assumes that the variables of interest are the realization of corresponding random variables. Conditional testing and unconditional testing capture different assumptions regarding how data is generated and collected, that is, whether the variables of interest would be the same in different repetitions of the experiment. The choice between the two types of testing depends on the specific scenarios. However, in practice conditional tests are often used for computational reasons, since unconditional tests are much more demanding from the computational standpoint due to the need to account for uncertainties in the observed quantities. In fact, while for simple patterns such as itemsets~\cite{PellegrinaRV19a} significant pattern mining procedures with (partial) unconditional testing have been designed, for more complex patterns such as subgroups no unconditional testing procedure is available.

\subsection{Contributions}
\label{sec:contribs}
This work focuses on the efficient discovery of significant patterns. Our contributions are four-fold. Firstly, 
we propose the first general framework to discover significant patterns that can be used for both conditional and unconditional testing. Our framework is based on a natural definition of a pattern's quality that captures its significance, and applies to any type of pattern for which the \emph{appearance} of the pattern in an element of the dataset is well defined. Such patterns include widely used patterns such as itemsets, subgroups, sequential patterns, and subgraphs.  Second, we propose \algname, an algorithm for the efficient discovery of a rigorous approximation of significant patterns while controlling the \emph{Family-Wise Error Rate}, which is the probability of reporting even a single false discovery. \algname\ uses a \emph{few-shot resampling approach}, that is, it mines a small number of \emph{resampled} datasets, obtained by keeping the features of each element of the dataset fixed and assigning i.i.d. values to the target. Moreover, \algname\ can leverage any existing algorithm for mining the patterns of interest. Third, we provide novel tight theoretical results relating the distribution of patterns' qualities under the conditional and the unconditional distributions, and relating the estimated maximum deviation of patterns' qualities in resampled datasets with their (unknown) true quality in the corresponding distribution. These results are crucial in making our approach practical, since they imply that mining a small number of resampled datasets is enough to identify significant patterns. Fourth, we consider significant subgroups mining as a test case in our extensive empirical evaluation. We use our algorithm \algname\ to derive the first approach for mining significant subgroups with unconditional testing, and an approach for the conditional testing scenario which is much more efficient than permutation testing while maintaining an extremely high power. For the most challenging datasets, \algname\ is the only approach that allows to mine significant subgroups within reasonable time.
More importantly, we remark that the considered test case of subgroups is well representative of other settings. 
In fact,  we expect the sensible improvements obtained by \algname\ to transfer to other cases (i.e., other pattern types), given the generality of our framework and the characteristics of our permutational approaches, which are shared by all significant pattern tasks. 

Due to space constraints, we defer some of the proofs and additional results to the 
\ifextversion
Appendix. 
\else
Appendix in the online extended version~\cite{fsrextended}. 
\fi

\section{Related Works}
\label{sec:relworks}

Our work focuses on efficiently mining significant patterns.  We now discuss the previous works most related to our contributions. We refer the reader to recent comprehensive reviews and tutorials for an overview of commonly used techniques for mining significant patterns~\cite{hamalainen2019tutorial,PellegrinaRV19b}.

Several approaches~\cite{webb2006discovering,webb2007discovering,webb2008layered} have used general methods for multiple hypotheses testing, such as Bonferroni~\cite{bonferroni1936teoria} and Holm methods~\cite{holm1979simple}, within significant pattern mining. Such methods result in low \emph{statistical power}, since they correct the $p$-value, or a measure of the significance of a pattern, by the number of candidate hypotheses (i.e., the number of patterns), which is extremely large in data mining applications. LAMP~\cite{TeradaOHTS13,minato2014fast} is a recently introduced method that partially addresses such issue by  selecting a subset of patterns for testing, while discarding the patterns with no chance of being significant. Such approach leads to identifying an improved (i.e., smaller) correction factor, resulting in higher statistical power while controlling the FWER. 
However, since each selected pattern is still tested as a distinct hypothesis, LAMP still leads to reduced statistical power \cite{llinares2015fast,terada2015high}. 
Our algorithm \algname\ instead uses resampled datasets, taking into account dependencies among patterns, to achieve high statistical power.

Several permutation-based methods have been proposed to identify patterns significantly associated with a binary target label.  \cite{llinares2015fast,pellegrina2020efficient,terada2015high} use the Westfall-Young (WY) permutation test~\cite{westfall1993resampling}. The WY permutation procedure requires to estimate the $\delta$-quantile of the distribution of the minimum (overall all patterns) $p$-value  (or, equivalently, of the distribution of the maximum deviation, over all patterns, of the measure of significance). The use of such quantiles makes WY permutation testing more powerful than LAMP, but also more computationally expensive, since the estimation of such quantiles requires to mine a large number of permuted datasets. For these reasons, all such methods may need impractical resources.
 Our algorithm \algname\ instead requires to mine a small number of resampled datasets, leading to an efficient approach even for complex patterns such as subgroups. In addition, permutation-based methods only consider the conditional distribution, where patterns' frequencies and the fraction of elements with target label equal to $1$ are assumed fixed in all permuted datasets. Our approach instead works for both the conditional and the unconditional distribution.

The identification of significant patterns with unconditional testing has received scant attention. This is due, in part, to the higher computational cost required for assessing the significance of even a single hypothesis in the unconditional setting, for example using Barnard's test~\cite{barnard1945new}. As far as we know, the only approach for mining significant patterns in a \emph{partially} unconditional setting is \texttt{SPuManTE}~\cite{PellegrinaRV19a}. In the partially unconditional setting considered by \texttt{SPuManTE} only the frequencies of the patterns are not fixed, while the target is fixed by design. In contrast our algorithm \algname\ considers a \emph{fully} unconditional setting, where the fraction of elements with target label equal to $1$ is not fixed either. Moreover, \texttt{SPuManTE} leverages specialized techniques for mining significant itemsets that cannot be easily generalized to other pattern types, while our approach applies directly to several pattern languages, including itemsets, sequential patterns, and subgroups.

Other works, orthogonal to ours, consider improving the diversity or  limiting redundancy of the output~\cite{van2012diverse,kalofolias2017efficiently,dalleiger2022discovering}.

\section{Preliminaries}
\label{sec:prelims}
We consider a \emph{dataset}  $\dataset$ as a collection of $m$ transactions $\dataset = \brpars{ (s_1 , \ell_1) , \dots , (s_m , \ell_m) }$, where 
each transaction $(s , \ell) \in \dataset$ is composed by a set $s$ of $d$ \emph{features}, either binary, categorical, or continuous, and a binary \emph{target} variable $\ell \in \{0 , 1\}$. 
More generally, we assume that $s$ belongs to a domain $\mathcal{X}$. 
By defining the multisets  $\A = \brpars{ s_1 , \dots , s_m }$ and $\T = \brpars{ \ell_1 , \dots , \ell_m }$, a dataset $\dataset$ is also represented by the pair $\dataset = (\A , \T)$. 
We assume to have a \emph{language} $\lang$ containing the \emph{patterns} of potential interest. This scenario captures widely used pattern mining tasks, such as: itemset mining, where all features correspond to (binary) \emph{items} and the language $\lang$ corresponds to the set of all \emph{itemsets}, i.e., (non-empty) subsets of items; subgroup mining, where the language $\lang$ contains all \emph{subgroups}, i.e., the sets of conjunctions with at most $z$ conditions over features from $\A$, where each condition is either an equality, on a categorical feature, or an interval, on a continuous feature.

Given a transaction $(s,\ell)$, we use the notation $\patt \in s$ to say that the set $s$ of features  \emph{supports} pattern $\patt$, where the meaning of $\patt \in s$ depends on the specific data mining task. For example, for itemset mining, $\patt \in s$ means that the pattern $\patt$ is contained in the set $s$, while for subgroup mining it means that the conditions defined by $\patt$ are all satisfied by the features of $s$. We define the set $\covp$ of transactions in the dataset $\dataset$ that \emph{support} a pattern $\patt$  as
$\covp = \brpars{ (s , \ell) \in \dataset : \patt \in s }$. The \emph{frequency} $\freqp$ of a pattern $\patt$ in the dataset $\dataset$ is the fraction of transactions of $\dataset$ that support $\patt$: 
$\freqp = \frac{1}{m} \sum_{i=1}^m \ind{ \patt \in s_i } = \frac{|\covp|}{m} $.

Finally,  let $\mu(\dataset)$  denote the average value of the target $\ell$ for transactions in the dataset $\dataset$:
$\mu(\dataset) = \frac{1}{|\dataset|} \sum_{(s , \ell) \in \dataset} \ell $.

\subsection{Significant Patterns}
\label{sec:sig_sub}

Our goal is to find \emph{significant patterns}, where a pattern $\patt$ is \emph{significant} if the presence of $\patt$ in a transaction is associated with the target variable of the transaction being $1$. In particular, we consider the significance of a pattern in the framework of \emph{statistical} significance, assuming that the transactions $\brpars{ (s_1 , \ell_1) , \dots , (s_m , \ell_m) }$ constituting the dataset $\dataset$ are \emph{samples} from an unknown distribution $\probdist$. A pattern $\patt$ is associated with the target variable if the probability of the event ``$\patt \in s$ \emph{and} $\ell=1$'' is higher than the corresponding probability when the event ``$\patt \in s$'' and the event ``$\ell=1$'' are independent. Formally, this corresponds to consider the following \emph{null hypothesis} for a pattern $\patt$:
\begin{equation*}
\Pr_{(s,\ell) \sim \probdist} \pars{ \patt \in s \wedge \ell = 1 } = \Pr_{(s,\ell) \sim \probdist} \pars{ \patt \in s } \Pr_{(s,\ell) \sim \probdist} \pars{ \ell = 1 } .
\end{equation*}
Since we are interested in patterns with a significant association with the target variable being $1$, we are interested in finding patterns for which the following \emph{alternative hypothesis} holds:
\begin{equation*}
\Pr_{(s,\ell) \sim \probdist} \pars{ \patt \in s \wedge \ell = 1 } > \Pr_{(s,\ell) \sim \probdist} \pars{ \patt \in s } \Pr_{(s,\ell) \sim \probdist} \pars{ \ell = 1 }.
\end{equation*}

To this end, we define the \emph{quality}\footnote{The quality $\tqual$ is often called \emph{leverage} for general patterns~\cite{hamalainen2019tutorial}, but we use the term \emph{quality} given its relation to the $1$-quality commonly employed to find interesting subgroups.} of a pattern $\patt$ as 
\begin{equation*}
%\label{eqn:true_qual}
\tqual =  \Pr_{(s,\ell) \sim \probdist} \pars{ \patt \in s \wedge \ell = 1 } -  \Pr_{(s,\ell) \sim \probdist} \pars{ \patt \in s } \Pr_{(s,\ell) \sim \probdist} \pars{ \ell = 1 }.
\end{equation*}
Note that the alternative hypothesis is equivalent to 
$ \tqual > 0$, and the null hypothesis is equivalent to $\tqual=0$.  Therefore, finding significant patterns is equivalent to finding patterns with quality $ \tqual > 0$. 

For example, consider the study of the association between the characteristics of the users of an online social network and users' interests in a given topic. In this case, each user is a transaction, users' characteristics are the features, and being interested or not in the topic defines the target variable. Significant patterns in this example are associations between users' characteristics and users' interests that are significantly stronger than  expected under the null hypothesis of independence between characteristics and interests. For example, if the probability that a user has a given binary feature $f$ is $0.3$, and the probability that a user is interested in the topic is $0.5$, then under the null hypothesis of independence we have that the probability that a user has feature $f$ \emph{and} is interested in the topic is $0.15$. Therefore, the feature $f$ is \emph{significantly associated} with the topic of interest if the \emph{actual} probability that a user has feature $f$ \emph{and} is interested in the topic is $> 0.15$. 

\textit{Task Definition.} Given a dataset $\dataset$ and the corresponding pattern language $\lang$, our goal is to identify significant patterns, that is,  patterns $\patt \in \lang$ with quality $ \tqual > 0$. Since the distribution $ \probdist$ is unknown and we have access only to the dataset $\dataset$ comprising transactions sampled from $\probdist$, we cannot hope to discover \emph{all} significant patterns without errors. As a consequence, we must resort to approximations. 
In particular, define the subset $\lang^\star$ of the language $\lang$ of patterns for which the null hypothesis holds:
\begin{align*}
\lang^\star &= \brpars{ \patt \in \lang : \tqual = 0 } .
\end{align*}
Our task is then to produce a  subset $O \subseteq \lang$ of all patterns in the language with \emph{Family-Wise Error Rate} (FWER) below a user-defined threshold $\delta$, such that the probability that $O$ contains at least one element from $\lang^\star$ is at most $\delta$:
\begin{equation}
\Pr_{\dataset}\pars{O \cap \lang^\star \neq \emptyset} \le \delta. \label{eq:fwerdef}
\end{equation}
Note that \eqref{eq:fwerdef} implies that $O$ is \emph{false discovery free} approximation, which have previously been considered for other pattern mining tasks~\cite{riondato2020misosoup,santoro2020mining}.

Since $\tqual$ depends on the unknown distribution $\probdist$, we define a statistic $\qualp$ that corresponds to an estimate of $\tqual$ from data and that we will use in our algorithm \algname\ to identify significant patterns. 
For each pattern $\patt$, we define the functions $f_\patt$ and $g_\patt$ as follows: 
each $f_\patt : \X \rightarrow \{ 0 , 1 \}$ is defined as $f_\patt(s) = \ind{ \patt \in s }$, such that $f_\patt(s) = 1$ if  $\patt \in s$, and $f_\patt(s) =0$ otherwise; 
each $g_\patt : \X \times \{0,1\} \rightarrow [-\mu(\dataset) , 1 - \mu(\dataset) ]$ is defined as
$g_\patt(s,\ell) = f_\patt(s) ( \ell - \mu(\dataset) )$.  Then, the estimate $\qualp$ of $\tqual$ for pattern $\patt$ from the dataset $\dataset$ is
\begin{align*}
\qualp = \frac{1}{m} \sum_{i=1}^m g_\patt(s_i,\ell_i) .
\end{align*}
Interestingly, $\qualp$ corresponds to the $1$-quality commonly used in subgroup mining \cite{atzmueller2015subgroup} 
\ifextversion
(see \Cref{sec:qualitymeasures}), 
\else
(see Appendix A.1), 
\fi
even if the way it is used in our algorithm \algname\ (see Section~\ref{sec:algo}) is different from its usual application, due to our focus on statistically significant patterns.

Intuitively, we expect significant patterns to have a sufficiently high empirical quality $\qualp$ measured on the data $\dataset$, since $\qualp$ estimates the (true) quality $\tqual$ of $\patt$. 
To take into account the multiple hypothesis testing issue described above, we need to identify a \emph{threshold} $\varepsilon$ such that reporting in output all patterns with quality $\geq \varepsilon$ has bounded FWER. 
Note that $\varepsilon$ should be as small as possible in order to have high statistical power (i.e., to report the largest set of results with guarantees on false discoveries). 
A critical quantity we study to address this issue is the \emph{supremum deviation} of the empirical qualities of non-significant patterns, defined as
\begin{align}
\sup_{\patt \in \lang^\star} \brpars{ \qualp } . \label{eq:supdev}
\end{align}
We address this challenge with \algname: we derive novel analytical bounds on the concentration of the supremum deviation \eqref{eq:supdev}, and build on these tools an efficient few-shot resampling algorithm to sharply estimate it. \algname\ achieves high statistical power while scaling to large datasets and complex languages.
Our approach applies to both conditional and unconditional testing, both of great interest in data mining.  

\subsection{Conditional and Unconditional Testing}
\label{sec:cond_vs_uncond}

When assessing the statistical significance of a pattern $\patt$, one has to choose between \emph{conditional} and \emph{unconditional} tests. A conditional test
assumes that the data-generating process represented by the unknown distribution $\probdist$ only produces datasets with $m$ transactions in which both the frequency $\freqp$ of pattern $\patt$ and the fraction $\mu(\dataset)$ of transactions with target value $1$ are the same as in the observed dataset; that is, it \emph{conditions} on the observed variables of interest. In contrast, \emph{unconditional tests} assume that $\freqp$ and $\mu(\dataset)$ are the realization of corresponding random variables. Unconditional tests therefore assess the association between a pattern and class labels considering also scenarios (i.e., datasets) where all frequencies of the patterns and/or the average target value may differ from what is observed in the data.
Equivalently, conditional tests and unconditional tests are based on different assumptions regarding how data is generated and collected, namely, whether the variables of interest 
would be the same in a different repetition of the experiment (conditional tests) or not (unconditional tests).

Consider for example the scenario of online social networks described in \Cref{sec:sig_sub}, and assume for simplicity that we are interested in associations between the single features and the target. If the data is collected so that the total number of transactions for each value of the target is fixed and the fraction of transactions with given values of the features is fixed as well, then conditional testing is more appropriate. If instead one collects the data without constraints on the features/target values (e.g., simply collecting as many transactions as possible), then unconditional testing is more appropriate.

Conditional tests and unconditional tests are therefore both valid and of interest for data mining applications, and the choice between the two classes depends on the specific scenario, even if in practice conditional tests are usually preferred for computational reasons, since unconditional tests need to take into account more uncertainties in the observed quantities.
In what follows, we introduce a general algorithm to identify significant patterns for both conditional testing and unconditional testing. Our algorithm is extremely efficient in both cases due to the use of few-shot resampling.

\section{\algname\ Algorithm}
\label{sec:algo}

We now describe our algorithm~\algname\ (Algorithm~\ref{algo:main}) to find significant patterns for both conditional testing and unconditional testing. We first present the general approach, that is common to both testing scenarios, and then present the details for conditional testing in Section~\ref{sec:algo_conditional}, and the details for unconditional testing in Section~\ref{sec:algo_unconditional}.

{\sloppy
In a nutshell,  \algname\ identifies significant patterns from a dataset $\dataset = \brpars{ (s_1 , \ell_{1}) , \dots , (s_m , \ell_{m}) }$ using a few-shot resampling approach to compute rigorous probabilistic bounds to the deviation of the estimated qualities $\qualp$ of the patterns, under the null hypothesis of no association between the patterns and the target label. 
It then reports in output all patterns with estimated quality $\qualp$ above such deviation.
}

\algname\ considers a collection  $\resset = \{\dataset^\star_1 , \dots, \dataset^\star_c\}$  of $c \geq 1$ i.i.d. \emph{resampled datasets}, each obtained by \emph{resampling} the target labels of $\dataset$ while maintaining the same features of the transactions of $\dataset$. 
Each resampled dataset is $\dataset^\star_j = \brpars{ (s_1 , \xi_{1,j}) , \dots , (s_m , \xi_{m,j}) }$, where
$\xi_{i,j}$ are i.i.d. random variables with 
\begin{align*}
\xi_{i,j} \sim Bern(p) 
, \forall i \in [1,m] , \forall j \in [1,c] ,
\end{align*}
and $Bern(p)$ is the Bernoulli random variable with parameter $p$ (i.e., it is $1$ with probability $p$, and  $0$ otherwise).

We now describe the general approach followed by \algname\ (Algorithm~\ref{algo:main}). 
\algname\ starts by computing an upper bound $\varepsilon_T$ to the deviation between the average target value $\mu(\dataset)$  observed in $\dataset$ and its expected value $\mu = \E_{\dataset} \sqpars{ \mu(\dataset) }$ under the null hypothesis (line~\ref{line1}). 
Note that 
$\mu$ is the probability that a sample from the unknown distribution $\probdist$ has target label $\ell$ equal to $1$, that is $\mu = \Pr_{(s,\ell)} \pars{ \ell = 1 }$. 
The  computation of $\varepsilon_T$ is performed by the procedure \texttt{boundTarget} and it depends on whether one is interested in conditional testing or in unconditional testing.
The  details of \texttt{boundTarget} for the two settings are described in Section~\ref{sec:algo_conditional} and in Section~\ref{sec:algo_unconditional}, respectively.
Then \algname\ uses $\varepsilon_T$ to obtain the upper bound $\hat{\mu}$ (line~\ref{line2}) and lower bound $\check{\mu}$ (line~\ref{line3}) to $\mu$ 
(we trivially assume $0 \leq \check{\mu} \leq \hat{\mu} \leq 1$). 
It then uses the procedure \texttt{resampleTarget} to generate $c$ resampled datasets $\resset$ (line~\ref{line4}) assigning to each transaction in the datasets of $\resset$ the target label $1$ with probability $p = \hat{\mu}$  (i.e, the upper bound to $\mu$).  
Note that the \texttt{resampleTarget} procedure is the same for both conditional and unconditional testing.
The algorithm then computes, from each of the $c$ resampled datasets of $\resset$, an estimate of the maximum deviation of the empirical quality $\qualp$ for non-significant patterns. 
This is achieved by computing, for every resampled dataset $\dataset^\star_j$, the quantity 
$\sup_{\patt \in \lang} \hsqual{\patt}{\dataset^\star_j}{\check{\mu}}$, 
where 
$\hsqual{\patt}{\dataset^\star_j}{\check{\mu}} = \frac{1}{m} \sum_{i=1}^m   f_\patt(s_i) ( \xi_{i,j} - \check{\mu})$. 
The value $\sup_{\patt \in \lang} \hsqual{\patt}{\dataset^\star_j}{\check{\mu}}$ can be interpreted as an empirical estimate of the maximum quality of non-significant patterns, measured from datasets sampled from the null distribution. 
Note that we use $\hsqual{\patt}{\dataset^\star_j}{\check{\mu}}$ as $\check{\mu}$ is a lower bound to $\mu$; 
consequently, $\hsqual{\patt}{\dataset^\star_j}{\check{\mu}}$ provides a proper \emph{upper bound} to such maximum deviation. 
Then, note that it is necessary to consider the supremum over the language $\lang$, since the set of true null hypothesis $\lang^\star$ is unknown. 
We remark that the computation of $\sup_{\patt \in \lang} \hsqual{\patt}{\dataset^\star_j}{\check{\mu}}$ can be performed with fast pattern enumeration strategies, similar to the ones leveraged by previous methods for frequent and significant pattern mining; 
in fact, \algname\ can be combined with any efficient exploration procedure, such as the ones that explore the search space of the pattern language of interest using pruning bounds, for instance depth-first~\cite{minato2014fast,llinares2015fast,terada2015high} or best-first searches~\cite{pietracaprina2007efficient,pellegrina2020efficient,pellegrina2022mcrapper}.  
The algorithm then stores the empirical deviations $\sup_{\patt \in \lang} \hsqual{\patt}{\dataset^\star_j}{\check{\mu}}$ in the variables $d_j$. 
Then, the average maximum deviation $\tilde{d}(\resset , \check{\mu})$ over the $c$ resampled datasets is computed (line~\ref{line6}) as the mean of the values $\{ d_j , j \in [1,c] \}$, and it is then used to compute a rigorous upper bound $\varepsilon$ to the supremum deviation $\sup_{\patt \in \lang^\star} \brpars{ \qualp  }$ (Eq.~\eqref{eq:supdev}) under the null hypothesis (line~\ref{line7}) through the procedure 
\texttt{boundStatistic}. 
The implementation of this procedure, i.e., the returned value of $\varepsilon$, depends on whether conditional testing or unconditional testing is considered, and it is described in Section~\ref{sec:algo_conditional} and Section~\ref{sec:algo_unconditional}, respectively. 
In both cases, we use advanced concentration bounds~\cite{boucheron2013concentration,mcdiarmid1989method} that allow us to obtain small values of $\varepsilon$ with a small number $c$ of resampled datasets. 
Finally, the set of patterns with estimated quality $\qualp$ greater than $\varepsilon + \varepsilon_T \freqp$ is reported in output (lines~\ref{line8}-\ref{line9}).

\begin{algorithm}[htb]
\SetNoFillComment%
%\DontPrintSemicolon% Old LaTeX installations require \dontprintsemicolon
  \KwIn{Pattern language $\lang$; dataset $\dataset$ of $m$ transactions; $c \geq1$; $\delta \in (0,1)$.}
  \KwOut{\mbox{Set $O \subseteq \lang$ of significant patterns with FWER $\le \delta$.}} 
  $\varepsilon_T \gets$ \texttt{boundTarget}($\mu(\dataset)$, $m$, $\delta$)\label{line1}\;
  $\hat{\mu} \gets  \mu(\dataset) + \varepsilon_T$\label{line2}\;
  $\check{\mu} \gets  \mu(\dataset) - \varepsilon_T$\label{line3}\;
  $\resset \gets $ \texttt{resampleTarget($\dataset , c , \hat{\mu}$)}\label{line4}\;
  \lForAll{$j \in [1 , c]$}{$d_j \gets \sup_{\patt \in \lang} \bigl\{ \hsqual{\patt}{\dataset^\star_j}{\check{\mu}} \bigr\}$ \label{line5}} 
  $\tilde{d}(\resset , \check{\mu}) \gets \frac{1}{c} \sum_{j=1}^{c} d_j $\label{line6}\;
 $\varepsilon \gets$ \texttt{boundStatistic}($\dataset$, $\lang$, $\tilde{d}(\resset, \check{\mu})$, $\delta$)\label{line7}\;
  $O \gets \brpars{ \patt \in \lang : \qualp \geq \varepsilon + \varepsilon_T \freqp }$\label{line8}\;
  \textbf{return} $O$\label{line9}\;
  \caption{\algname}\label{algo:main}
\end{algorithm}

Note that while the computation of $\varepsilon_T$ and $\varepsilon$ depends on whether conditional testing or unconditional testing is considered, the overall approach followed by \algname\ is the same in both cases. In particular, for both cases \algname\ relies on the resampled datasets $\resset$ to estimate the maximum deviation, over all patterns $\patt \in \lang$, of the estimate $\qualp$ from the (unknown) value $\tqual$ under the null hypothesis. Moreover, the exact same procedure is used to generate the resampled datasets. 
Note that our approach is similar to permutation approaches for significant pattern mining, which use permuted datasets to estimate the significance of patterns, but with two crucial differences. 
First, our datasets are obtained by resampling the target values, and not by permuting them, which allows us to obtain rigorous bounds for both conditional and unconditional testing, while permutation approaches can be used for conditional testing only. 
Second, since our analysis depends on the \emph{expectation} of the maximum deviation, we can employ advanced bounds on the concentration of the expected value of functions of independent random variables; this allows us to use a small number $c$ of resampled datasets, as shown by our analysis and experimental evaluation. This is in contrast with permutation approaches (e.g., the ones based on WY permutation testing~\cite{llinares2015fast,pellegrina2020efficient,terada2015high}) that instead estimate the quantiles of the distribution of the maximum deviation using a large number of permutations 
\ifextversion
(see also Section~\ref{sec:appendix_comp_perm} in Appendix for a more detailed comparison).
\else
(see also Section~A.2 in Appendix for a more detailed comparison).
\fi

\subsection{\algname\ for Conditional Testing}
\label{sec:algo_conditional}

\sloppy{
We now describe the details of procedures 
\texttt{boundTarget} 
and 
\texttt{boundStatistic} 
for the version of \algname\ that uses conditional testing, which we refer to as \algnamec. 
As a reminder, a conditional test for our problem assumes that the average target value $\mu$ and patterns frequencies $\freqp$ are fixed, for all $\patt \in \lang$, to the values observed in the dataset $\dataset$.
}

For 
\texttt{boundTarget}, 
since the average target value $\mu$ is fixed to the value $\mu(\dataset)$ observed in the dataset $\dataset$, the bound $\varepsilon_T$ on its deviation from the expectation is $0$, that is, 
\texttt{boundTarget} 
simply returns $0$. Note that this implies that the output of \algnamec\ consists of all patterns in $\lang$ with $ \qualp \geq \varepsilon$ (line~\ref{line8}).

For 
\texttt{boundStatistic}, 
we now show how to compute a rigorous probabilistic bound $\varepsilon$ to the supremum deviation $\sup_{\patt \in \lang^\star} \brpars{ \qualp  }$ (Eq.~\eqref{eq:supdev}) under the null hypothesis. 
The bound is obtained by computing the average maximum deviation $\tilde{d}(\resset, \check{\mu})$ over $c$ resampled datasets, and then applying advanced concentration results~\cite{boucheron2013concentration}. Note that since $\varepsilon_T = 0$,  in this case $\tilde{d}(\resset, \check{\mu})= \tilde{d}(\resset, \mu(\dataset))$.

Since \algnamec\ is based on resampled datasets, where each target label is sampled independently, our estimate  $\tilde{d}(\resset, \mu(\dataset))$  of the expected maximum deviation is not based on the conditional distribution assumed by conditional tests, where the fraction of target labels equal to $1$ is exactly $\mu(\dataset) $ in \emph{every} dataset (while in our resampled datasets such fraction may vary, and it is equal to $\mu(\dataset)$ only in expectation). 
We therefore need to relate the supremum deviation of the observed quality of patterns on resampled datasets with the one observed in datasets sampled from the conditional distribution. 
Interestingly, we show that the resampling and conditional distributions are \emph{closely related}, in the sense that high probability bounds for the former also apply to the latter. 

We first prove a general result (\Cref{thm:expectupperboundresamples}) relating the expectation of monotone functions of permutations of binary vectors, corresponding to the conditional distribution, with the expectation taken w.r.t. to independent resamples, corresponding to resampled datasets. For an integer $k$ with $0 \leq k \leq m$, define the set $B(k)$ of binary vectors with $k$ entries equal to one as
\begin{align*}
B(k) = \Bigl\{ \mathbf{v} \in \{0 , 1\}^m , \sum_{i=1}^m \mathbf{v}_i = k \Bigr\} ,
\end{align*}
and let $U(B(k))$ be the uniform distribution over the set $B(k)$.
Equivalently, $U(B(k))$ corresponds to the set of uniform permutations of a binary vector with $k$ ones.
Then, define $I(p)$ as a probability distribution over $\{ 0,1 \}^m$, such that each entry $\mathbf{v}_i$ of a random vector $\mathbf{v}$ taken from $I(p)$ is an i.i.d. Bernoulli r.v. with $\Pr(\mathbf{v}_i = 1) = p = 1- \Pr(\mathbf{v}_i = 0)$, for some $p \in [0,1]$.
\begin{lemma}
\label{thm:expectupperboundresamples}
Let $f : \{0 , 1\}^m \rightarrow \R$ be a nonnegative function such that $\E_{\mathbf{v} \sim U(B(k))} \sqpars{ f(\mathbf{v}) }$ is either monotonically increasing or monotonically decreasing in $k$. 
It holds
\begin{align*}
\E_{\mathbf{v} \sim U(B(k))} \sqpars{ f(\mathbf{v}) } \leq 2 \E_{\mathbf{v} \sim I(k/m)} \sqpars{ f(\mathbf{v}) } .
\end{align*}
\end{lemma}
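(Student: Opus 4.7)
The plan is to condition on the Hamming weight of the random vector drawn from $I(k/m)$ and exploit the monotonicity assumption together with a classical fact about the median of a binomial distribution. Concretely, for $\mathbf{v} \sim I(k/m)$ the total $\sum_i \mathbf{v}_i$ has distribution $\text{Bin}(m, k/m)$, and conditional on $\sum_i \mathbf{v}_i = j$ the vector is uniform on $B(j)$. Thus, writing $g(j) = \E_{\mathbf{v} \sim U(B(j))}[f(\mathbf{v})]$, we get the decomposition
\begin{equation*}
\E_{\mathbf{v} \sim I(k/m)}[f(\mathbf{v})] = \sum_{j=0}^{m} \Pr[\text{Bin}(m, k/m) = j]\, g(j).
\end{equation*}

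Next I would use the monotonicity hypothesis. Suppose first that $g$ is monotonically increasing. Restricting the sum to $j \geq k$ and lower-bounding each $g(j)$ by $g(k)$ (valid because $f \geq 0$ and $g$ is increasing) gives
\begin{equation*}
\E_{\mathbf{v} \sim I(k/m)}[f(\mathbf{v})] \;\geq\; g(k)\cdot \Pr[\text{Bin}(m, k/m) \geq k].
\end{equation*}
The symmetric argument, restricting to $j \leq k$, handles the monotonically decreasing case and yields a bound involving $\Pr[\text{Bin}(m, k/m) \leq k]$.

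The key step is therefore to show that both $\Pr[\text{Bin}(m, k/m) \geq k] \geq 1/2$ and $\Pr[\text{Bin}(m, k/m) \leq k] \geq 1/2$. This is the classical fact that when the mean $np$ of a binomial random variable is an integer, the median equals the mean; a clean reference is Kaas and Buhrman (1980), though one can also give a short direct combinatorial argument by comparing $\Pr[\text{Bin}(m,k/m) = j]$ for $j < k$ and $j > k$ through the ratio of successive terms. Once this is in hand, dividing through yields $g(k) \leq 2\, \E_{\mathbf{v} \sim I(k/m)}[f(\mathbf{v})]$ in both monotonicity cases, which is exactly the claim. The boundary cases $k = 0$ and $k = m$ are trivial since then $I(k/m)$ is supported on the single vector in $B(k)$, giving equality without the factor $2$.

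The main obstacle is justifying the median-equals-mean fact for integer mean, and being careful that the ``nonnegative $f$'' hypothesis is used precisely to drop the contributions from the tail on the ``wrong side'' of $k$. Apart from this, the argument is essentially a one-line conditioning plus an elementary tail inequality for the binomial.
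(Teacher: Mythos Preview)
Your proposal is correct and follows essentially the same approach as the paper: condition on the Hamming weight of $\mathbf{v} \sim I(k/m)$, use monotonicity of $g(j)$ to restrict and lower-bound the sum, and invoke the median-equals-mean fact for a binomial with integer mean (the paper cites Jogdeo--Samuels rather than Kaas--Buhrman, but it is the same result). The only cosmetic differences are that the paper phrases the conditioning as $\E_{\mathbf{v}\sim I(k/m)}[f(\mathbf{v})\mid \sum_i\mathbf{v}_i=j]$ rather than introducing $g(j)$, and it does not separately comment on the boundary cases $k\in\{0,m\}$.
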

To prove \Cref{thm:expectupperboundresamples}, we need the following technical result,
providing bounds to the probability that a Binomial random variable exceeds its expectation \cite{jogdeo1968monotone}.
\begin{lemma}
\label{thm:binomiallowerbound}
Let $\mu m$ be an integer. Then it holds
\begin{align*}
\Pr_{\mathbf{v} \sim I(\mu)} \pars{ \sum_{i=1}^m \mathbf{v}_i > \mu m } < \frac{1}{2} < \Pr_{\mathbf{v} \sim I(\mu)} \pars{ \sum_{i=1}^m \mathbf{v}_i \geq \mu m } .
\end{align*}
\end{lemma}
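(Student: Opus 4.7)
The statement is equivalent to showing that $k := \mu m$ is a \emph{strict median} of $X := \sum_{i=1}^m \mathbf{v}_i \sim \mathrm{Bin}(m, \mu)$: i.e., the CDF of $X$ strictly crosses $1/2$ at $k$. This is the classical Jogdeo--Samuels theorem on the median of a binomial with integer mean, and my plan is as follows.

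First, I would dispense with the boundary cases $\mu \in \{0, 1\}$ (where $X$ is degenerate) and use the symmetry $X \mapsto m - X$, $\mu \mapsto 1-\mu$ to reduce to $0 < \mu \leq 1/2$. The two target inequalities are then $\Pr(X > k) < 1/2$ and $\Pr(X \geq k) > 1/2$, which split into separately bounding the right-tail mass above $k$ and the left-tail mass strictly below $k$.

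The core technical step is a pairing comparison of PMF values symmetric about $k$. Substituting $\mu = k/m$,
\begin{align*}
\frac{\Pr(X = k+j)}{\Pr(X = k-j)} = \frac{\binom{m}{k+j}}{\binom{m}{k-j}} \left( \frac{k}{m-k} \right)^{2j},
\end{align*}
which I would expand as a product of $2j$ factors $\prod_{i=1}^{2j}(m-k-j+i)/(k-j+i)$ and regroup symmetrically to verify that the full ratio is at most $1$ for every $1 \leq j \leq k$ whenever $\mu \leq 1/2$. Summing over $j$ then yields $\sum_{j=1}^{k} \Pr(X = k+j) \leq \Pr(X < k)$. Combined with $\Pr(X = k) > 0$ (the mode of $X$ is at $k$ since $(m+1)\mu = k+\mu$ with $\mu \in (0,1)$), this gives $\Pr(X \leq k) > 1/2$, the first target inequality.

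For the second inequality $\Pr(X \geq k) > 1/2$, I would use the unimodality of $X$ and the successive ratios $\Pr(X=j)/\Pr(X=j-1) = (m-j+1)\mu/(j(1-\mu))$ to argue the left-tail mass $\Pr(X < k)$ decays geometrically off the mode, yielding $\Pr(X < k) < \Pr(X \geq k)$. The main obstacle is the combinatorial verification of the key pairing inequality, and in particular handling the residual right-tail atoms $\Pr(X = k+j)$ for $k < j \leq m-k$ which are unpaired when $\mu < 1/2$ strictly: a naive pairing leaves these unaccounted for, and controlling them via the mode-decay ratios above is the delicate technical step underlying the classical proof.
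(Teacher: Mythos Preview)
The paper does not prove this lemma at all: it simply states the result and attributes it to Jogdeo and Samuels (1968) via the citation \texttt{\textbackslash cite\{jogdeo1968monotone\}}. There is nothing further to compare against on the paper's side.

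Your proposal, by contrast, is an attempt to reprove the Jogdeo--Samuels theorem from scratch. The overall plan (symmetry reduction to $\mu\le 1/2$, pairing $\Pr(X=k+j)$ against $\Pr(X=k-j)$, then controlling the unpaired far-right atoms) is indeed the classical route, and you correctly identify the genuine difficulty: when $\mu<1/2$ the range $k<j\le m-k$ is nonempty and those atoms are not covered by the pairing. Your suggestion to absorb them using the mode $\Pr(X=k)$ amounts to the inequality $\Pr(X=k)>\Pr(X>2k)$, which is true but requires its own argument (e.g.\ a tail-vs-mode comparison via the geometric decay of the successive ratios you mention). Likewise, your sketch for the second inequality $\Pr(X\ge k)>1/2$ is quite vague---``left-tail mass decays geometrically off the mode'' is a heuristic, not a proof, and in fact the sharper of the two inequalities in the original paper is handled by a rather intricate induction. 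So your outline is pointed in the right direction but is not yet a proof; for the purposes of this paper, citing Jogdeo--Samuels is the appropriate move.
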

We now prove \Cref{thm:expectupperboundresamples}. 
\begin{proof}[Proof of \Cref{thm:expectupperboundresamples}]
We prove the result assuming that $\E_{\mathbf{v} \sim U(B(k))} \sqpars{ f(\mathbf{v}) }$ is monotonically increasing in $k$, as the other case is analogous. 
First, we note that 
\begin{align*}
\E_{\mathbf{v} \sim U(B(k))} \sqpars{ f(\mathbf{v}) } = \E_{\mathbf{v} \sim I(k/m)} \sqpars{ f(\mathbf{v}) \mid \sum_{i=1}^m \mathbf{v}_i = k } .
\end{align*}
Therefore, we have
\begin{align*}
& \E_{\mathbf{v} \sim I(k/m)} \sqpars{ f(\mathbf{v}) } \\
&= \sum_{j=0}^m \E_{\mathbf{v} \sim I(k/m)} \sqpars{ f(\mathbf{v}) \mid \sum_{i=1}^m \mathbf{v}_i = j } \Pr_{\mathbf{v} \sim I(k/m)} \pars{ \sum_{i=1}^m \mathbf{v}_i = j } \\
& \geq  \sum_{j=k}^m \E_{\mathbf{v} \sim I(k/m)} \sqpars{ f(\mathbf{v}) \mid \sum_{i=1}^m \mathbf{v}_i = j } \Pr_{\mathbf{v} \sim I(k/m)} \pars{ \sum_{i=1}^m \mathbf{v}_i = j }\\
& \geq \E_{\mathbf{v} \sim I(k/m)} \sqpars{ f(\mathbf{v}) \mid \sum_{i=1}^m \mathbf{v}_i = k } \sum_{j=k}^m  \Pr_{\mathbf{v} \sim I(k/m)} \pars{ \sum_{i=1}^m \mathbf{v}_i = j } \\
& = \E_{\mathbf{v} \sim U(B(k))} \sqpars{ f(\mathbf{v}) }  \Pr_{\mathbf{v} \sim I(k/m)} \pars{ \sum_{i=1}^m \mathbf{v}_i \geq k } \\
& \geq \E_{\mathbf{v} \sim U(B(k))} \sqpars{ f(\mathbf{v}) } \frac{1}{2} ,
\end{align*}
where the last inequality follows from \Cref{thm:binomiallowerbound}.
\end{proof}
We make use of \Cref{thm:expectupperboundresamples} to prove the following result.
\begin{theorem}
\label{thm:probupperboundcond}
Define the constant $\bar{\mu} = \mu(\dataset)$ and let $k = \bar{\mu} m$. 
For any $z \geq 0$, it holds
\begin{align*}
&\Pr_{\mathbf{v} \sim U(B(k))} \pars{ \sup_{\patt \in \lang^\star} \frac{1}{m} \sum_{i=1}^m f_{\patt}(s_i)( \mathbf{v}_i - \bar{\mu} ) \geq z } \\ 
&\leq 2 \Pr_{\mathbf{v} \sim I(\bar{\mu})} \pars{ \sup_{\patt \in \lang^\star} \frac{1}{m} \sum_{i=1}^m f_{\patt}(s_i)( \mathbf{v}_i - \bar{\mu} ) \geq z }. 
\end{align*}
\end{theorem}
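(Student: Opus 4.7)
The plan is to invoke Lemma \ref{thm:expectupperboundresamples} with a carefully chosen indicator function. Define
\begin{equation*}
f(\mathbf{v}) = \ind{ \sup_{\patt \in \lang^\star} \frac{1}{m} \sum_{i=1}^m f_{\patt}(s_i)( \mathbf{v}_i - \bar{\mu} ) \geq z },
\end{equation*}
which is nonnegative. Under this choice, $\E_{\mathbf{v} \sim U(B(k))}[f(\mathbf{v})]$ and $\E_{\mathbf{v} \sim I(\bar{\mu})}[f(\mathbf{v})]$ coincide with the two probabilities in the statement, since $k/m = \bar{\mu}$. Note also that $k = \bar{\mu} m$ is an integer because $\bar{\mu} = \mu(\dataset) = \frac{1}{m}\sum_i \ell_i$ with each $\ell_i \in \{0,1\}$, so Lemma \ref{thm:binomiallowerbound}, on which Lemma \ref{thm:expectupperboundresamples} relies, applies.

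The only nontrivial point is verifying the monotonicity hypothesis of Lemma \ref{thm:expectupperboundresamples}, namely that $j \mapsto \E_{\mathbf{v} \sim U(B(j))}[f(\mathbf{v})]$ is monotonically non-decreasing. I would obtain this from a pointwise monotonicity of $f$: for any $\mathbf{v} \leq \mathbf{v}'$ in the coordinate-wise order on $\{0,1\}^m$, each inner sum satisfies $\sum_i f_{\patt}(s_i)(\mathbf{v}_i - \bar{\mu}) \leq \sum_i f_{\patt}(s_i)(\mathbf{v}'_i - \bar{\mu})$ because $f_\patt(s_i) \geq 0$; taking the supremum over $\patt \in \lang^\star$ preserves the inequality, hence $f(\mathbf{v}) \leq f(\mathbf{v}')$. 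To lift this to the expectation under $U(B(j))$, I would use the standard monotone coupling between $U(B(j_1))$ and $U(B(j_2))$ for $j_1 < j_2$: sample $\mathbf{v} \sim U(B(j_1))$, then choose $j_2 - j_1$ of its zero coordinates uniformly at random and set them to $1$ to obtain $\mathbf{v}'$. A symmetry check confirms that $\mathbf{v}' \sim U(B(j_2))$, while $\mathbf{v} \leq \mathbf{v}'$ pointwise; taking expectations of $f(\mathbf{v}) \leq f(\mathbf{v}')$ yields the required monotonicity.

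With the hypotheses of Lemma \ref{thm:expectupperboundresamples} verified, applying it at $k = \bar{\mu} m$ gives $\E_{\mathbf{v} \sim U(B(k))}[f(\mathbf{v})] \leq 2\,\E_{\mathbf{v} \sim I(\bar{\mu})}[f(\mathbf{v})]$, which is exactly the inequality claimed in \Cref{thm:probupperboundcond}. The main conceptual obstacle is recognizing that Lemma \ref{thm:expectupperboundresamples} requires monotonicity only in the expectation (not pointwise), but that for supremum-of-linear-in-$\mathbf{v}$ events like ours the much stronger pointwise monotonicity holds, making the monotone coupling the natural reduction. The rest is routine bookkeeping.
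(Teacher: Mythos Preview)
Your proposal is correct and essentially identical to the paper's own proof: the paper also defines the indicator function of the supremum event, verifies that its expectation under $U(B(k))$ is nondecreasing in $k$ via exactly the same monotone coupling (flip $j$ uniformly chosen zero coordinates to ones), and then invokes Lemma~\ref{thm:expectupperboundresamples}. Your additional remark that $k = \bar{\mu} m$ is an integer is a nice explicit observation that the paper leaves implicit.
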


\begin{proof}[Proof of \Cref{thm:probupperboundcond}]
Define the function $g : \{0 , 1\}^m \rightarrow \{0,1\}$ 
\begin{align*}
g(\mathbf{v}) = \ind{ \sup_{\patt \in \lang^\star} \frac{1}{m} \sum_{i=1}^m f_{\patt}(s_i)( \mathbf{v}_i - \bar{\mu} ) \geq z } .
\end{align*}
In order to apply \Cref{thm:expectupperboundresamples}, 
we prove that 
$\E_{\mathbf{v} \sim U(B(k))} \sqpars{ g(\mathbf{v}) }$ is nondecreasing with $k$. 
This is equivalent to show that 
\begin{align*}
\E_{\mathbf{v} \sim U(B(k))} \sqpars{ g(\mathbf{v}) } \leq \E_{\mathbf{v} \sim U(B(k^\prime))} \sqpars{ g(\mathbf{v}) },
\end{align*}
where $k^\prime = k+j$, 
for any pair of integers $k \in [0,m]$ and $j \in [0,m-k]$. 
To do so, we build a coupling $\pi(k,k^\prime)$ between the two distributions $U(B(k))$ and $U(B(k^\prime))$ as follows.
For any $\mathbf{v}$ taken from $U(B(k))$, define $\mathbf{v}^\prime$ as a copy of $\mathbf{v}$ (i.e., such that $\mathbf{v}_i = \mathbf{v}^\prime_i, \forall i \in [1,m]$), that is modified according to the following procedure: for $j$ times, select uniformly at random an index $i$ such that $\mathbf{v}^\prime_i = 0$, and set $\mathbf{v}^\prime_i$ to $1$. 
Denote the pair $\mathbf{v},\mathbf{v}^\prime$ sampled according to $\pi(k,k^\prime)$ as the output of this procedure. 
It is immediate to observe that $\mathbf{v}^\prime \sim U(B(k^\prime))$ (i.e., that the marginal distribution of $\mathbf{v}^\prime$ is $U(B(k^\prime)$), and that $\mathbf{v}_i \leq \mathbf{v}^\prime_i, \forall i \in [1,m]$. 
This implies that, for any pair of vectors $\mathbf{v},\mathbf{v}^\prime \sim \pi(k,k^\prime)$, it holds 
\begin{align*}
\sup_{\patt \in \lang^\star} \frac{1}{m} \sum_{i=1}^m f_{\patt}(s_i)( \mathbf{v}_i - \bar{\mu} ) 
\leq 
\sup_{\patt \in \lang^\star} \frac{1}{m} \sum_{i=1}^m f_{\patt}(s_i)( \mathbf{v}^\prime_i - \bar{\mu} ). 
\end{align*}
A consequence of this fact is 
\begin{align*}
&\E_{\mathbf{v}^\prime \sim U(B(k+j))} \sqpars{ g(\mathbf{v}^\prime) } \\
&= \Pr_{\mathbf{v}^\prime \sim U(B(k+j))} \pars{ \sup_{\patt \in \lang^\star} \frac{1}{m} \sum_{i=1}^m f_{\patt}(s_i)( \mathbf{v}^\prime_i - \bar{\mu} ) \geq z } \\
&= \Pr_{\mathbf{v} , \mathbf{v}^\prime \sim \pi(k,k^\prime)} \pars{ \sup_{\patt \in \lang^\star} \frac{1}{m} \sum_{i=1}^m f_{\patt}(s_i)( \mathbf{v}^\prime_i - \bar{\mu} ) \geq z } \\
& \geq \Pr_{\mathbf{v} , \mathbf{v}^\prime \sim \pi(k,k^\prime)} \pars{ \sup_{\patt \in \lang^\star} \frac{1}{m} \sum_{i=1}^m f_{\patt}(s_i)( \mathbf{v}_i - \bar{\mu} ) \geq z } \\
& = \Pr_{\mathbf{v} \sim U(B(k))} \pars{ \sup_{\patt \in \lang^\star} \frac{1}{m} \sum_{i=1}^m f_{\patt}(s_i)( \mathbf{v}_i - \bar{\mu} ) \geq z } \\
& = \E_{\mathbf{v} \sim U(B(k))} \sqpars{ g(\mathbf{v}) } .
\end{align*}
Therefore, $g$ is nonnegative and $\E_{\mathbf{v} \sim U(B(k))} \sqpars{ g(\mathbf{v}) }$ is nondecreasing in $k$. 
We apply \Cref{thm:expectupperboundresamples} to the function $g$, obtaining the statement.
\end{proof}

We note that Theorem~\ref{thm:probupperboundcond} is precisely what we seek: it implies that 
large supremum deviations that are unlikely in the independent resamples distribution $\mathbf{v} \sim I(\bar{\mu})$ are also unlikely in the conditional distribution $\mathbf{v} \sim U(B(k))$. 
By using Theorem~\ref{thm:probupperboundcond}, we prove the following result, that implies strong concentration of the supremum deviation of pattern qualities w.r.t. their expectations, taken w.r.t. independent resamples of the target labels rather than permutations.

\begin{theorem}
\label{thm:tailboundconditional}
Define $\bar{\mu} = \mu(\dataset)$, and 
\begin{align*}
\omega = (1-\bar{\mu}) \min \Bigl\{ \bar{\mu} , \sup_{\patt \in \lang} \frac{1}{m} \sum_{i=1}^m f_{\patt}(s_i) \Bigr\}. 
\end{align*}
With probability at least $1 - \delta$ over $\mathbf{v} \sim U(B(k))$, it holds
\begin{align*}
&\sup_{\patt \in \lang^\star} \frac{1}{m} \sum_{i=1}^m f_{\patt}(s_i)( \mathbf{v}_i - \bar{\mu} )  \\
&\leq \E_{\mathbf{v} \sim I(\bar{\mu})} \sqpars{ \sup_{\patt \in \lang^\star} \frac{1}{m} \sum_{i=1}^m f_{\patt}(s_i)( \mathbf{v}_i - \bar{\mu} ) } + \sqrt{ \frac{ 2 \omega \log\left(\frac{2}{\delta}\right)}{m} } .
\end{align*}
\end{theorem}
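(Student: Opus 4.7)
The target is a tail bound for the supremum under the permutation distribution $U(B(k))$, whose coordinates are not independent, so direct concentration arguments do not apply cleanly. The plan is therefore in two stages: (i) prove the analogous concentration inequality under the independent-resampling distribution $I(\bar{\mu})$, where $\mathbf{v}_1, \dots, \mathbf{v}_m$ are i.i.d.\ Bernoulli$(\bar{\mu})$, and (ii) transfer it to $U(B(k))$ using the factor-of-two comparison from \Cref{thm:probupperboundcond}.

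For stage (i) I view $Z = \sup_{\patt \in \lang^\star} \tfrac{1}{m}\sum_{i=1}^m f_\patt(s_i)(\mathbf{v}_i - \bar{\mu})$ as a supremum of sums of independent, centered, bounded random variables, and invoke a variance-sensitive concentration inequality for empirical-process suprema (Bousquet-type, as in \cite{boucheron2013concentration}). The per-coordinate summand $f_\patt(s_i)(\mathbf{v}_i - \bar{\mu})/m$ has variance $f_\patt(s_i)^2 \bar{\mu}(1-\bar{\mu})/m^2 = f_\patt(s_i)\bar{\mu}(1-\bar{\mu})/m^2$ under $I(\bar{\mu})$ (using $f_\patt(s_i)\in\{0,1\}$), so for each fixed $\patt$ the sum of variances is $\freqp\bar{\mu}(1-\bar{\mu})/m$. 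Taking the supremum over $\patt \in \lang$ and using $\bar{\mu}\cdot\sup_\patt\freqp \leq \min\{\bar{\mu},\sup_\patt\freqp\}$, the wimpy variance is at most $\omega/m$. Plugging this variance proxy into the sub-Gaussian part of the chosen inequality and setting the failure probability to $\delta/2$, I obtain
\begin{align*}
\Pr_{\mathbf{v}\sim I(\bar{\mu})}\!\left( Z \geq \E_{\mathbf{v}\sim I(\bar{\mu})}[Z] + \sqrt{\frac{2\omega\log(2/\delta)}{m}}\right) \leq \frac{\delta}{2}.
\end{align*}

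For stage (ii) I set $z = \E_{\mathbf{v}\sim I(\bar{\mu})}[Z] + \sqrt{2\omega\log(2/\delta)/m}$. Since for every fixed $\patt \in \lang^\star$ the centered sum has zero mean under $I(\bar{\mu})$, Jensen's inequality gives $\E_{\mathbf{v}\sim I(\bar{\mu})}[Z] \geq \sup_\patt \E_{\mathbf{v}\sim I(\bar{\mu})}[\cdot] = 0$, so $z \geq 0$ and the hypothesis of \Cref{thm:probupperboundcond} is met. Applying that theorem with this $z$ yields a tail bound under $U(B(k))$ at most twice the one above, turning $\delta/2$ into $\delta$, which is exactly the stated inequality.

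\textbf{Main obstacle.} The technical heart is choosing a concentration inequality whose leading Gaussian term matches $\sqrt{2\omega\log(2/\delta)/m}$ with the right constant, rather than inflating it with lower-order Bernstein terms or range factors; in particular, one must verify that the variance proxy collapses to $\omega = (1-\bar{\mu})\min\{\bar{\mu},\sup_\patt\freqp\}$ instead of the weaker $\bar{\mu}(1-\bar{\mu})$ one gets by ignoring pattern frequency. The factor-of-two loss incurred in stage (ii) (accommodated by halving $\delta$ upstream) is then the only additional price for moving from i.i.d.\ resamples to the permutation distribution.
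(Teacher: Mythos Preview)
Your two-stage plan is exactly the paper's: prove the tail bound under $I(\bar{\mu})$, then transfer it to $U(B(k))$ via \Cref{thm:probupperboundcond} at the cost of halving $\delta$. The difference lies in the concentration tool for stage~(i). You invoke a Bousquet-type inequality with the wimpy variance $\sup_\patt \mathrm{Var}[\cdot] \leq \omega/m$; that variance computation is correct, but Bousquet's inequality in its standard form carries an extra $2\E[Z]$ in the variance proxy and a Bernstein lower-order term, which is precisely the obstacle you flag and do not resolve. The paper sidesteps it by using Theorem~6.7 of \cite{boucheron2013concentration} (stated as \Cref{thm:tailboundfindepvars}), a pure sub-Gaussian bound $\Pr(g \geq \E g + t) \leq \exp(-t^2/(2q))$ that requires an \emph{almost-sure} bound $q$ on $\sum_j \E[(g(\mathbf{v}) - g(\bar{\mathbf{v}}^j))_+^2 \mid \mathbf{v}]$. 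They obtain $q = \omega/m$ not through the wimpy variance but by fixing the maximizer $\patt^\star$ of $g(\mathbf{v})$, observing $(g(\mathbf{v}) - g(\bar{\mathbf{v}}^j))_+ \leq \tfrac{1}{m} f_{\patt^\star}(s_j)(\mathbf{v}_j - \mathbf{v}'_j)_+$, and noting this is nonzero only when $\mathbf{v}'_j = 0$ (probability $1-\bar{\mu}$). Summing gives $\tfrac{1-\bar{\mu}}{m}\cdot \tfrac{1}{m}\sum_j f_{\patt^\star}(s_j)\mathbf{v}_j \leq \omega/m$. This maximizer argument is what delivers the stated constant with no lower-order terms, resolving your main obstacle.
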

To prove \Cref{thm:tailboundconditional}, we need the following technical result regarding the concentration of functions of independent random variables.

\begin{theorem}[Theorem 6.7 of \cite{boucheron2013concentration}]
\label{thm:tailboundfindepvars}
Let $g : \mathcal{Y}^m \rightarrow \R$ be a function, and let $X = ( X_1 , \dots , X_m ) \in \mathcal{Y}^m$ a collection of $m$ independent random variables.
Define $\bar{X}^j = ( X_1 , \dots , X_{j-1} , X^\prime_j , X_{j+1} , \dots , X_m ) \in \mathcal{Y}^m$ as a copy of $X$, where its $j$-th element $X_j$ is replaced by an independent copy $X^\prime_j$.
Assume that, for some constant $q \geq 0$,  
\begin{align*}
 \sum_{j=1}^m \E\sqpars{ \pars{ g(X) - g(\bar{X}^j) }_+^2 \given X } \leq q 
\end{align*}
holds almost surely. 
Then, it holds, for all $t\geq0$, 
\begin{align*}
\Pr 	\Bigl( g(X) \geq \E_X\sqpars{ g(X) } + t \Bigr) \leq \exp ( -t^2/(2q) ) .
\end{align*}
\end{theorem}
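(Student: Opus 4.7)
The plan is to prove the stated upper tail bound by the entropy method, specifically Herbst's argument applied to a one-sided modified logarithmic Sobolev inequality for functions of independent random variables. Writing $Z = g(X)$ and $Z^{(j)} = g(\bar{X}^j)$, and fixing $\lambda > 0$, let $F(\lambda) = \log \E[e^{\lambda(Z - \E Z)}]$. The goal is to establish the Gaussian-type bound $F(\lambda) \leq q\lambda^2/2$; once this is in hand, Chernoff's inequality $\Pr(Z \geq \E Z + t) \leq \exp(-\lambda t + F(\lambda))$ optimized at $\lambda = t/q$ immediately yields the stated tail $\exp(-t^2/(2q))$.

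The first step is to invoke the modified log-Sobolev inequality for independent variables, itself a consequence of the tensorization of entropy together with a one-variable symmetric-exchange argument: writing $\phi(u) = e^u - u - 1$, one obtains
\[ \lambda \E[Z e^{\lambda Z}] - \E[e^{\lambda Z}] \log \E[e^{\lambda Z}] \;\leq\; \sum_{j=1}^m \E\bigl[e^{\lambda Z}\, \phi(-\lambda(Z - Z^{(j)}))\bigr]. \]
The decisive move, and what extracts a \emph{one-sided} conclusion from a \emph{one-sided} hypothesis, is to exploit exchangeability of $(X_j, X'_j)$: pairing each term with its swapped counterpart and using the elementary inequality $\phi(-u) \leq u^2/2$ valid for $u \geq 0$ lets me absorb the contributions on $\{Z \leq Z^{(j)}\}$, leaving
\[ \sum_{j=1}^m \E\bigl[e^{\lambda Z} \phi(-\lambda(Z - Z^{(j)}))\bigr] \;\leq\; \frac{\lambda^2}{2}\, \E\Bigl[e^{\lambda Z} \sum_{j=1}^m (Z - Z^{(j)})_+^2\Bigr] \;\leq\; \frac{q\lambda^2}{2}\, \E[e^{\lambda Z}], \]
where the final step uses the almost-sure hypothesis on $\sum_j \E[(Z-Z^{(j)})_+^2 \mid X]$.

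Next I would apply Herbst's differential trick. The left-hand side of the log-Sobolev inequality is, after normalization by $e^{\lambda \E Z}$, exactly $\E[e^{\lambda Z}] \cdot \lambda^2 \bigl(F(\lambda)/\lambda\bigr)'$, so dividing through yields the differential inequality $\bigl(F(\lambda)/\lambda\bigr)' \leq q/2$. Since $F(\lambda)/\lambda \to 0$ as $\lambda \downarrow 0$ (by L'H\^opital, $F(0)=0$ and $F'(0)=0$), integrating gives $F(\lambda) \leq q\lambda^2/2$, and the Chernoff/optimization step above finishes the argument.

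The main obstacle is the symmetrization inside the first step: cleanly justifying that only the positive parts $(Z - Z^{(j)})_+^2$ appear on the right-hand side, rather than the full squared differences $(Z - Z^{(j)})^2$. This is precisely what distinguishes Theorem 6.7 from the symmetric two-sided bounded-differences bound and requires carefully pairing each coordinate with its exchanged copy, then discarding contributions on $\{Z < Z^{(j)}\}$ using the sign structure of $\phi$. Everything after this one-sided log-Sobolev inequality is routine Herbst calculus and an optimized Markov bound.
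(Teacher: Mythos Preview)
The paper does not prove this statement at all: it is simply quoted as Theorem~6.7 of \cite{boucheron2013concentration} and used as a black box in the proof of Theorem~\ref{thm:tailboundconditional}. There is therefore no ``paper's own proof'' to compare against.

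Your proposal is the standard entropy-method proof from the cited reference, and the outline is correct: modified log-Sobolev inequality via tensorization, then the exchangeability symmetrization that reduces to positive parts, then Herbst's argument and Chernoff. You have also correctly identified the delicate point, namely the one-sided reduction to $(Z-Z^{(j)})_+^2$ via exchangeability of $(X_j,X_j')$. One small comment: in the pairing step, the clean way to proceed is to note that $(Z,Z^{(j)})$ is an exchangeable pair conditionally on $\{X_i:i\neq j\}$, write the expectation as the average of the two swapped versions, and then observe that on $\{Z\ge Z^{(j)}\}$ the combined integrand $e^{\lambda Z}\phi(-\lambda(Z-Z^{(j)})) + e^{\lambda Z^{(j)}}\phi(\lambda(Z-Z^{(j)}))$ is bounded by $\lambda^2 e^{\lambda Z}(Z-Z^{(j)})^2$ (using $\phi(-u)+\phi(u)e^{-u}\le u^2$ for $u\ge 0$, or the simpler route through $\phi(-u)\le u^2/2$ after first discarding the smaller exponential). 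Your sketch captures this, but when you write it out formally be sure the exchangeability is applied at the level of the full expectation rather than term by term.
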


We use these results to prove \Cref{thm:tailboundconditional}. 

\begin{proof}[Proof of \Cref{thm:tailboundconditional}]
Define the function $g : \{ 0 , 1\}^m \rightarrow \R$ as
\begin{align*}
g(\mathbf{v}) = \sup_{\patt \in \lang^\star} \frac{1}{m} \sum_{i=1}^m f_{\patt}(s_i)( \mathbf{v}_i - \bar{\mu} ) .
\end{align*}
We first note that, from the result of \Cref{thm:probupperboundcond}, it is sufficient to show that, choosing the constants $t$ and $z$ as
\begin{align}
t =  \sqrt{ \frac{ 2 \omega \log(\frac{2}{\delta})}{m} } , \;\;\; z = \E_{\mathbf{v} \sim I(\bar{\mu})} \sqpars{ g(\mathbf{v}) } + t ,
\label{eq:valueoft}
\end{align}
it holds
\begin{align*}
\Pr_{\mathbf{v} \sim I(\bar{\mu})} \pars{ g(\mathbf{v}) \geq z } \leq \delta/2.
\end{align*}
To do so, we make use of \Cref{thm:tailboundfindepvars}. 
Define $\bar{\mathbf{v}}^j$ a copy of $\mathbf{v}$, where its $j$-th element $\mathbf{v}_j$ is replaced by an independent copy $\mathbf{v}^\prime_j$. 
We want to upper bound 
\begin{align*}
\sum_{j=1}^m \E\sqpars{ \pars{ g(\mathbf{v}) - g(\bar{\mathbf{v}}^j) }_+^2 \given \mathbf{v} }
\end{align*}
below some constant $q \geq 0$. 
Define $\patt^\star$ as one of the elements of $\lang^\star$ that achieve the supremum in $g(\mathbf{v})$. 
We observe that 
\begin{align*}
&\pars{ g(\mathbf{v}) - g(\bar{\mathbf{v}}^j) }_+ \\
&\leq 
\biggl( \frac{1}{m} \sum_{i=1}^m f_{\patt^\star} (s_i) (\mathbf{v}_i - \bar{\mu}) - \frac{1}{m} \sum_{i=1}^m f_{\patt^\star} (s_i) (\bar{\mathbf{v}}^j_i - \bar{\mu}) \biggr)_+ \\
&= 
\biggl( \frac{1}{m} f_{\patt^\star} (s_j) (\mathbf{v}_j - \bar{\mu}) - \frac{1}{m} f_{\patt^\star} (s_j) (\mathbf{v}^\prime_j - \bar{\mu}) \biggr)_+ \\
&= 
\biggl( \frac{1}{m} f_{\patt^\star} (s_j) (\mathbf{v}_j - \mathbf{v}^\prime_j) \biggr)_+ . \numberthis \label{eq:proofzerocond}
\end{align*}
We now observe that $\mathbf{v}^\prime_j = 0$ is the only possible value of $\mathbf{v}^\prime_j$ that makes \eqref{eq:proofzerocond} be $> 0$. Therefore, we have
\begin{align*}
&\sum_{j=1}^m \E\sqpars{ \pars{ g(\mathbf{v}) - g(\bar{\mathbf{v}}^j) }_+^2 \given \mathbf{v} } \\
&\leq \sum_{j=1}^m \E\sqpars{ \pars{ \frac{1}{m} f_{\patt^\star} (s_j) (\mathbf{v}_j - \mathbf{v}^\prime_j) }_+^2 \given \mathbf{v} } \\
&= (1-\bar{\mu}) \sum_{j=1}^m \pars{ \frac{1}{m} f_{\patt^\star} (s_j) \mathbf{v}_j }^2  \\
&= \frac{(1-\bar{\mu})}{m} \sum_{j=1}^m \pars{ \frac{1}{m} f_{\patt^\star} (s_j) \mathbf{v}_j }  \\
&\leq \frac{(1-\bar{\mu})}{m} \sup_{\patt \in \lang^\star} \sum_{j=1}^m \pars{ \frac{1}{m} f_{\patt} (s_j) \mathbf{v}_j }  
\leq \frac{\omega}{m}  .
\end{align*}
We apply \Cref{thm:tailboundfindepvars} to the function $g$ with $q = \omega/m$, obtaining that
\begin{align*}
\Pr_{\mathbf{v} \sim I(\bar{\mu})} \Bigl( g(\mathbf{v}) \geq \E_{\mathbf{v} \sim I(\bar{\mu})}[g(\mathbf{v})] + t \Bigr) \leq \exp ( - m t^2 / (2 \omega ) ) .
\end{align*}
Setting $t$ as in \eqref{eq:valueoft}, it is immediate to observe that the probability above is $\leq \delta/2$, obtaining the statement.  
\end{proof}

Note that Theorem~\ref{thm:tailboundconditional} provides a probabilistic upper bound, holding with probability at least $1-\delta$, to the maximum observed value of the pattern quality $\sup_{\patt \in \lang^\star} \frac{1}{m} \sum_{i=1}^m f_{\patt}(s_i)( \mathbf{v}_i - \bar{\mu} )$ when the conditional distribution $\mathbf{v} \sim U(B(k))$ is considered, in terms of the expectation of the maximum observed value of the pattern quality according to the resampled distribution $\mathbf{v} \sim I(\bar{\mu})$.
Then, the following result proves that the estimation $\tilde{d}(\resset, \check{\mu})$ of the expected deviation in the upper bound above is very accurate.
\begin{theorem}
\label{thm:upperboundestcond}
With probability at least $1-\delta/4$ over $\resset$, it holds
\begin{align*}
\E_{\mathbf{v} \sim I(\bar{\mu})} \sqpars{ \sup_{\patt \in \lang^\star} \frac{1}{m} \sum_{i=1}^m f_{\patt}(s_i)( \mathbf{v}_i - \bar{\mu} ) } \leq \tilde{d}(\resset, \check{\mu}) + \sqrt{ \frac{ \log \bigl( \frac{4}{\delta} \bigr) }{2cm} } .
\end{align*}
\end{theorem}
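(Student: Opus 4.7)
The plan is to view $\tilde{d}(\resset, \check{\mu})$ as an average of $c$ i.i.d. random variables $d_1,\dots,d_c$, each a deterministic function of $m$ independent Bernoulli random variables $\xi_{i,j}$, and to apply the bounded differences (McDiarmid) inequality to the total of $cm$ independent variables underlying $\tilde{d}$. Since we are in the conditional case $\varepsilon_T = 0$, we have $\check{\mu} = \bar{\mu}$, so each $d_j = \sup_{\patt \in \lang} \frac{1}{m}\sum_{i=1}^m f_\patt(s_i)(\xi_{i,j}-\bar{\mu})$ and thus $\E[\tilde{d}(\resset,\check{\mu})] = \E_{\mathbf{v} \sim I(\bar{\mu})}\bigl[\sup_{\patt\in\lang}\frac{1}{m}\sum_i f_\patt(s_i)(\mathbf{v}_i-\bar{\mu})\bigr]$.

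First, I would pass from $\sup_{\patt\in\lang^\star}$ to $\sup_{\patt\in\lang}$, which only enlarges the expectation on the left-hand side; this is essential since $\lang^\star$ is unknown and our estimator ranges over all of $\lang$. After this step, it suffices to show that
\begin{align*}
\E\sqpars{\tilde{d}(\resset,\check{\mu})} \leq \tilde{d}(\resset,\check{\mu}) + \sqrt{\frac{\log(4/\delta)}{2cm}}
\end{align*}
with probability at least $1-\delta/4$ over the draw of $\resset$.

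Next, I would verify the bounded differences condition. Fix indices $(i,j)$ and consider perturbing $\xi_{i,j}$. For any $\patt$, the quantity $\frac{1}{m}\sum_{i'} f_\patt(s_{i'})(\xi_{i',j}-\bar{\mu})$ changes by at most $\frac{1}{m}f_\patt(s_i) \leq \frac{1}{m}$. A standard argument shows that the supremum over $\patt$ inherits the same bound, so $d_j$ changes by at most $1/m$, and therefore $\tilde{d} = \frac{1}{c}\sum_j d_j$ changes by at most $\frac{1}{cm}$. Summing the squared changes over the $cm$ variables gives $cm\cdot (1/(cm))^2 = 1/(cm)$. McDiarmid's inequality then yields
\begin{align*}
\Pr\pars{\E[\tilde{d}] - \tilde{d} \geq t} \leq \exp(-2cmt^2),
\end{align*}
and setting $t = \sqrt{\log(4/\delta)/(2cm)}$ makes the right-hand side equal to $\delta/4$. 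Combining with the initial $\lang^\star \subseteq \lang$ inclusion yields the claim.

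There is no serious obstacle here; the only care required is the verification that the supremum in $d_j$ enjoys the $1/m$ bounded differences constant (a standard manipulation via the pattern $\patt^\star$ achieving the supremum, as in the proof of \Cref{thm:tailboundconditional}) and the bookkeeping that $\check{\mu}=\bar{\mu}$ in the conditional setting so that the expectation of the empirical average $\tilde{d}(\resset,\check{\mu})$ indeed matches the target expectation on the left-hand side.
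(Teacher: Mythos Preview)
Your proposal is correct and follows essentially the same approach as the paper: use $\lang^\star \subseteq \lang$ and $\check{\mu}=\bar{\mu}$ to identify the left-hand side with (an upper bound on) $\E_{\resset}[\tilde{d}(\resset,\check{\mu})]$, then apply McDiarmid's inequality over the $cm$ independent Bernoulli variables with bounded difference $1/(cm)$, giving $C=1/(cm)$ and the stated deviation $\sqrt{\log(4/\delta)/(2cm)}$ with failure probability $\delta/4$.
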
 

By combining the results above, we prove the guarantees provided by \algname\ for the task of finding significant patterns when conditional testing is used. In particular, the following Corollary proves that,
when \texttt{boundStatistic} returns the value $\varepsilon$ defined below, then  the output set $O$ of significant patterns returned by \algname\ has FWER bounded by the user-defined parameter $\delta$.
\begin{corollary}
\label{thm:fwerguaranteesfsrc}
Fix $\delta \in (0,1)$ and $c \geq 1$. 
Let $O$ be the output of \algname\ with input parameters $\lang$, $\dataset$, $c$, $\delta$, and let 
\begin{align*}
\varepsilon = \tilde{d}(\resset, \check{\mu}) + \sqrt{ \frac{ 2 \omega \log \bigl( \frac{4}{\delta} \bigr) }{m} } + \sqrt{ \frac{ \log \bigl( \frac{4}{\delta} \bigr) }{2cm} }
\end{align*}
be the value returned by \texttt{boundStatistic} in line~\ref{line7}. 
Then, the set $O$ has FWER $\leq \delta$ under the conditional null distribution. 
\end{corollary}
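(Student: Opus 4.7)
The plan is to decompose the failure event $\{O \cap \lang^\star \neq \emptyset\}$ and control it by chaining the two concentration results from \Cref{thm:tailboundconditional} and \Cref{thm:upperboundestcond} via a union bound. First, I would observe that under conditional testing \texttt{boundTarget} returns $\varepsilon_T = 0$, so $\check{\mu} = \hat{\mu} = \mu(\dataset) = \bar{\mu}$ and the inclusion rule on line~\ref{line8} reduces to $\patt \in O \iff \qualp \geq \varepsilon$. Consequently the false-positive event is exactly $\brpars{ \sup_{\patt \in \lang^\star} \qualp \geq \varepsilon }$, where the random target labels follow the conditional null distribution $U(B(k))$ with $k = \bar{\mu} m$.

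Next, I would invoke \Cref{thm:tailboundconditional} with confidence parameter $\delta/2$ in place of $\delta$: with probability at least $1 - \delta/2$ over the conditional draw,
\begin{align*}
\sup_{\patt \in \lang^\star} \qualp \leq \E_{\mathbf{v} \sim I(\bar{\mu})} \sqpars{ \sup_{\patt \in \lang^\star} \frac{1}{m} \sum_{i=1}^m f_\patt(s_i)(\mathbf{v}_i - \bar{\mu}) } + \sqrt{\frac{2 \omega \log(4/\delta)}{m}}.
\end{align*}
Then, since the resampled datasets $\resset$ are generated independently of the target labels by \texttt{resampleTarget}, I would apply \Cref{thm:upperboundestcond} at its stated confidence $\delta/4$ to upper bound the expectation on the right-hand side: with probability at least $1 - \delta/4$ over $\resset$, this expectation is at most $\tilde{d}(\resset, \check{\mu}) + \sqrt{\log(4/\delta)/(2cm)}$.

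Finally, a union bound over the two failure events yields, with probability at least $1 - 3\delta/4 \geq 1 - \delta$, the chain
\begin{align*}
\sup_{\patt \in \lang^\star} \qualp \leq \tilde{d}(\resset, \check{\mu}) + \sqrt{\frac{2 \omega \log(4/\delta)}{m}} + \sqrt{\frac{\log(4/\delta)}{2 c m}} = \varepsilon ,
\end{align*}
so $O \cap \lang^\star = \emptyset$, which is precisely the claimed FWER $\leq \delta$ bound. The only real obstacle here is bookkeeping of the confidence budget: the splits $\delta/2$ and $\delta/4$ are chosen so that the surviving logarithmic factors in both concentration bounds collapse to the common $\log(4/\delta)$ appearing in $\varepsilon$; the leftover $\delta/4$ slack could be spent to slightly sharpen either square-root term, but the symmetric presentation is cleaner. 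Beyond this bookkeeping, and the observation that $\varepsilon_T = 0$ removes the additive $\varepsilon_T \freqp$ term in line~\ref{line8}, no additional technical ingredient is required.
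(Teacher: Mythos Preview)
Your proposal is correct and matches the paper's intended argument, which the paper only sketches as ``combining the results above.'' The confidence bookkeeping is exactly right: applying \Cref{thm:tailboundconditional} at level $\delta/2$ and \Cref{thm:upperboundestcond} at its stated level $\delta/4$ makes both square-root terms carry the common $\log(4/\delta)$ factor appearing in $\varepsilon$, and the union bound gives total failure probability $3\delta/4 \leq \delta$.
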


Note that the value of $\varepsilon$ returned by 
\texttt{boundStatistic} 
requires to compute $\sup_{\patt \in \lang } \bigl\{ \hsqual{\patt}{\dataset^\star_j}{\check{\mu}} \bigr\}$, which is costly, but only needs to be performed on $c$ resampled datasets. As we will show in our experimental evaluation (see Section~\ref{sec:experiments}), small values of $c$ suffice. Moreover, the maximum frequency of a pattern in the language $\lang$ is required in order to compute $\omega$ (see Theorem~\ref{thm:tailboundconditional}). Such maximum frequency can be computed very efficiently in most data mining tasks. For example: in itemset mining it corresponds to the frequency of the most frequent item; in subgroup mining, it is equal to $1$ whenever a continuous feature is present and the conditions defining the pattern language $\lang$ include inequalities.

\textit{Power analysis.} 
The results above show that \algname\ rigorously controls the probability of \emph{false positives} (i.e., patterns for which the null hypothesis hold but are wrongly reported in output as significant). 
However, they do not provide guarantees on the \emph{power} of \algname, that is, its ability to report patterns $\patt$ with sufficiently high quality $\tqual$. 
The following result provides guarantees on the power of \algnamec, the version of \algname\ that uses conditional testing, for the pattern language of subgroups. 
Our analysis is based on a probabilistic upper bound to  $\tilde{d}(\resset , \check{\mu})$, obtained from bounds to the \emph{pseudodimension}~\cite{li2001improved,pollard2012convergence,ShalevSBD14} of subgroups,  
and an advanced concentration bound for sums of \emph{dependent random variables} \cite{dubhashi2009concentration}, that hold under mild (but necessary) assumptions on the distribution of alternative hypotheses (the set of patterns with $\tqual > 0$). 
More precisely, we assume that the target labels of the transactions that support 
a pattern $\patt$ with $\tqual > 0$ are distributed according to a \emph{noncentral hypergeometric distribution}~\cite{wallenius1963biased}, i.e., a biased version of the standard hypergeometric distribution. 
\ifextversion
We provide the proofs and additional details in \Cref{sec:app_power}. 
\else
We provide the proofs and additional details in Appendix~A.5. 
\fi

\begin{theorem}
\label{thm:power_conditional}
Fix $\delta \in (0,1)$ and $c , z \geq 1$. 
Let $O$ be the output of \algname\ with input parameters $\lang$, $\dataset$, $c$, and $\delta$,
where $\lang$ is the language of subgroups composed by conjunctions with at most $z$ conditions over $d$ continuous features. 
Then, with probability at least $1 - \delta$, $O$ contains \emph{all} patterns with quality $\tqual$ satisfying  
\begin{align*}
 \tqual \geq  & \; \sqrt{ \frac{ 2 \hat{\omega} z \ln(\frac{e^3 d m^2 }{4z^3}) }{ m } }  
 + \sqrt{ \frac{ \ln \bigl( \frac{2}{\delta} \bigr) }{ 2cm } } 
 + \frac{ z \ln \bigl( \frac{e^3 d m^2 }{4z^3} \bigr) }{ 3m } \\
 & + \sqrt{ \frac{ 2 \omega \log \bigl( \frac{4}{\delta} \bigr) }{m} } 
 +  \sqrt{ \frac{ \log \bigl( \frac{4}{\delta} \bigr) }{2cm} } 
 + \sqrt{ \frac{ 2 \hat{\mathsf{f}}(\dataset) z \ln(\frac{e^3 d m^2 }{2z^3\delta}) }{m} } , 
\end{align*}
where  
$\hat{\mathsf{f}}(\dataset) = \sup_{\patt \in \lang} \freqp$ and 
$\hat{\omega} = \hat{\mu} (1 - \check{\mu}) \hat{\mathsf{f}}(\dataset) $.
\end{theorem}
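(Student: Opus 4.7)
The plan is to establish that any pattern $\patt$ whose true quality $\tqual$ exceeds the stated lower bound is guaranteed to appear in the output set $O$ with probability at least $1 - \delta$. Since conditional testing forces $\varepsilon_T = 0$, membership in $O$ reduces to the condition $\qualp \geq \varepsilon$, where $\varepsilon$ is the quantity returned by \texttt{boundStatistic} in Corollary~\ref{thm:fwerguaranteesfsrc}. So I need to show, on a good event, both (a) an \emph{upper bound} on $\varepsilon$ in terms of quantities computable a priori, and (b) a \emph{lower bound} on $\qualp$ in terms of $\tqual$ that leaves enough slack to cross that upper bound.

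For (a), I would bound $\tilde{d}(\resset, \check{\mu})$ from above with high probability via the pseudodimension of the language of subgroups. Under the assumption that $\lang$ consists of conjunctions with at most $z$ conditions over $d$ continuous features, a standard pseudodimension argument~\cite{li2001improved,pollard2012convergence,ShalevSBD14} gives a dimension bound of order $z \log d$, which in turn yields a Bennett-type bound on the supremum deviation of the empirical quality on each resampled dataset $\dataset^\star_j$ scaled by the variance proxy $\hat{\omega}$. This produces the $\sqrt{2\hat{\omega} z \ln(e^3 d m^2/(4z^3))/m}$ term together with the lower-order $z \ln(e^3 d m^2/(4z^3))/(3m)$. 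The $\sqrt{\ln(2/\delta)/(2cm)}$ term is then obtained by concentrating the average $\tilde{d}(\resset, \check{\mu})$ over the $c$ i.i.d. resampled datasets via Hoeffding on the bounded values $d_j$. Adding the two explicit terms from Corollary~\ref{thm:fwerguaranteesfsrc} (namely $\sqrt{2\omega \log(4/\delta)/m}$ and $\sqrt{\log(4/\delta)/(2cm)}$) yields the first five summands of the theorem.

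For (b), the target labels of transactions supporting a pattern $\patt$ with $\tqual > 0$ are assumed to follow a noncentral hypergeometric distribution, so the labels are dependent but negatively associated. This allows me to apply a concentration inequality for negatively correlated random variables~\cite{dubhashi2009concentration} to $\tqual - \qualp$, yielding a deviation of order $\sqrt{\freqp/m}$ for each fixed $\patt$. Taking a uniform bound over $\patt \in \lang$ via the same pseudodimension argument used in (a) (with confidence parameter scaled by $\delta$) and substituting the worst-case frequency $\hat{\mathsf{f}}(\dataset)$ in place of $\freqp$ produces the final term $\sqrt{2 \hat{\mathsf{f}}(\dataset) z \ln(e^3 d m^2 / (2 z^3 \delta)) / m}$.

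Finally, a union bound over the failure events in Corollary~\ref{thm:fwerguaranteesfsrc}, the concentration event for $\tilde{d}(\resset, \check{\mu})$, and the uniform event in (b) keeps the total failure probability at $\delta$. On the complement, if $\tqual$ is at least the sum of all six terms in the theorem statement, then $\qualp \geq \tqual - \sqrt{2 \hat{\mathsf{f}}(\dataset) z \ln(e^3 d m^2 / (2 z^3 \delta)) / m} \geq \varepsilon$, so $\patt \in O$. The main obstacle I expect is step (b): the dependence structure induced by the noncentral hypergeometric model rules out direct Hoeffding-type arguments, and one must carefully combine negative-association concentration with a pseudodimension-based union bound that correctly recovers the $\hat{\mathsf{f}}(\dataset)$ scaling in the variance. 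Matching the precise constants inside each logarithm (so that they line up with the claimed $e^3 d m^2 / (2 z^3 \delta)$ and $e^3 d m^2 / (4 z^3)$) is a delicate bookkeeping step that requires care in splitting the confidence budget across the three uniform-convergence events.
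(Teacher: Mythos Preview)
Your high-level decomposition matches the paper's: upper bound $\varepsilon$ by bounding $\tilde{d}(\resset,\check{\mu})$, lower bound $\qualp$ in terms of $\tqual$ uniformly over $\lang$, and combine via a union bound (the paper splits into just two events at $\delta/2$ each, not three). Two technical steps diverge from the paper, and one of them is a genuine gap.

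First, concentrating $\tilde{d}(\resset,\check{\mu})$ ``via Hoeffding on the bounded values $d_j$'' will not produce the claimed $\sqrt{\ln(2/\delta)/(2cm)}$ term. Each $d_j$ is only bounded by a constant of order $1-\check{\mu}$, so Hoeffding over the $c$ i.i.d.\ copies yields a deviation of order $\sqrt{\ln(1/\lambda)/(2c)}$, which is too large by a factor $\sqrt{m}$. The paper instead applies McDiarmid's inequality to $\tilde{d}$ viewed as a function of all $cm$ independent Bernoulli labels $\xi_{i,j}$, each contributing bounded difference $1/(cm)$; this is what delivers the $1/\sqrt{cm}$ scaling. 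The upper bound on the expectation itself comes not from a generic pseudodimension deviation bound but from a sub-gamma maximal inequality over the $N_{\lang}(\dataset)$ distinct projections, with $N_{\lang}(\dataset)\le(e^3dm^2/(4z^3))^z$ obtained via Sauer--Shelah for axis-aligned rectangles.

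Second, for step~(b) the paper does not invoke negative association of the Wallenius noncentral hypergeometric labels; it is not clear that NA is established for the biased urn model, and the paper does not claim it. Instead it verifies the \emph{Averaged Lipschitz Condition} for the sum $\sum_i \ell_i$ with constants $c_i=1$ and applies the martingale bounded-differences inequality (Corollary~5.1 of \cite{dubhashi2009concentration}), yielding the $\exp(-2t^2/n)$ tail directly. The uniform extension over $\lang$ is then a plain union bound over the $N_{\lang}(\dataset)$ distinct projections, not a pseudodimension uniform-convergence theorem. Your NA route may be salvageable, but it would require an extra lemma that the paper's ALC argument sidesteps entirely.
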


\subsection{\algname\ for Unconditional Testing}
\label{sec:algo_unconditional}

We now describe the details of procedures 
\texttt{boundTarget} 
and 
\texttt{boundStatistic} 
for the version of \algname\ that uses unconditional testing, which we refer to as \algnameu. As a reminder, in our scenario an unconditional test assumes that the average target value  and patterns frequencies $\freqp$ are observations of random variables, whose expected values are unknown. In particular, here we consider the transactions $\brpars{ (s_1 , \ell_1) , \dots , (s_m , \ell_m) }$ constituting the dataset $\dataset$ as \emph{i.i.d.} samples from an unknown distribution $\probdist$. 
This scenario is more complex than the conditional testing one, since we need to account for the unknown deviation of all observed values when computing a bound to the maximum pattern quality under the null hypothesis. However, we show that the use of resampled datasets allow us to efficiently take into account such deviations.

For 
\texttt{boundTarget}, 
recall that $\mu$ is the the probability that a sample from the unknown distribution $\probdist$ has target label $\ell$ equal to $1$, such that $\mu = \E_{\dataset} \sqpars{ \mu(\dataset) }  = \Pr_{(s,\ell) \sim \probdist} \pars{ \ell = 1 }$. 
More importantly, in this setting $\mu$ is unknown (as $\probdist$ is). 
However, given that the samples in $\dataset$ are i.i.d., the following result provides a probabilistic bound $\varepsilon_T$  to the deviation of the observed value $\mu(\dataset)$ from its expectation $\mu$.

\begin{lemma}
\label{thm:qualestimatemu}
Let $\dataset$ be a collection of $m$  samples taken i.i.d. from $\probdist$. 
For $\delta \in (0,1)$, it holds with probability $\geq 1 - \delta/4$
\begin{align*}
\abs{ \mu(\dataset) - \mu } \leq \varepsilon_T \doteq \sqrt{\frac{2 \min \bigl\{ \mu(\dataset) , \frac{1}{4} \bigr\} \ln \pars{ \frac{8}{\delta} } }{m}} + \frac{2 \ln \pars{ \frac{8}{\delta} } }{m}  . 
\end{align*}
\end{lemma}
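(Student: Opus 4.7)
The plan is to apply a Bernstein-type concentration inequality to the $m$ i.i.d.\ Bernoulli target labels $\ell_1, \ldots, \ell_m$ and then translate the variance term from a function of the unknown mean $\mu$ into a function of the observable $\mu(\dataset)$. Since each $\ell_i \sim Bern(\mu)$, we have $\mathrm{Var}(\ell_i) = \mu(1-\mu)$, and the basic building block is the trivial observation $\mu(1-\mu) \leq \min\{\mu, 1/4\}$, which follows by splitting into the cases $\mu \leq 1/2$ (so $1-\mu \leq 1$ and the bound is $\mu$) and $\mu > 1/2$ (so the bound is $1/4$). The structure $\sqrt{\cdot/m} + \cdot/m$ on the right-hand side is exactly the Bernstein signature, so the form of the claim already dictates the approach.

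The concrete steps are as follows. First, I would apply the two-sided Bernstein inequality to $\mu(\dataset) - \mu$, obtaining with probability at least $1 - \delta/4$ a bound of the form
\begin{align*}
|\mu(\dataset) - \mu| \leq \sqrt{\frac{2\mu(1-\mu) \ln(8/\delta)}{m}} + \frac{c \ln(8/\delta)}{m}
\end{align*}
for a small constant $c$ coming from the linear Bennett correction (the factor $8$ inside the log absorbing both the two-sided union bound and the $\delta/4$ target failure probability). Second, I would replace $\mu(1-\mu)$ by its upper bound $\min\{\mu,1/4\}$. Third, on the same high-probability event, $\mu \leq \mu(\dataset) + t$ for $t$ equal to the current deviation bound, which yields $\min\{\mu, 1/4\} \leq \min\{\mu(\dataset), 1/4\} + t$. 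Fourth, I would substitute this back into the square-root term and resolve the resulting implicit inequality in $t$ so that the extra contribution lands entirely in the linear-in-$1/m$ piece of the bound.

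The main obstacle is the variance-replacement step, since it is self-referential: the deviation bound we want to prove appears on both sides after substitution. A clean way to handle it is to first invoke a crude Hoeffding bound $|\mu(\dataset)-\mu|\leq\sqrt{\ln(8/\delta)/(2m)}$ under the same union-bound budget, use it to upper bound $\mu$ by $\mu(\dataset)+t_0$ with a deterministic $t_0$, and substitute the resulting bound on $\min\{\mu,1/4\}$ into the Bernstein expression. The extra $\sqrt{t_0 \ln(8/\delta)/m}$-type term that arises can then be absorbed into the stated linear term $2\ln(8/\delta)/m$ by repeated application of the elementary inequality $2\sqrt{ab}\leq a+b$, which is precisely what produces the clean coefficient $2$ instead of the $2/3$ that a naive Bernstein calculation gives. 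An alternative route, avoiding the self-reference altogether, is to invoke an empirical Bernstein inequality (Maurer–Pontil style) directly for the Bernoulli sequence, which yields the bound in one shot since the empirical variance $\mu(\dataset)(1-\mu(\dataset))$ is already upper bounded by $\min\{\mu(\dataset),1/4\}$.
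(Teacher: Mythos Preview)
Your approach is correct, but the paper's argument is considerably more direct. The paper simply applies Hoeffding's inequality and an empirical-Bernstein inequality (with $\mu(\dataset)$ already appearing in the variance slot and linear constant $2$) simultaneously, budgeting so that both hold with probability $\geq 1-\delta/4$:
\begin{align*}
|\mu(\dataset)-\mu| \leq \sqrt{\frac{\ln(8/\delta)}{2m}}, \qquad
|\mu(\dataset)-\mu| \leq \sqrt{\frac{2\mu(\dataset)\ln(8/\delta)}{m}} + \frac{2\ln(8/\delta)}{m}.
\end{align*}
The claimed $\varepsilon_T$ is then just an upper bound on the \emph{minimum} of these two: when $\mu(\dataset)\leq 1/4$ it coincides with the Bernstein bound, and when $\mu(\dataset)>1/4$ it equals the Hoeffding bound plus the nonnegative linear term. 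So the $\min\{\mu(\dataset),1/4\}$ structure arises not from bounding the true variance $\mu(1-\mu)\leq\min\{\mu,1/4\}$ and then performing your self-referential substitution $\mu\to\mu(\dataset)$, but simply from selecting whichever of the two concentration bounds is tighter. Your alternative route via Maurer--Pontil is essentially what the paper invokes for the second inequality, but on its own it does not explain the $1/4$ cap; you still need the separate Hoeffding bound for that. The advantage of the paper's framing is that it sidesteps the implicit-inequality step and the $2\sqrt{ab}\leq a+b$ manipulations entirely; the advantage of your primary route is that it makes transparent where the constant $2$ in the linear term originates.
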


\texttt{boundTarget}  
returns the value $\varepsilon_T$ as defined in Lemma~\ref{thm:qualestimatemu}, which requires the values $\mu(\dataset)$, $\delta$, and $m$.

For 
\texttt{boundStatistic}, 
the computation of the upper bound $\varepsilon$ to the maximum deviation between the observed pattern qualities and their expectations is more involved than in the conditional case. We show that the quality $\tqual$ of a pattern $\patt$, which depends on several unknown quantities (see Section~\ref{sec:sig_sub}), can be sharply estimated from a dataset $\dataset$ \emph{provided} that $\mu$ is known, which is \emph{not} the case for the unconditional distribution. We then show that using $\mu(\dataset)$ in place of $\mu$ provides a good estimate of $\tqual$, and prove a probabilistic upper bound on the difference between the two estimates. 

First, we introduce the function family that we use to analyze the supremum deviation of the empirical quality of patterns from the dataset $\dataset$.  
We define the family of functions $g^*_\patt : \X \times \{0,1\} \rightarrow [-\mu , 1 - \mu ]$, where $g^*_\patt$ is defined as
$g^*_\patt(s,\ell) = f_\patt(s) ( \ell - \mu )$.  Note that $g^*_\patt(s,\ell) = f_\patt(s) ( \ell - \mu )$ corresponds to the function $g_\patt(s,\ell)$ used by the \algname\ statistic  where the unknown value $\mu$ is used in place of its estimate $\mu(\dataset)$ obtained from dataset $\dataset$.
Define the estimator $\squalp$ of the quality $\tqual$ of $\patt$ from $\dataset$ as the average of $g^*_\patt$ over $\dataset$, that is  
\begin{align*}
\squalp = \frac{1}{m} \sum_{i=1}^m g^*_\patt(s_i,\ell_i) .
\end{align*}
Note that $\E_{\dataset}\sqpars{\squalp} = \tqual$, as $\squalp$ is an unbiased estimator  of $\tqual$. However, $\squalp$ depends on the unknown quantity $\mu$.
Even if $\squalp \neq \qualp$ (since $\mu(\dataset)$ may be $\neq \mu$), \algnameu\ exploits the fact that $\mu(\dataset)$ is sharply concentrated around $\mu$, as proved in Lemma~\ref{thm:qualestimatemu}. This implies that the maximum deviation $\sup_{\patt \in \lang}|\squalp - \qualp|$ for all patterns $\patt \in \lang$ can be sharply estimated from $\dataset$.
To this aim, we prove the following result.
\begin{theorem}
\label{thm:qualestimate}
Let $\dataset$ be a collection of $m$  samples taken i.i.d. from $\probdist$. 
For $\delta \in (0,1)$, with probability $\geq 1 - \delta/4$ it holds
\begin{align*}
\abs{ \squalp - \qualp } \leq \varepsilon_T \freqp , \forall \patt \in \lang .
\end{align*}
\end{theorem}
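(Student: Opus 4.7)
The plan is to reduce the claim to a direct algebraic identity plus a single application of \Cref{thm:qualestimatemu}. First I would expand the two estimators using their definitions:
\[
\squalp - \qualp = \frac{1}{m}\sum_{i=1}^m f_\patt(s_i)(\ell_i - \mu) - \frac{1}{m}\sum_{i=1}^m f_\patt(s_i)(\ell_i - \mu(\dataset)).
\]
The $\ell_i$ terms cancel pairwise, leaving a factor $(\mu(\dataset) - \mu)$ multiplied by the average of the indicators $f_\patt(s_i)$, which by definition equals $\freqp$. Hence the key identity is the pointwise (deterministic, for every $\patt \in \lang$ simultaneously) relation
\[
\squalp - \qualp = (\mu(\dataset) - \mu)\,\freqp,
\]
so that $|\squalp - \qualp| = |\mu(\dataset) - \mu|\,\freqp$, with the same scalar factor $|\mu(\dataset) - \mu|$ controlling the gap for \emph{all} patterns at once.

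Second, I would invoke \Cref{thm:qualestimatemu}, which ensures that with probability at least $1 - \delta/4$ over the draw of the i.i.d.\ dataset $\dataset$ the bound $|\mu(\dataset) - \mu| \leq \varepsilon_T$ holds, with $\varepsilon_T$ as defined there. Multiplying both sides of this scalar inequality by the nonnegative quantity $\freqp$, and combining it with the identity above, yields $|\squalp - \qualp| \leq \varepsilon_T\,\freqp$ uniformly for every $\patt \in \lang$ on the same high-probability event, which is exactly the claim.

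There is essentially no obstacle here: because the pattern index $\patt$ only enters through the factor $\freqp$ (and not through the deviation $\mu(\dataset) - \mu$), no union bound over $\lang$ is needed, and the single high-probability event from \Cref{thm:qualestimatemu} is already uniform over patterns. The only points to double-check are that $\freqp \geq 0$ (immediate, since $f_\patt \in \{0,1\}$) and that the cancellation of the $\ell_i$ terms is clean, both of which are routine.
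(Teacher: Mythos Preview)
Your proposal is correct and follows essentially the same approach as the paper: expand the difference of the two estimators, cancel the $\ell_i$ terms to get the deterministic identity $\squalp - \qualp = (\mu(\dataset) - \mu)\freqp$, then apply \Cref{thm:qualestimatemu} once. Your added remark that no union bound over $\lang$ is needed (since the single event from \Cref{thm:qualestimatemu} already controls all patterns simultaneously) is a nice clarification that the paper leaves implicit.
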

\begin{proof}
Using Lemma \ref{thm:qualestimatemu}, $\forall \patt \in \lang$ it holds with prob. $\geq 1 - \delta/4$ 
\begin{align*}
\abs{ & \squalp - \qualp } 
= \left\lvert \frac{1}{m} \sum_{i=1}^m g^*_\patt(s_i,\ell_i) - \frac{1}{m} \sum_{i=1}^m g_\patt(s_i,\ell_i)  \right\rvert \\
%&= \left\lvert \frac{1}{m} \sum_{i=1}^m f_\patt(s_i) ( \ell_i - \mu ) - \frac{1}{m} \sum_{i=1}^m f_\patt(s_i) ( \ell_i - \mu(\dataset) )  \right\rvert \\
&= \left\lvert \frac{1}{m} \sum_{i=1}^m f_\patt(s_i) ( \mu(\dataset) - \mu ) \right\rvert 
\leq \varepsilon_T \freqp . \qedhere
\end{align*}
\end{proof}

We remark that the definition of $\squalp$ is crucial to the analysis of our algorithm \algnameu, as $\squalp$ is an average of $m$ independent random variables, while $\qualp$ is not (since it depends on $\mu(\dataset)$, that is estimated from the observations in the whole dataset). 

Recall the definition of the set $\lang^\star$ of non-significant patterns:
$\lang^\star = \brpars{\patt \in \lang : \tqual = 0 }$.
To output significant patterns while controlling the FWER below $\delta$, our goal is to bound the \emph{supremum deviation} $\sup_{\patt \in \lang^\star} \brpars{ \qualp  }$ (Eq.~\eqref{eq:supdev})   
below some value $\eta$ 
with probability at least $1-\delta$, for some $\delta \in (0,1)$, and provide in output all patterns with $\qualp \geq \eta$. 
To bound \eqref{eq:supdev}, we study the surrogate quantity
$\sup_{\patt \in \lang^\star} \brpars{ \squalp  } $,  
as \Cref{thm:qualestimate} guarantees a small bound on $\abs{ \squalp - \qualp }$.
Consider the collection 
$\resset = \brpars{\dataset^\star_1 , \dots, \dataset^\star_c}$ of $c \geq 1$ i.i.d. resampled datasets computed by \algname\ (line~\ref{line4}). 
We prove the following result. 
\begin{restatable}{theorem}{thmboundsupdev}
\label{thm:bound_dev}
Let $\dataset$ be a dataset of $m$ samples taken i.i.d. from a distribution $\probdist$,
and $\resset$ a collection of $c \geq 1$ i.i.d. resamples of the target labels of $\dataset$. 
For any $\delta \in (0,1)$, define $\nu_T \geq \mu (1 - \mu)$, $\nu \geq \nu_T \sup_{\patt \in \lang^\star} \brpars{ \E_\dataset\sqpars{ \freqp } }$, and  $\varepsilon$ as
\begin{align*}
& \tilde{d}(\resset , \check{\mu}) = \frac{1}{c} \sum_{j=1}^{c} \sup_{\patt \in \lang } \bigl\{ \hsqual{\patt}{\dataset^\star_j}{\check{\mu}} \bigr\}  \\
& \hat{r} = \tilde{d}(\resset , \check{\mu})  +
\sqrt{ \frac{ \unionbdelta }{ 2cm } } \\
& \hat{d} = \hat{r} + \sqrt{ \pars{\frac{2 \nu_T \unionbdelta }{ m }}^2 + \frac{ 2 \hat{r} \unionbdelta }{ m }} + \frac{2 \nu_T \unionbdelta }{ m } \\
& \varepsilon \doteq \hat{d} + \sqrt{\frac{2 \unionbdelta
    \left( \nu + 2\hat{d} \right)}{m}}
        + \frac{ \unionbdelta}{3m} \numberthis \label{eq:epsdev} . 
\end{align*}
With probability at least $1-\delta$ over the choice of $\dataset$ and $\resset$ it holds
$\sup_{\patt \in \lang^\star} \brpars{ \squalp  } \leq \varepsilon $.
\end{restatable}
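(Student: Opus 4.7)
The plan is to chain four concentration and coupling steps, each contributing one of the correction terms appearing in $\varepsilon$, and then to union-bound the failure events at $\delta/4$ apiece together with the $\delta/4$ budget already used by $\varepsilon_T$ in \Cref{thm:qualestimatemu}. The starting observation is that for every null pattern $\patt \in \lang^\star$ the definition of $\tqual$ gives $\E_{(s,\ell)\sim\probdist}[g^*_\patt(s,\ell)] = 0$, so $\sup_{\patt \in \lang^\star}\squalp$ is the supremum of an empirical process of centered i.i.d.\ bounded random variables, amenable to Bousquet/Bernstein-type machinery.

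First, I would apply Bousquet's inequality for suprema of empirical processes to $\sup_{\patt \in \lang^\star}\squalp$. Each $g^*_\patt$ is bounded in $[-1,1]$, and for a null pattern the independence of $f_\patt(s)$ and $\ell$ gives variance at most $\mu(1-\mu)\E[f_\patt(s)] \leq \nu$ by the definition of $\nu$ and since $\nu_T \geq \mu(1-\mu)$. Bousquet's inequality then yields, with probability $\geq 1-\delta/4$,
\[
\sup_{\patt \in \lang^\star}\squalp \leq \E_\dataset\!\Bigl[\sup_{\patt \in \lang^\star}\squalp\Bigr] + \sqrt{\frac{2 \ln(4/\delta)(\nu + 2 B)}{m}} + \frac{\ln(4/\delta)}{3m},
\]
for any high-probability upper bound $B$ on $\E_\dataset[\sup_{\patt \in \lang^\star}\squalp]$. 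Substituting $\hat{d}$ for $B$ reproduces exactly the last line of the definition of $\varepsilon$.

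Second, I would bound $\E_\dataset[\sup_{\patt \in \lang^\star}\squalp]$ in terms of the resampled quantity via a coupling argument. On the event $\{\check{\mu} \leq \mu \leq \hat{\mu}\}$ from \Cref{thm:qualestimatemu}, for each null pattern the labels satisfy $\ell_i \mid s_i \sim Bern(\mu)$ (by independence of $f_\patt(s)$ and $\ell$), while the resampled labels are $\xi_{i,j} \sim Bern(\hat{\mu})$; a standard monotone coupling gives $\ell_i \leq \xi_{i,j}$ almost surely, and combined with $\check{\mu} \leq \mu$ this implies $\ell_i - \mu \leq \xi_{i,j} - \check{\mu}$ pointwise, so
\[
\E_\dataset\!\Bigl[\sup_{\patt \in \lang^\star}\squalp\Bigr] \leq \E\!\Bigl[\sup_{\patt \in \lang}\hsqual{\patt}{\dataset^\star}{\check{\mu}}\Bigr].
\]
The right-hand side is then controlled using the $c$ i.i.d.\ resampled datasets: Hoeffding across the $c$ summands of $\tilde{d}(\resset,\check{\mu})$ contributes the $\sqrt{\ln(4/\delta)/(2cm)}$ term that turns $\tilde{d}(\resset,\check{\mu})$ into $\hat{r}$, and a Bernstein-type inversion with per-sample variance proxy $\nu_T$ turns $\hat{r}$ into $\hat{d}$; the unusual shape $\sqrt{a^2 + 2\hat{r} b} + a$ in the definition of $\hat{d}$ is precisely the closed-form solution to inverting a bound of the type $\E[X] \leq X + \sqrt{2\,\E[X]\,b} + a$.

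The main obstacle will be the coupling/domination step. Naively one would like the true labels to behave like i.i.d.\ $Bern(\mu)$ draws, but this only holds \emph{per null pattern}, not jointly across $\lang^\star$. To make the inequality $\E_\dataset[\sup_{\patt \in \lang^\star}\squalp] \leq \E[\sup_{\patt \in \lang}\hsqual{\patt}{\dataset^\star}{\check{\mu}}]$ rigorous I would condition on the features $(s_1,\dots,s_m)$, construct a joint coupling that simultaneously enforces $\ell_i - \mu \leq \xi_{i,j} - \check{\mu}$ for all $i$ (using $\check{\mu} \leq \mu \leq \hat{\mu}$), and verify that this pointwise inequality is preserved when enlarging the index set from $\lang^\star$ to $\lang$ and taking the supremum over $\patt$. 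Once this is in place, the remaining work is routine concentration bookkeeping and a final union bound.
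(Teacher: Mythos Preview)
Your first and last stages (Bousquet for the outermost term, and McDiarmid/Hoeffding across the $cm$ resampled labels for the $\sqrt{\ln(4/\delta)/(2cm)}$ term in $\hat r$) match the paper. The gap is in the middle, and it is the step you yourself flag as the obstacle.

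Your proposed resolution---conditioning on the features and constructing a joint coupling with $\ell_i \leq \xi_{i,j}$ a.s.\ from ``$\ell_i\mid s_i \sim Bern(\mu)$''---does not work, because that distributional claim is false. The null hypothesis for $\patt\in\lang^\star$ only says that the \emph{indicator} $f_\patt(s)$ is independent of $\ell$; it says nothing about $\ell$ given the full feature vector $s$. In general $\Pr(\ell=1\mid s_i)$ depends on $s_i$ and can exceed $\hat\mu$, so no monotone coupling with $\xi_{i,j}\sim Bern(\hat\mu)$ can force $\ell_i\leq\xi_{i,j}$ pointwise. The paper never couples the true labels to the resamples. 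Instead it (i) uses the definition of $\lang^\star$ to identify $\E_{\dataset}[\sup_{\patt\in\lang^\star}\squalp]$ with the expectation of the same supremum when the labels are i.i.d.\ $Bern(\mu)$ resamples (at the \emph{true} $\mu$), and only then (ii) couples $Bern(\mu)$ resamples with $Bern(\hat\mu)$ resamples via $\hat\xi_{i,j}=\max\{\xi_{i,j},v_{i,j}\}$, which is where $\check\mu\leq\mu\leq\hat\mu$ is actually used.

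The second gap is the origin of $\hat d$. You describe it as a ``Bernstein-type inversion with per-sample variance proxy $\nu_T$,'' but the paper's argument is different and more specific: it shows that the conditional expectation $Z=\E_{\T^\star}[\sup_{\patt\in\lang}\squal{\patt}{\dataset^\star}\mid\A]$ is a \emph{self-bounding} function of the features (verifying $0\le Z-Z_j\le \mu(1-\mu)/m$ and $\sum_j(Z-Z_j)\le Z$), which gives the sub-Gaussian lower-tail $\Pr(\E_\A[Z]-Z\ge q)\le\exp(-mq^2/(2\nu_T\E_\A[Z]))$. Inverting \emph{this} inequality for $\E_\A[Z]$ is what produces the particular form of $\hat d$, and that inversion is the ``fixed point'' shape you noticed. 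Without the self-bounding step you have no bound relating the fully averaged $Y=\E_\dataset[\cdot]$ (needed in Bousquet) to the feature-conditional quantity $\tilde d(\resset,\check\mu)$ that the algorithm actually computes.
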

We use the following simple upper bound for both $\nu_T$ and $\nu$:
$\nu_T = \nu = \sup_{|x - \mu(\dataset)| \leq \varepsilon_T} x(1-x) $, 
and note that while slightly more refined bounds are possible, we omit them to improve readability. 

Note that Theorem~\ref{thm:bound_dev} does not require the knowledge of the unknown parameter $\mu$, but only of an upper bound $\hat{\mu}$ and of a lower bound $\check{\mu}$.
Theorem~\ref{thm:bound_dev} allows to implement the procedure \texttt{boundStatistic} for the unconditional setting: 
\texttt{boundStatistic} 
computes $\varepsilon$ as in Eq.~\eqref{eq:epsdev}. Analogously to the conditional case, evaluating $\varepsilon$ requires to compute $\sup_{\patt \in \lang } \bigl\{ \hsqual{\patt}{\dataset^\star_j}{\check{\mu}} \bigr\}$, which is costly, but only needs to be performed on $c$ resampled datasets, and $c$ is in practice small (as we show in Section~\ref{sec:experiments}).

By combining the results in this section, we prove the guarantees provided by \algname\ for the task of finding significant patterns when unconditional testing is used. In particular, the following Corollary proves that,
when \texttt{boundStatistic} returns the value $\varepsilon$ defined in Eq.~\ref{eq:epsdev}, then the output set $O$ of significant patterns returned by \algnameu\ has FWER bounded by the user-defined parameter $\delta$.

\begin{corollary}
\label{cor:alg_uncond}
Fix $\delta \in (0,1)$ and $c \geq 1$. 
Let $O$ be the output of \algname\ with input parameters $\lang$, $\dataset$, $c$, $\delta$, and let 
$\varepsilon$ as in Eq.~\eqref{eq:epsdev}. 
Then, the set $O$ has FWER $\leq \delta$ under the unconditional null distribution. 
\end{corollary}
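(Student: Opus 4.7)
The plan is to combine \Cref{thm:bound_dev} with \Cref{thm:qualestimate} to show that, on the high-probability event bounding the supremum of $\squalp$ over $\lang^\star$, no pattern from $\lang^\star$ can meet the output criterion $\qualp \geq \varepsilon + \varepsilon_T \freqp$ of line~\ref{line8}, so that $O \cap \lang^\star = \emptyset$.

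First I would fix $\delta \in (0,1)$ and $c \geq 1$ as in the hypothesis and set $\varepsilon$ to the value returned by \texttt{boundStatistic} in \algnameu, namely Eq.~\eqref{eq:epsdev}, together with $\nu_T = \nu = \sup_{|x - \mu(\dataset)| \leq \varepsilon_T} x(1-x)$ as suggested in the text right after the theorem. Let $A$ be the event $\sup_{\patt \in \lang^\star} \squalp \leq \varepsilon$; by \Cref{thm:bound_dev}, $\Pr(A) \geq 1 - \delta$ over the randomness of $\dataset$ and $\resset$. A key observation for the probability bookkeeping is that the data-driven choices $\hat{\mu}$, $\check{\mu}$, $\nu_T$ are valid (i.e., $\check{\mu} \leq \mu \leq \hat{\mu}$ and $\nu_T \geq \mu(1-\mu)$) exactly when the concentration $|\mu(\dataset) - \mu| \leq \varepsilon_T$ from \Cref{thm:qualestimatemu} holds; this is one of the four $\delta/4$ tail events already absorbed into the $\ln(4/\delta)$ factors inside $\varepsilon$, so event $A$ implies this concentration at no additional probability cost.

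Next I would invoke \Cref{thm:qualestimate}: on the very same event $A$, the uniform estimate $|\squalp - \qualp| \leq \varepsilon_T \freqp$ holds for every $\patt \in \lang$. This is the bridge from the idealized unbiased estimator $\squalp$ used inside \Cref{thm:bound_dev} to the empirical estimator $\qualp$ actually computed by the algorithm in line~\ref{line8}. Chaining the two bounds on $A$, for every $\patt \in \lang^\star$,
\begin{equation*}
\qualp \;\leq\; \squalp + \varepsilon_T \freqp \;\leq\; \varepsilon + \varepsilon_T \freqp,
\end{equation*}
so none of the null patterns can strictly exceed the threshold; the boundary-equality case is measure-zero and can alternatively be absorbed by an arbitrarily small slack in $\varepsilon$. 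Hence $O \cap \lang^\star = \emptyset$ on $A$, and therefore $\Pr(O \cap \lang^\star \neq \emptyset) \leq \Pr(A^c) \leq \delta$, which is the claimed FWER guarantee.

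The only delicate step, and the main obstacle worth highlighting, is precisely not to double-count the $\delta/4$ budget that \Cref{thm:qualestimatemu}, \Cref{thm:qualestimate}, and \Cref{thm:bound_dev} all charge against the same concentration of $\mu(\dataset)$ around $\mu$: a naive union bound of $\delta + \delta/4$ would spoil the target FWER. Once one recognizes that \Cref{thm:bound_dev}'s $\ln(4/\delta)$ already budgets that event internally, so that the conclusion of \Cref{thm:qualestimate} comes for free on $A$, the proof reduces to the two-line inequality displayed above.
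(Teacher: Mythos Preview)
Your proposal is correct and matches the paper's intended argument (the paper states \Cref{cor:alg_uncond} as an immediate consequence of \Cref{thm:bound_dev} and \Cref{thm:qualestimate}, without a separate proof). The only phrasing I would tighten: you write that ``event $A$ implies this concentration'' and later that the bound of \Cref{thm:qualestimate} holds ``on the very same event $A$'', but the conclusion $\sup_{\patt\in\lang^\star}\squalp\le\varepsilon$ does not \emph{logically} imply $|\mu(\dataset)-\mu|\le\varepsilon_T$; rather, the proof of \Cref{thm:bound_dev} works on the intersection $B$ of four $\delta/4$-events (one of which is precisely the $\mu$-concentration of \Cref{thm:qualestimatemu}), and it is on $B$, with $\Pr(B)\ge 1-\delta$, that both the supremum bound and the conclusion of \Cref{thm:qualestimate} hold simultaneously. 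With $A$ replaced by $B$ your chain $\qualp\le\squalp+\varepsilon_T\freqp\le\varepsilon+\varepsilon_T\freqp$ goes through verbatim, and your key observation about not double-counting the $\delta/4$ budget is exactly the point.
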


\textit{Power analysis.} The following result provides guarantees on the \emph{power} of \algnameu, the version of \algname\ that uses unconditional testing, building on the probabilistic upper bound to $\tilde{d}(\resset , \check{\mu})$ 
\ifextversion
proved in \Cref{sec:app_power}, 
\else
proved in Appendix~A.5, 
\fi
and on concentration bounds based on the pseudodimension of the language of subgroups.

\begin{theorem}
\label{thm:power_unconditional}
Fix $\delta \in (0,1)$ and $c , z \geq 1$. 
Let $O$ be the output of \algname\ with input parameters $\lang$, $\dataset$, $c$, and $\delta$,
where $\lang$ is the language of subgroups composed by conjunctions with at most $z$ conditions over $d$ continuous features. 
Define 
$\hat{\omega} = \hat{\mu} (1 - \check{\mu}) \sup_{\patt \in \lang} \freqp $, 
and let $\varepsilon$ be defined as in \Cref{thm:bound_dev}, where $\hat{r}$ is replaced by
\begin{align*}
& \hat{r} = \sqrt{ \frac{ 2 \hat{\omega} z \ln(\frac{e^3 m^{2} d}{4 z^3}) }{ m } } + \sqrt{ \frac{ \ln(\frac{3}{\delta}) }{ 2cm } } + \frac{ z \ln(\frac{e^3 m^{2} d}{4 z^3}) }{ 3m } +
\sqrt{ \frac{ \unionbdelta }{ 2cm } } . %\\
%& \hat{d} = \hat{r} + \sqrt{ \pars{\frac{2 \nu_T \unionbdelta }{ m }}^2 + \frac{ 2 \hat{r} \unionbdelta }{ m }} + \frac{2 \nu_T \unionbdelta }{ m } \\
%& \varepsilon \doteq \hat{d} + \sqrt{\frac{2 \unionbdelta
%    \left( \nu + 2\hat{d} \right)}{m}}
%        + \frac{ \unionbdelta}{3m} \numberthis \label{eq:epsdev} . 
\end{align*}
Then, define $\tfreq = \E_\dataset[\freqp]$. With probability at least $1 - \delta$, $O$ contains \emph{all} patterns $\patt$ with quality $\tqual$ satisfying
\begin{align*}
\tqual \geq \varepsilon + 2 \varepsilon_T  \Biggl( \mathsf{f}_{\patt} + \sqrt{ \frac{ z \ln \bigl( \frac{2ed}{z} \bigr) + \ln \bigl( \frac{3}{\delta} \bigr) }{ 2m } } \Biggr) + \sqrt{ \frac{ z \ln \bigl( \frac{2ed}{z} \bigr) + \ln \bigl( \frac{3}{\delta} \bigr) }{ 2m } } .
\end{align*}
\end{theorem}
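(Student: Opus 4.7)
The plan is to show that under a good event of probability at least $1-\delta$, every pattern $\patt$ with $\tqual$ meeting the stated threshold passes the acceptance test $\qualp \geq \varepsilon + \varepsilon_T \freqp$ (line~\ref{line8}) and is therefore placed in $O$. I would split $\delta$ via a union bound into three slices of mass $\delta/3$. The first slice supports \Cref{thm:bound_dev}, giving $\sup_{\patt \in \lang^\star} \squalp \leq \varepsilon$. The twist relative to \Cref{thm:bound_dev} is that the empirical $\tilde{d}(\resset,\check\mu)$ appearing inside $\hat{r}$ is replaced by a deterministic upper bound: invoking the probabilistic upper bound on $\tilde{d}(\resset,\check\mu)$ proved in the appendix produces exactly the terms $\sqrt{2\hat\omega z \ln(\cdot)/m}$, $z\ln(\cdot)/(3m)$, and $\sqrt{\ln(3/\delta)/(2cm)}$ composing the modified $\hat{r}$, where the Bernstein variance proxy $\hat\omega$ captures the per-sample variance of the centered indicators on resampled datasets.

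For the remaining two slices of $\delta$, I would use standard pseudodimension-based uniform convergence applied to the families $\{f_\patt : \patt \in \lang\}$ and $\{g^*_\patt : \patt \in \lang\}$, both of which inherit pseudodimension $O(z\log(ed/z))$ from the subgroup language. This gives, with probability at least $1-\delta/3$ each, $\sup_{\patt \in \lang}|\freqp - \tfreq| \leq \Delta$ and $\sup_{\patt \in \lang}|\squalp - \tqual| \leq \Delta$ where
\[
\Delta \doteq \sqrt{ \frac{z \ln(2ed/z) + \ln(3/\delta)}{2m} }.
\]
On the intersection of the three events, pick any $\patt$ with $\tqual$ at least the stated threshold. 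By \Cref{thm:qualestimate}, $\qualp \geq \squalp - \varepsilon_T \freqp$; combining with the uniform-convergence bounds yields $\qualp \geq \tqual - \Delta - \varepsilon_T(\tfreq + \Delta)$, using $\freqp \leq \tfreq + \Delta$ to absorb the $\varepsilon_T \freqp$ term on the acceptance side. The assumed lower bound $\tqual \geq \varepsilon + 2\varepsilon_T(\tfreq + \Delta) + \Delta$ then gives $\qualp \geq \varepsilon + \varepsilon_T \freqp$, i.e., $\patt \in O$, as desired.

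The main obstacle I expect is the first step: reproducing the precise form of the modified $\hat{r}$, in particular matching the logarithmic factor $\ln(e^3 m^2 d/(4z^3))$ and the Bernstein variance $\hat\omega = \hat\mu(1-\check\mu)\sup_\patt \freqp$. That requires a careful subgroup growth-function/pseudodimension argument feeding into a Bernstein-type bound on $\E_\resset[\tilde{d}(\resset,\check\mu)]$, together with a sub-Gaussian concentration of the $c$-sample mean $\tilde{d}(\resset,\check\mu)$ around its expectation. Once this technical lemma is in place (as is done in the appendix for the power-analysis proofs), the remainder of the argument is a routine union-bound and triangle-inequality assembly.
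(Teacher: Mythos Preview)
Your proposal is essentially the paper's own argument: split $\delta$ into thirds, use Corollary~\ref{thm:corupperbounddevsub} (the appendix bound on $\tilde{d}(\resset,\check\mu)$) for the first slice, and Proposition~\ref{thm:unifboundsfreqqual} (pseudodimension-based uniform convergence for both $\freqp$ and $\squalp$) for the other two, then assemble via $\qualp \geq \squalp - \varepsilon_T\freqp$ and the acceptance rule.

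One phrasing issue to fix: you describe the first slice as ``supporting \Cref{thm:bound_dev}, giving $\sup_{\patt\in\lang^\star}\squalp \leq \varepsilon$.'' That inequality is the FWER guarantee and is irrelevant to power. What the first slice actually buys (and what you correctly describe in the very next sentence) is that the $\varepsilon$ in the theorem statement, built from the deterministic upper bound on $\tilde d(\resset,\check\mu)$, dominates the random $\varepsilon_{\mathrm{alg}}$ actually computed by \algname\ in line~\ref{line7}. Without $\varepsilon \geq \varepsilon_{\mathrm{alg}}$, your final step ``$\qualp \geq \varepsilon + \varepsilon_T\freqp$, i.e., $\patt \in O$'' does not follow, since membership in $O$ is tested against $\varepsilon_{\mathrm{alg}}$. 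Once you state the first slice correctly, the rest of your chain $\qualp \geq \tqual - \Delta - \varepsilon_T\freqp \geq \tqual - \Delta - \varepsilon_T(\tfreq + \Delta)$ and the sufficiency of $\tqual \geq \varepsilon + \Delta + 2\varepsilon_T(\tfreq+\Delta)$ matches the paper exactly.
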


\section{Experiments}
\label{sec:experiments}

This section presents the results of our experiments.
The goal of our experimental evaluation is to assess \algname's capabilities of discovering significant patterns with high statistical power, analyzing efficiently large real-world datasets with few-shot resampling, in both conditional (\algnamec) and unconditional (\algnameu) settings.

\textit{Pattern Language.}
In our experimental evaluation we focus on the problem of discovering significant \emph{subgroups} from large real-world datasets with mixed feature types (both categorical and continuous). 
The language $\lang$ is composed of conjunctions of up to $z$ conditions on the features of the data \cite{atzmueller2015subgroup}, where $z$ is a fixed parameter (see below).
These conditions are either equalities (for categorical features), inequalities, or intervals (on continuous features). 

\textit{Datasets.}
We tested \algname\ on $12$ standard benchmarks and real-world datasets to evaluate subgroup discovery algorithms from UCI\footnote{https://archive.ics.uci.edu/}. 
The statistics of the datasets are described in \Cref{tab:datasets}.
These datasets 
cover 
a wide range of sizes, dimensionalities, and application domains. 
The column $z$ of \Cref{tab:datasets} reports the maximum number of conjunction terms for the subgroups in the language $\lang$ for each dataset.

\textit{Implementation of \algname.} 
We implemented \algname\ in \texttt{Python}. 
The code and the scripts to reproduce all experiments are available online\footnote{https://github.com/VandinLab/FSR}. 
To mine subgroups, we make use of a fast depth-first enumeration algorithm included in the library \texttt{pysubgroup}\footnote{https://github.com/flemmerich/pysubgroup}~\cite{lemmerich2019pysubgroup}.

\textit{Baselines.}
Since our algorithm \algname\ is the first algorithm that can be used for mining significant patterns with both conditional and unconditional testing, we consider different baselines for conditional testing and for unconditional testing.

For conditional testing, we compare \algnamec\ with a variant of TopKWY~\cite{pellegrina2020efficient}, the state-of-the-art method for significant pattern mining with conditional testing, based on the WY permutation testing procedure~\cite{westfall1993resampling}. The original implementation of TopKWY is only tailored to identify significant itemsets and subgraphs, and does not support subgroups from categorical and continuous features; however, we note that it is fairly simple to adapt its strategy to such case. In particular, we extended TopKWY  to identify significant subgroups by estimating the distribution of the supremum deviation using permuted datasets (instead of the $p$-values as done in the original TopKWY implementation~\cite{pellegrina2020efficient}). That is, our variant of TopKWY considers the same statistic of \algnamec\ (i.e., the supremum deviation), but instead of generating resamples and taking the average of supremum deviations as done by \algnamec, 
it efficiently computes its $\delta$-quantiles considering permutations of the labels. In addition, in our variant of TopKWY we use \texttt{pysubgroup} to mine subgroups. Note that since our variant of TopKWY and \algnamec\ share the same, equally optimized, procedure to explore the search space of the language $\lang$, all comparisons in terms of running times are fair.  For TopKWY we use $10^3$ permutations, a good trade-off in terms of running time and accuracy for estimating the $\delta$-quantile for typical values of $\delta$ (e.g., $0.05$)~\cite{llinares2015fast}. 

Regarding unconditional testing, we note that \algnameu\ is the first method to discover significant patterns in a (fully) unconditional setting.
Therefore, a baseline that may seem reasonable to control the FWER is the standard Bonferroni correction: 
each pattern $\patt$ is flagged as significant if the probability (under the null hypothesis) of observing a quality greater or equal than the one measured in the data is at most $\delta/|\lang|$, where $|\lang|$ is the size of the language $\lang$ (see also \Cref{sec:intro}). 
Note, however, that this simple method is not useful for subgroups as $|\lang|$ is infinite (e.g., as the number of inequalities over a continuous feature is unbounded)\footnote{We remark that, while there exist other correction procedures more powerful than Bonferroni (e.g., the Holm procedure), they all require to fix a finite set of hypothesis, therefore do not apply to our setting.}. 
Therefore, we compare \algnameu\ with a novel non-trivial baseline, that we call \algnameuub, that resolves this issue. 
We describe \algnameuub\ at high level and defer additional details to 
\ifextversion
\Cref{sec:baselines}.
\else
Appendix~A.6.
\fi
For \algnameuub, we use part of the concentration bounds developed for \algnameu\ to upper bound the supremum deviation in terms of its expectation (taken w.r.t. the resamples but conditionally on the transactions, 
\ifextversion
see Theorem~\ref{thm:bound_dev} and \Cref{sec:baselines}), 
\else
see Theorem~\ref{thm:bound_dev} and Appendix~A.6, 
\fi
but instead of estimating the expected supremum deviation 
with $c$ resamples (as done by \algnameu),
we compute an upper bound to it via an union bound over the (finite) number of distinct subgroups that are \emph{observed} in the data in at least one transaction, i.e., we consider the finite \emph{projection} of $\lang$ on $\dataset$, instead of \emph{all} $|\lang|$ possible subgroups. 
Note that such approach bounds the FWER while being much less conservative than Bonferroni, since it corrects for the size of the projection instead of the size of $\lang$. 
By comparing \algnameu\ to \algnameuub\ we directly evaluate the advantage of computing bounds from resampled datasets that consider dependencies among patterns, as done by \algnameu.

\textit{Experimental setup.} 
All the experiments were run on a machine with 2.30 GHz Intel Xeon CPU, 512 GB of RAM, on Ubuntu 20.04.
In all experiments we use $\delta = 0.05$ (i.e., we control the FWER below $0.05$). 
We repeated all experiments $10$ times, and report averages $\pm $ stds over the $10$ repetitions.

\begin{table}%[ht]% h asks to places the floating element [h]ere.
  \caption{Statistics of the datasets considered in our experiments.  
  $m$ is the number of transactions, $d$ is the number of features (categorical/continuous), $\mu(\dataset)$ is the fraction of transactions with target equal to $1$, $z$ is the maximum number of conjunction terms in the language $\lang$. %considered in the experiments.  
  }
\label{tab:datasets}
\center
  \begin{tabular}{lrrrrrr}
    \toprule
    $\dataset$              & $m$      & $d$ & $\mu(\dataset)$  & $z$     \\
    \midrule
    abalone     & 4177 & 1/7 & 0.663 & 5 \\
	adult     & 32561 & 8/6  & 0.241 & 5 \\
	bank     & 41188 &  10/10 & 0.113 & 3 \\
	brain-cancer     & 862 & 22/1  & 0.421 & 5 \\
	cancer-rna-seq     & 801 & 0/20531  & 0.375 & 2 \\
	covtype     & 581012 & 0/54  & 0.365 & 3 \\
	gisette     & 7000 & 0/5000 & 0.500 &  2 \\
	HIGGS     & 11000000 & 0/28 & 0.529 &  3 \\
	kdd-cup & 95370 & 73/405 & 0.050 &  2 \\
	mushroom     & 8124 & 22/0 & 0.482 & 5  \\
	SUSY     & 5000000 & 0/18 & 0.457 &  3 \\
	theorem-prover     & 3059 & 0/51 & 0.420 & 3  \\
  \bottomrule
\end{tabular}
\end{table}

\begin{figure*}[ht]
\begin{subfigure}{.247\textwidth}
  \centering
  \includegraphics[width=\textwidth]{./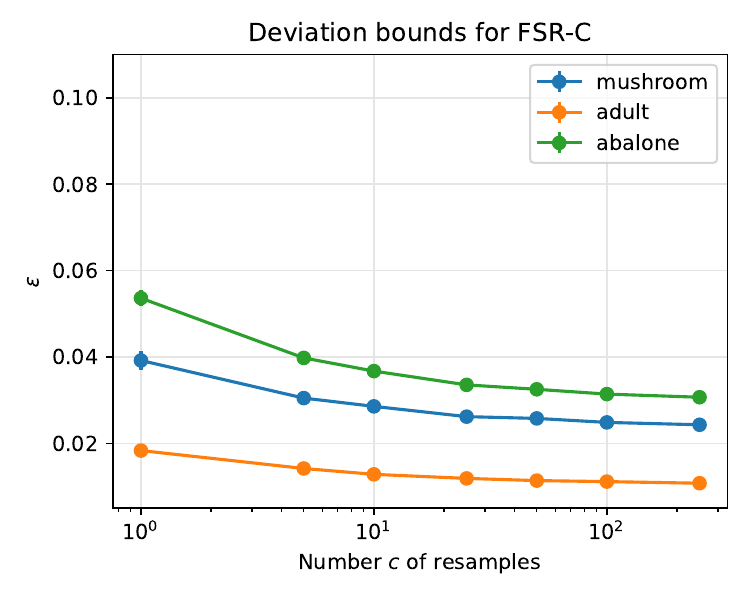}
  \caption{}
\end{subfigure}
\begin{subfigure}{.247\textwidth}
  \centering
  \includegraphics[width=\textwidth]{./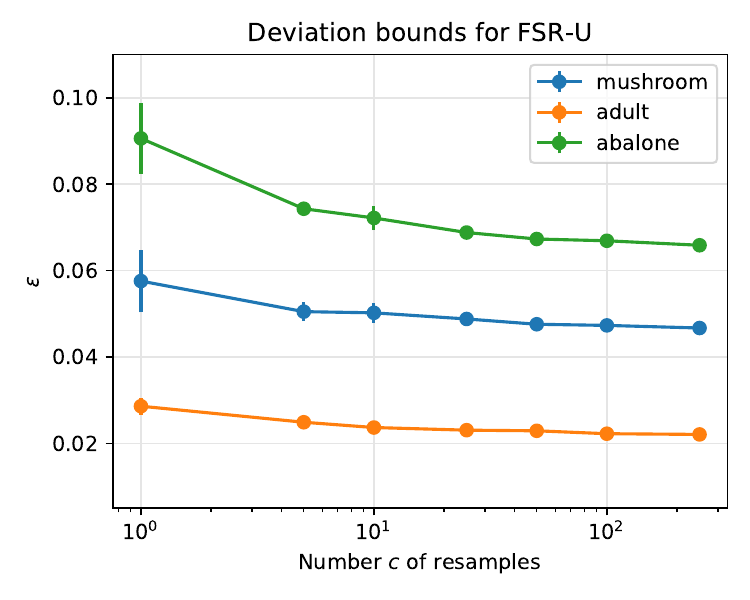}
  \caption{}
\end{subfigure}
\begin{subfigure}{.247\textwidth}
  \centering
  \includegraphics[width=\textwidth]{./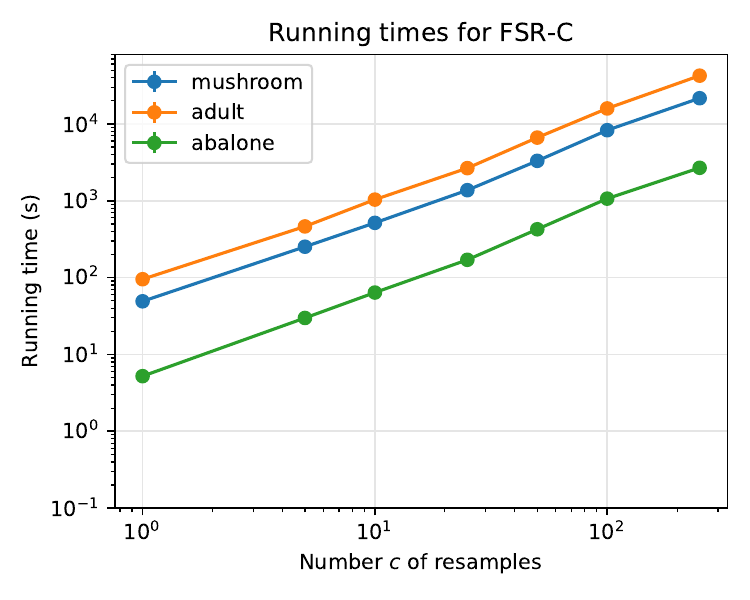}
  \caption{}
\end{subfigure}
\begin{subfigure}{.247\textwidth}
  \centering
  \includegraphics[width=\textwidth]{./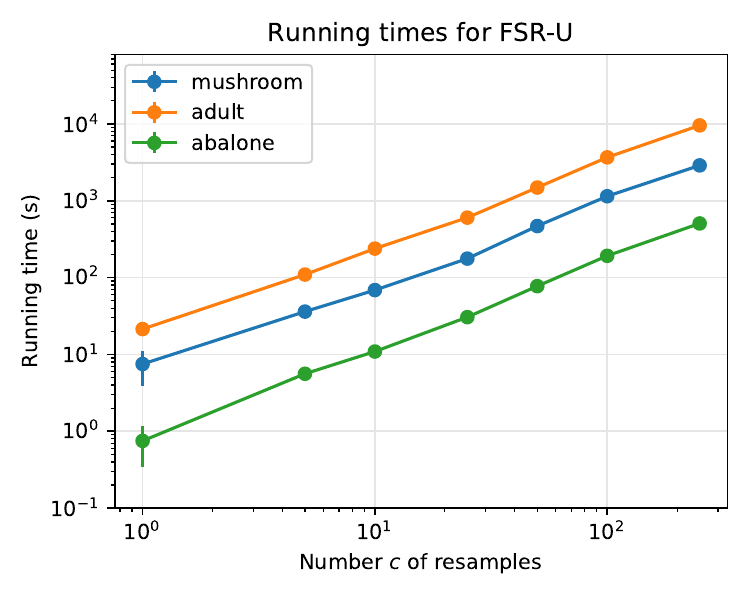}
  \caption{}
\end{subfigure}
\caption{ 
Effect of the number of resamples $c$ on the deviation bounds (a)-(b) and running times (c)-(d) for \algname\ algorithms. 
}
\label{fig:parameters}
\end{figure*}

\begin{figure*}[ht]
\centering
\begin{subfigure}{.15\textwidth}
  \centering
  \includegraphics[width=\textwidth]{./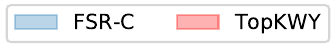}
  %\caption{}
\end{subfigure} \\
\begin{subfigure}{.247\textwidth}
  \centering
  \includegraphics[width=\textwidth]{./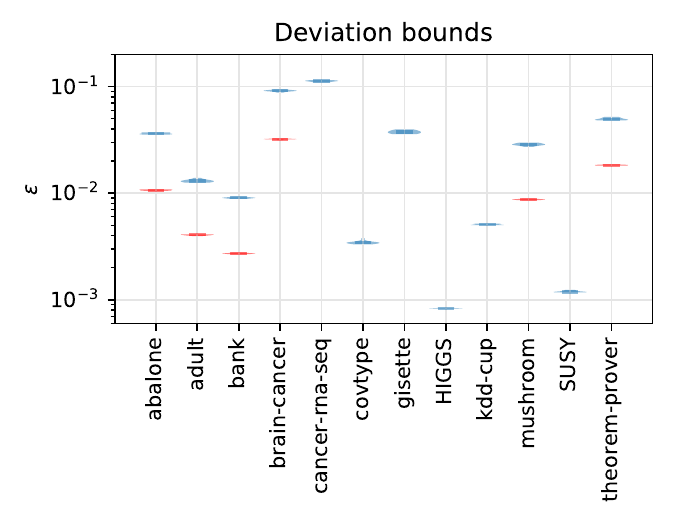}
  \caption{}
\end{subfigure}
\begin{subfigure}{.247\textwidth}
  \centering
  \includegraphics[width=\textwidth]{./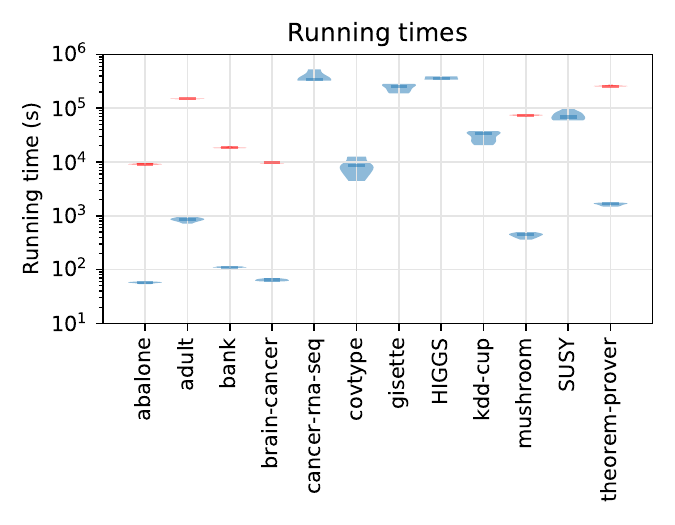}
  \caption{}
\end{subfigure}
\begin{subfigure}{.247\textwidth}
  \centering
  \includegraphics[width=\textwidth]{./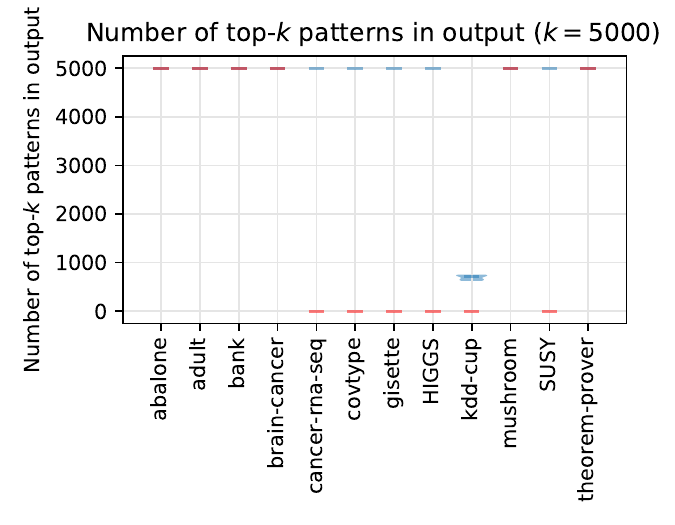}
  \caption{}
\end{subfigure}
\begin{subfigure}{.247\textwidth}
  \centering
  \includegraphics[width=\textwidth]{./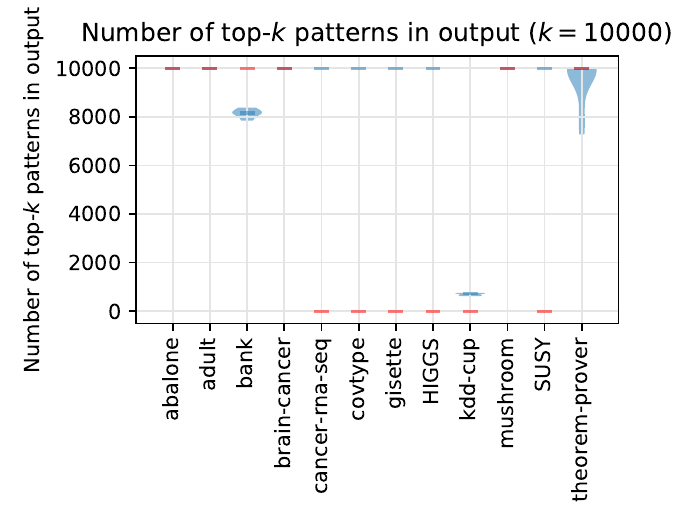}
  \caption{}
\end{subfigure}
\caption{ 
Comparison of \algnamec\ with TopKWY in terms of deviation bounds (a), running times (b), and number of results (c)-(d). 
}
\label{fig:condresults}
\end{figure*}

\begin{figure*}[ht]
\centering
\begin{subfigure}{.15\textwidth}
  \centering
  \includegraphics[width=\textwidth]{./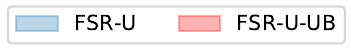}
  %\caption{}
\end{subfigure} \\
\begin{subfigure}{.247\textwidth}
  \centering
  \includegraphics[width=\textwidth]{./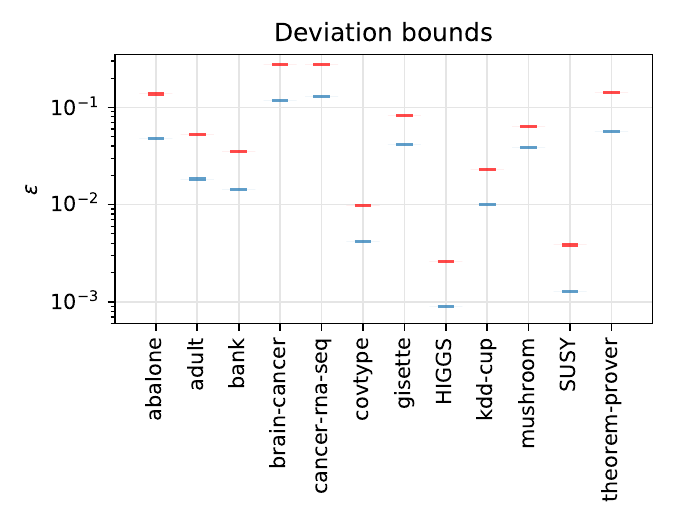}
  \caption{}
\end{subfigure}
\begin{subfigure}{.247\textwidth}
  \centering
  \includegraphics[width=\textwidth]{./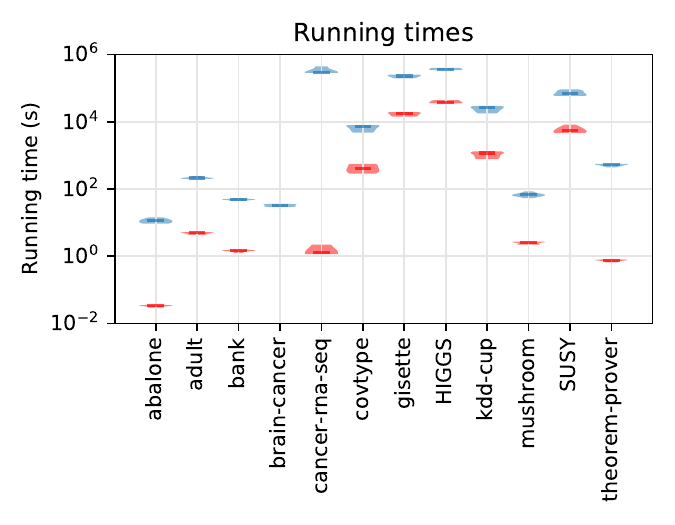}
  \caption{}
\end{subfigure}
\begin{subfigure}{.247\textwidth}
  \centering
  \includegraphics[width=\textwidth]{./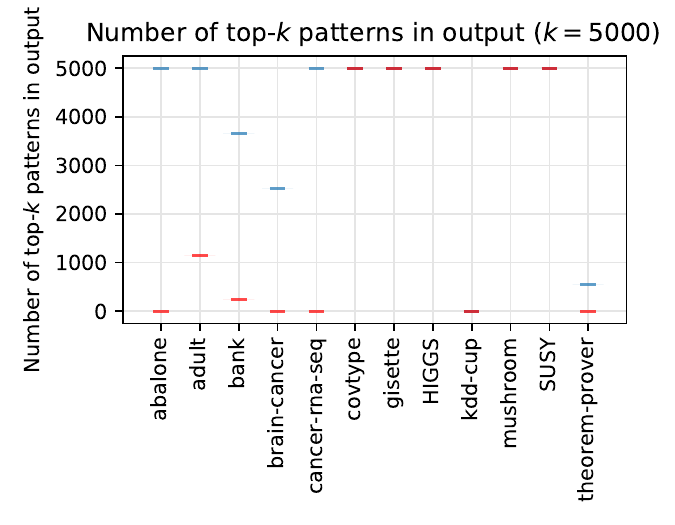}
  \caption{}
\end{subfigure}
\begin{subfigure}{.247\textwidth}
  \centering
  \includegraphics[width=\textwidth]{./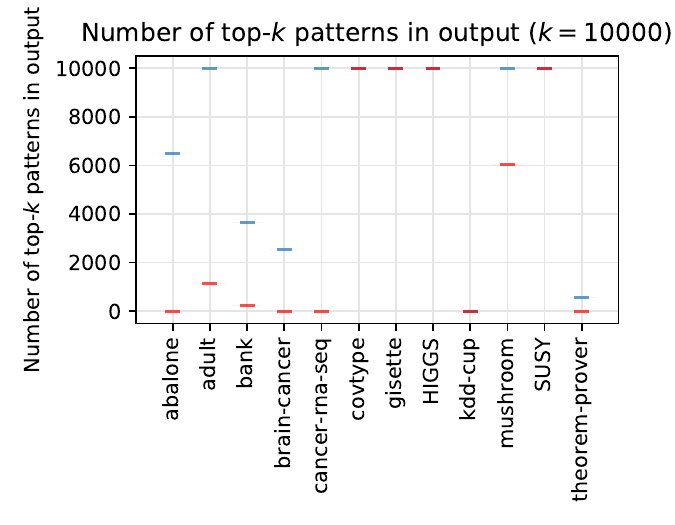}
  \caption{}
\end{subfigure}
\caption{ Comparison of \algnameu\ with the baseline \algnameuub\ 
\ifextversion
(\Cref{sec:baselines}) 
\else
(Appendix~A.6) 
\fi
in terms of deviation bounds (a), running times (b), and number of results (c)-(d).
}
\label{fig:uncondresults}
\end{figure*}

\textit{Impact of parameters on \algname.}
In the first set of experiments we evaluate the effect of the number of resamples $c$ on the deviation bound $\varepsilon$ computed by \algname\ and its running time. 
We consider both \algnamec\ and \algnameu, respectively designed to compute significant patterns with conditional and unconditional testing. 
To ease the presentation, for this first experiment we focus on $3$ of the datasets we considered; the results for the other datasets are very similar. 
In Figure~\ref{fig:parameters}-(a) we show the deviation bounds computed by \algnamec\ for different values of $c$,
while Figure~\ref{fig:parameters}-(b) is analogous for \algnameu. 
From these plots we clearly conclude that using $c=10$ resamples is sufficient to obtain a small deviation bound, and that using more than $10$ resamples is marginally beneficial as all curves flatten. 
Remarkably, for all datasets (and in particular for the larger dataset \texttt{adult}), and for both methods, even using \emph{one} resample is enough to compute a meaningful deviation bound. 
This is in striking contrast with state-of-the-art methods based on permutation testing, that instead require a number of $10^3$-$10^4$ permutations of the target labels to properly estimate the $\delta$-quantile. 
Furthermore, we observe that the deviation bounds computed by \algnamec, in the conditional setting, are smaller than the bounds computed by \algnameu, for the unconditional scenario;
this confirms the fact that the assumptions made on the process generating the data have a sensible effect on this aspect, a consequence of properly taking into account the \emph{uncertainty} of the collected data. 

Figures~\ref{fig:parameters}-(c)-(d)  show the running time of the two methods \algnamec\ and \algnameu\ as functions of $c$.
Not surprisingly, we clearly observe that the running time increases linearly with the number of resamples $c$ for both methods.
Therefore, we expect that processing a small number of resampled datasets is extremely advantageous in terms of running time. 
We also observe that \algnameu\ is faster than \algnamec\ for all values of $c$. 
This is due to the fact that \algnamec, by computing a smaller deviation bound, explores a wider portion of the pattern language $\lang$. 
In any case, using at most $10$ resamples is always feasible (since both algorithms terminate after at most $17$ minutes for these datasets).  
From these observation, we fix $c$ to $10$ for \algnamec\ and \algnameu\ in all our experiments.

\textit{Evaluation of \algnamec.}
In this experiment we report the performance of \algnamec\ to identify significant patterns with conditional testing, comparing it with a variant of TopKWY, the state-of-the-art method to identify significant patterns with permutation testing. 
In Figure~\ref{fig:condresults} we show the deviation bounds (a), the running times (b), and the number of reported results (c)-(d) for both algorithms.
From Figure~\ref{fig:condresults}-(a), we observe that the deviation bounds computed by \algnamec\ are larger than the ones computed by TopKWY. 
This is not surprising, as the deviation bound returned by TopKWY, that is based on the WY procedure, is a very accurate estimate of the optimal corrected threshold to bound the FWER (since the WY estimator converges asymptotically to it~\cite{meinshausen2011asymptotic}); 
on the other hand, \algnamec\ computes an \emph{upper bound} to such quantity with stronger probabilistic guarantees 
(i.e., the bound does not only converge \emph{asymptotically} and \emph{in expectation}, but rather holds in finite samples with high probability).
While the deviation bounds computed by TopKWY are smaller than \algnamec, from Figure~\ref{fig:condresults}-(b) we observe a significant gap in terms of running time between the two methods. 
In fact, TopKWY requires two orders of magnitude more time than \algnamec\ to compute its deviation bound; 
this is mainly due to the fact that TopKWY has to process two orders of magnitude more permutations of the target label than \algnamec. 
Note that using $10^4$ permutations (instead of $10^3$, to have a more accurate estimation of the $\delta$-quantile, as often done in practice) in TopKWY would result in an even more substantial gap.

\balance
We now compare the two methods in terms of number of reported significant patterns. 
To do so, we follow a typical subgroup discovery analysis, based on the task of identifying the $k$ most \qt{interesting} subgroups in terms of quality: 
first, we mine the set of top-$k$ subgroups with highest quality; then, we count the number of them that are flagged as significant by the two methods. 
We use $k \in \{ 5 \cdot 10^3 , 10^4\}$. 
Figures~\ref{fig:condresults}-(c)-(d) show the number of patterns that are reported in output by both methods.
We observe that, for $k=5 \cdot 10^3$, for $6$ of the $12$ datasets we considered, both algorithms output all the top-$k$ patterns, i.e., they have the same output; for such datasets TopKWY reports all top-$k$ patterns also for $k=10^4$, while  
\algnamec\ outputs the same set of results for all but two datasets, for which it reports more than $70\%$ of them. 
For the remaining $6$ datasets, which consist of the 4 largest datasets (in terms of the number $m$ of transactions) and the 2 datasets with highest number  of features, TopKWY could not complete in reasonable time (i.e., we stopped it after $10$ days), while \algnamec\ finished the analysis while returning a large number of significant results (i.e., either all the top-$k$ patterns or more than 600 of them for the \texttt{kdd-cup} dataset). 
For the most challenging datasets (\texttt{HIGGS}, with $11$ millions transactions, and \texttt{cancer-rna-seq}, with $> 10^4$ continuous features), \algnamec\ concludes after 5 days, while TopKWY would require (approximately) $1.5$ years of computation.  
Therefore, in such cases \algnamec\ \emph{enables} the analysis of these challenging instances while identifying many significant patterns. 
These results confirm that \algnamec, while providing a slightly more conservative upper bound to the supremum deviation, is still capable of discovering the same (or almost the same) most significant patterns, while being more than two orders of magnitude faster, and reporting many patterns as significant for challenging instances out of reach for the state-of-the-art.
We conclude that \algnamec\ provides an excellent trade-off between the number of patterns identified as significant and the computational requirement of the analysis.

\textit{Evaluation of \algnameu.}
We now evaluate the performance of \algnameu\ to discover significant patterns with unconditional testing.
We show the results in Figure~\ref{fig:uncondresults}. We compare
\algnameu\ with the baseline \algnameuub\ (described above)
in terms of deviation bounds (a), running time (b), and number of results (c)-(d).
From Figure~\ref{fig:uncondresults}-(a) we clearly observe that \algnameu\ computes deviation bounds that are always smaller than \algnameuub.
We conclude that processing the resamples of the target label provides much more accurate deviation bounds w.r.t. more standard techniques (i.e., a Bonferroni correction). 
This is a consequence of taking into account the dependencies among patterns when upper bounding the expected supremum deviation. 
In terms of running time, \algnameu\ always conclude in reasonable time (similarly to \algnamec), while \algnameuub\ is faster (since it does not consider any resamples). 
On the other hand, in many cases the baseline \algnameuub\ terminates quickly but without reporting anything in output: Figures~\ref{fig:uncondresults}-(c)-(d) show that for $5$ datasets it does not report significant patterns, while for the other datasets it outputs a significantly smaller amount (e.g., for \texttt{adult}, \algnameu\ finds almost an order of magnitude more results than \algnameuub). 
This experiment shows that \algnameu\ is a practical and powerful method to discover significant patterns with unconditional testing, and that it significantly improves over more standard techniques such as Bonferroni correction.

\textit{Application to Neural Network interpretation.}
In this final experiment we evaluate a practical application of \algname\ to the task of Neural Network interpretation \cite{fischer21b}.
More precisely, we consider the MNIST dataset \cite{mnistdata} and train a Convolutional Neural Network (CNN), with the goal of identifying correlations between the activation values of neurons with the predicted target. 
To do so, we evaluate the association of the activation values of neurons in a convolutional filter with a binary target, composed by drawings of digits composed by straight lines only ($1$ and $7$), versus the other digits. 
While we defer most details of this experiment to 
\ifextversion
\Cref{sec:mnistexperiment}, 
\else
Appendix~A.7, 
\fi
we observed that \algname\ successfully identifies interpretable activation patterns of several neurons, while requiring orders of magnitude less time than previous methods (as discussed previously). 

\section{Conclusions}
We presented \algname, a novel algorithm to identify statistically significant patterns with rigorous bounds on the FWER. \algname\ uses a few-shot resampling strategy, which leads to an efficient and practical approach that can be used for both conditional and unconditional testing. Our experimental evaluation shows that  \algname\ is an effective and accurate method for significant patterns discovery, while significantly reducing the computational cost of state-of-the-art multiple comparisons procedures, such as permutation testing, that hardly scale the analysis to complex languages, such as subgroups, and large datasets.
While the experiments presented in this work are focused on subgroups, we expect the relative improvements obtained by \algname\ to directly transfer to other pattern types, given the generality of our framework and of the design of our resampling procedures, which are shared by all types of patterns.

\begin{acks}
This work was supported by the \qt{National Center for HPC, Big
Data, and Quantum Computing}, project CN00000013, 
and by the 
PRIN Project n. 2022TS4Y3N - EXPAND: scalable algorithms for EXPloratory Analyses of heterogeneous and dynamic Networked Data,
funded by the
Italian Ministry of University and Research (MUR). 
\end{acks}

\newpage

\bibliographystyle{ACM-Reference-Format}
\bibliography{bibliography}

\ifextversion
\clearpage
\newpage
\appendix
% !TEX root = fewshotsignificantpatternmining.tex

\section{Appendix}

\subsection{Relation to Quality Measures}
\label{sec:qualitymeasures}
Interesting subgroups are often identified using a \emph{quality measure}, defined by combining the \emph{generality} and the \emph{unusualness} of a pattern. The \emph{generality} $\freqp$ of a pattern $\patt$ in the dataset $\dataset$ is the fraction of samples of $\dataset$ that support $\patt$
\begin{align*}
\freqp = \frac{1}{m} \sum_{i=1}^m \ind{ \patt \in s_i } = \frac{|\covp|}{m} .
\end{align*}

For any bag of samples $B \subseteq \dataset$, define the average target value $\mu(B)$ of samples $t \in B$ as
\begin{align*}
\mu(B) = \frac{1}{|B|} \sum_{(s , \ell) \in B} \ell .
\end{align*}
The \emph{unusualness} $\unusp$ of the pattern $\patt$ on the dataset $\dataset$ is defined as the difference of the target variable of the samples $\in \covp$ and the average target in the entire data $\dataset$
\begin{align*}
\unusp = \mu(\covp) - \mu(\dataset) .
\end{align*}

The $\alpha$-quality $\nqualp{\patt,\alpha}{\dataset}$ of a pattern $\patt$ on a dataset $\dataset$ is defined as 
\begin{align*}
\nqualp{\patt,\alpha}{\dataset} = \freqp^{\alpha} \unusp.
\end{align*}

Commonly used quality measures are the $1$-quality, the $\frac{1}{2}$-quality, and the $2$-quality. 

Given a subgroup $\patt$, its quality $\tqual$ can be written as 
\begin{align*}
& \tqual = \Pr_{(s,\ell) \sim \probdist} \pars{ \patt \in s \wedge \ell = 1 } -  \Pr_{(s,\ell) \sim \probdist} \pars{ \patt \in s } \Pr_{(s,\ell) \sim \probdist} \pars{ \ell = 1 }  \\ 
%& = & \Pr_{(s,\ell) \sim \probdist} \pars{ \ell = 1  | \patt \in s} \Pr_{s,\ell)}\pars{\patt \in s} -  \Pr_{(s,\ell) \sim \probdist} \pars{ \patt \in s } \Pr_{(s,\ell) \sim \probdist} \pars{ \ell = 1 }  \\
& = \Pr_{(s,\ell) \sim \probdist}\pars{\patt \in s} \left( \Pr_{(s,\ell) \sim \probdist} \pars{ \ell = 1  | \patt \in s}  -  \Pr_{(s,\ell) \sim \probdist} \pars{ \ell = 1 } \right).
\end{align*} 

Note that the generality $\freqp$ is the estimate (on dataset $\dataset$) of $\Pr_{(s,\ell) \sim \probdist}\pars{\patt \in s} $, and that the unusualness $\unusp$ is the estimate  (on dataset $\dataset$) of $\Pr_{(s,\ell) \sim \probdist} \pars{ \ell = 1  | \patt \in s}  -  \Pr_{(s,\ell) \sim \probdist} \pars{ \ell = 1 }$, where $\Pr_{(s,\ell) \sim \probdist} \pars{ \ell = 1  | \patt \in s}$ is estimated by $\mu(\covp) $ and $ \Pr_{(s,\ell) \sim \probdist} \pars{ \ell = 1 }$ is  estimated by $\mu(\dataset)$. 
This shows that the $1$-quality $\nqualp{\patt,1}{\dataset}= \freqp \unusp$ corresponds to an estimate, obtained from $\dataset$, of the quality $\tqual$ of subgroup $\patt$. With this relation in mind, we can consider mining subgroups with high $1$-quality $\nqualp{\patt,1}{\dataset}$ as an heuristic for finding significant subgroups, which ignores the random fluctuations of the estimates $\freqp$ and $\unusp$.

\subsection{Comparison with Permutation Approaches}
\label{sec:appendix_comp_perm}

In this section we provide a more detailed comparison between our few-shot approach and commonly used permutation approaches.

Our few-shot approach leverages our analytical results (e.g., Theorem~\ref{thm:upperboundestcond}) to obtain high probability bounds on the maximum deviation of patterns' quality by estimating only the \emph{expectation} of maximum deviation of patterns' quality. Estimating such expectation requires a small number $c$ of resampled datasets, such as $c=10$ that we used in our experimental evaluation.

The same approach cannot be used by permutation approaches (e.g.,~\cite{llinares2015fast,pellegrina2020efficient,terada2015high}), since they are estimating the $\delta$-quantile of the distribution of the maximum deviation, that is, the value $q$ for which the maximum deviation is below $q$ with probability $\delta$, for a (relatively) small value of $\delta$. Accurately estimating such quantile requires many more permutations than estimating the expectation (as done by our approach). For example, if only $10$ permutations (e.g., corresponding  to the value $c$ used in our experiments)  are used to estimate the $\delta$-quantile, with $\delta = 0.05$, the WY procedure returns the maximum deviation over the $10$ permutations 
(i.e., the element in position $\lceil \delta \cdot 10 \rceil = 1$ in the list of deviations, sorted in decreasing order).
This implies that, 
with probability $> \frac{1}{2}$, the FWER will not be controlled at level $\delta$ (since the probability that the deviation of one permutation will be above the $\delta$-quantile is $0.95$, and the probability that all deviations are above the $\delta$-quantile is $0.95^{10} \approx 0.599 > \frac{1}{2}$). 
Moreover, with probability $> \frac{1}{3}$, the FWER will not be controlled even at level $2\delta = 0.1$ (since $0.9^{10} \approx 0.3487 > \frac{1}{3}$).  For such a reason, previous works suggest to use \emph{at least} $10^{3}$ permutations for permutation approaches (as we do in our experiments), while $10^{4}$ is the suggested number of permutations to have a stable FWER estimation (\cite{terada2015high,llinares2015fast,pellegrina2020efficient} all use $10^{4}$).

\subsection{Proofs of \Cref{sec:algo_conditional}}
This section presents additional proofs for the results of \Cref{sec:algo_conditional}.

First, we need the following technical result. 
\begin{theorem}[McDiarmid's inequality~\cite{mcdiarmid1989method}]
  \label{thm:mcdiarmid}
  Let $\mathcal{Y}$ be a domain, and let $g : \mathcal{Y}^m \rightarrow \R$
  be a function such that, for each $i$, $1\le i\le m$, there is a
  nonnegative constant $c_i$ such that:
  \begin{equation*}%\label{eq:bounded}
    \sup_{\substack{\langle x_1,\dotsc,x_m\rangle \in
  \mathcal{Y}^m\\x_i'\in\mathcal{Y}}}
    \abs{g(x_1,\dotsc,x_m) -
    g(x_1,\dotsc,x_{i-1},x'_i,x_{i+1},\dotsc,x_m)} \le c_i .
  \end{equation*}
  Let $x_1,\dotsc,x_m$ be $m$ \emph{independent} random variables such that $\langle x_1,\dotsc,x_m\rangle \in
  \mathcal{Y}^m$. Then, for $C=\sum_{i=1}^m c_i^2$, it holds
  \[
    \Pr \Bigl( \E[g] > g(x_1,\dotsc,x_m) + t \Bigr) \le e^{-2t^2/C} .
  \]
\end{theorem}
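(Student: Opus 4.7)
The plan is to prove McDiarmid's inequality by the classical bounded-differences / Doob-martingale method: construct the Doob martingale associated with $g(X_1,\dots,X_m)$, verify that its increments are almost surely contained in conditional intervals of length $c_i$, and then invoke the Azuma--Hoeffding inequality to conclude.

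First I would set up the Doob martingale. Let $\mathcal{F}_0$ be the trivial $\sigma$-algebra and $\mathcal{F}_i = \sigma(X_1,\dots,X_i)$ for $1 \le i \le m$. Define $Z_i = \E[g(X_1,\dots,X_m) \mid \mathcal{F}_i]$, so that $Z_0 = \E[g(X_1,\dots,X_m)]$ and $Z_m = g(X_1,\dots,X_m)$. By the tower property, $(Z_i)_{i=0}^m$ is a martingale with respect to $(\mathcal{F}_i)$, with increments $D_i = Z_i - Z_{i-1}$ satisfying $\E[D_i \mid \mathcal{F}_{i-1}] = 0$.

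The key step is to show that, conditionally on $\mathcal{F}_{i-1}$, the increment $D_i$ lies almost surely in an interval of length $c_i$. Since $X_i,\dots,X_m$ are independent of $\mathcal{F}_{i-1}$, we can write, conditionally on $X_1=x_1,\dots,X_{i-1}=x_{i-1}$, the function $\phi_i(u) = \E[g(x_1,\dots,x_{i-1},u,X_{i+1},\dots,X_m)]$, so that $Z_i = \phi_i(X_i)$ and $Z_{i-1} = \E[\phi_i(X_i)]$ (the outer expectation here is with respect to $X_i$ alone). The bounded-differences hypothesis gives $|\phi_i(u) - \phi_i(v)| \le c_i$ for all $u,v$, since the difference $\phi_i(u) - \phi_i(v)$ is an expectation, over $(X_{i+1},\dots,X_m)$, of quantities of the form $g(\dots,u,\dots) - g(\dots,v,\dots)$, each bounded in absolute value by $c_i$. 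Hence $\phi_i$ takes values in an interval of length at most $c_i$, and therefore so does $D_i = \phi_i(X_i) - \E[\phi_i(X_i)]$, conditionally on $\mathcal{F}_{i-1}$.

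To finish, I would apply the Azuma--Hoeffding inequality to the martingale $(Z_i)$ with conditionally bounded increments as above. This yields, for any $t \ge 0$,
\[
\Pr(Z_0 - Z_m \ge t) \le \exp\!\left(-\frac{2t^2}{\sum_{i=1}^m c_i^2}\right),
\]
and since $Z_0 - Z_m = \E[g] - g(X_1,\dots,X_m)$ and $C = \sum_i c_i^2$, this is exactly the stated conclusion. The main conceptual obstacle is the bounded-differences verification above: independence of $X_1,\dots,X_m$ is essential, as it is what allows the conditional expectation defining $Z_i$ to decouple into a function of $X_i$ alone with $X_{i+1},\dots,X_m$ already integrated out, so that the hypothesis on $g$ translates directly into a bound on the fluctuations of $\phi_i$. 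The remaining step, Azuma--Hoeffding, itself reduces via a Chernoff-bound argument to the Hoeffding lemma stating that a mean-zero random variable supported in an interval of length $c$ has sub-Gaussian moment generating function with parameter $c^2/4$.
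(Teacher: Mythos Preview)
The paper does not actually prove this theorem: it is quoted as a known result from \cite{mcdiarmid1989method} and used as a tool in the proofs of Theorems~\ref{thm:upperboundestcond} and~\ref{thm:bound_dev}. Your proposal is the standard and correct proof of McDiarmid's inequality via the Doob martingale and Azuma--Hoeffding; the bounded-differences verification and the use of independence to decouple the conditional expectation into a function of $X_i$ alone are exactly right, so there is nothing to compare against in the paper itself.
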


\begin{proof}[Proof of \Cref{thm:upperboundestcond}]
First, we note that in the conditional setting it holds $\hat{\mu} = \check{\mu} = \bar{\mu}$. Then, from the fact $\lang^\star \subseteq \lang$, we have  
\begin{align*}
&\E_{\mathbf{v} \sim I(\bar{\mu})} \sqpars{ \sup_{\patt \in \lang^\star} \frac{1}{m} \sum_{i=1}^m f_{\patt}(s_i)( \mathbf{v}_i - \bar{\mu} ) } \\
&\leq \E_{\mathbf{v} \sim I(\bar{\mu})} \sqpars{ \sup_{\patt \in \lang} \frac{1}{m} \sum_{i=1}^m f_{\patt}(s_i)( \mathbf{v}_i - \bar{\mu} ) } 
= \E_{\resset} \bigr[ \tilde{d}(\resset, \check{\mu}) \bigl] ,
\end{align*}
therefore we focus on the concentration of $\tilde{d}(\resset, \check{\mu})$ around its expectation taken w.r.t. $\resset$. 
Our proof is based on McDiarmid's inequality (\Cref{thm:mcdiarmid}).
Define the function $g(\resset) = \tilde{d}(\resset, \check{\mu})$, and note that modifying any $\xi_{i,j}$, for any pair $i,j$, changes $g(\resset)$ by at most $1/(cm)$.
Therefore, defining $C=\sum_i \sum_j (1/(cm))^2 = 1/(cm)$, from \Cref{thm:mcdiarmid} it holds 
\begin{align*}
\Pr \Biggl( \E_{\resset} \bigr[ \tilde{d}(\resset, \check{\mu}) \bigl] > \tilde{d}(\resset, \check{\mu}) + \sqrt{ \frac{ \log \bigl( \frac{4}{\delta} \bigr) }{2cm} }  \Biggr) \leq \delta/4 .
\end{align*}
\end{proof}

\subsection{Proofs of \Cref{sec:algo_unconditional}}
\label{sec:appxproofsuncond}
This Section presents the proofs for the results of \Cref{sec:algo_unconditional}. 

\begin{proof}[Proof of Lemma \ref{thm:qualestimatemu}]
Since $\mu(\dataset)$ is the average of $m$ independent and bounded random variables, Hoeffding's and Bernstein's inequalities \cite{boucheron2013concentration} yield, respectively, that
\begin{align*}
| \mu - \mu(\dataset) | &\leq \sqrt{\frac{\ln \pars{ \frac{8}{\delta} } }{2m}} , \\
| \mu - \mu(\dataset) | &\leq \sqrt{\frac{2  \mu(\dataset) \ln \pars{ \frac{8}{\delta} } }{m}} + \frac{2 \ln \pars{ \frac{8}{\delta} } }{m} , 
\end{align*}
hold simultaneously with probability $\geq 1 - \delta/4$. 
The statement follows from the observation that their minimum is $\leq \varepsilon_T$. 
\end{proof}

\begin{proof}[Proof of \Cref{thm:bound_dev}]
Recall that   
$\resset = \brpars{\dataset^\star_1 , \dots, \dataset^\star_c}$ is a collection of $c \geq 1$ i.i.d. resampled datasets, each obtained by  resampling the target labels of $\dataset$ while maintaining the same features of $\dataset$. That is, each resampled dataset is $\dataset^\star_j = \brpars{ (s_1 , \xi_{1,j}) , \dots (s_m , \xi_{m,j}) }$, where
\begin{align*}
\xi_{i,j} \sim Bern(p) 
, \forall i \in [1,m] , \forall j \in [1,c] ,
\end{align*}
and $Bern(p)$ is the Bernoulli distribution with parameter $p$. In the proof we set $p = \mu = \E_\dataset[\mu(\dataset)]$. 
Then, we define the collection of resampled datasets 
$\resseth = \{\hat{\dataset}^\star_1 , \dots, \hat{\dataset}^\star_c\}$ similarly to $\resset$, where the parameter $\mu$ is replaced by its upper bound $\hat{\mu}$ (note that the roles of $\resset$ and $\resseth$ are swapped in the statement and in Alg.~1).

Let $Y = \E_{\dataset} \bigl[ \sup_{\patt \in \lang^\star}\{ \squalp \} \bigr]$,
and observe that it holds $\sup_{\patt \in \lang^\star} Var_\dataset(\squalp) \leq \nu$. 
We apply Bousquet's inequality (Theorem 12.5 of \cite{boucheron2013concentration}) to prove that 
\begin{align*}
\sup_{\patt \in \lang^\star}\{ \squalp \} \leq Y + \sqrt{\frac{2 \unionbdelta
    \left( \nu + 2Y \right)}{m}}
        + \frac{ \unionbdelta}{3m} ,
\end{align*} 
with probability $\geq 1 - \delta/4$. 
We now show how to upper bound $Y$. 
First, note that 
\begin{align*}
\E_{\dataset} \bigl[ \sup_{\patt \in \lang^\star}\{ \squalp \} \bigr] 
&=  \E_{\dataset^\star_j} \bigl[ \sup_{\patt \in \lang^\star}\{ \squal{\patt}{\dataset^\star_j} \} \bigr] \\
&= \E_{\resset} \biggl[ \frac{1}{c} \sum_{j=1}^c \sup_{\patt \in \lang^\star}\{ \squal{\patt}{\dataset^\star_j} \} \biggr]
\end{align*}
by definition of $\lang^\star$ and $\resset$. 
Then, it follows that 
\begin{align*}
\E_{\resset} \biggl[ \frac{1}{c} \sum_{j=1}^c \sup_{\patt \in \lang^\star}\{ \squal{\patt}{\dataset^\star_j} \} \biggr] 
\leq \E_{\resset} \biggl[ \frac{1}{c} \sum_{j=1}^c \sup_{\patt \in \lang}\{ \squal{\patt}{\dataset^\star_j} \} \biggr] ,
\end{align*}
since $\lang^\star \subseteq \lang$. 
We now prove bounds to the concentration of $\E_{\resset} \bigl[ \frac{1}{c} \sum_{j=1}^c \sup_{\patt \in \lang}\{ \squal{\patt}{\dataset^\star_j} \} \bigr] $ w.r.t. the set of features $\A$. 
Let $\dataset^\star = (\A , \T^\star)$ be a generic $\dataset^\star \in \resset$,
with $\T^\star = \{ \xi_1 , \dots , \xi_m  \}$. 
Define the random variable $Z$ as
\begin{align*}
Z = \E_{\T^\star} \bigl[ \sup_{\patt \in \lang}\{ \squal{\patt}{\dataset^\star} \} \bigr] ,
\end{align*}
where the expectation is w.r.t. $\T^\star$, conditioning on $\A$. 
To show the concentration of $Z$ w.r.t. its expectation $\E_\A[Z]$, 
we prove that $Z$ is a self-bounding function \cite{boucheron2013concentration}.
Define the random variable $Z_j$, for $j \in [1,m]$, as
\begin{align*}
Z_j = \E_{\T^\star} \biggl[ \sup_{\patt \in \lang} \biggl\{ \frac{1}{m} \sum_{i=1 , i \neq j}^m f_\patt(s_i) (\xi_i - \mu) \biggr\} \biggr] 
\end{align*}
First, note that $Z \geq 0$:
\begin{align*}
Z = \E_{\T^\star} \bigl[ \sup_{\patt \in \lang}\{ \squal{\patt}{\dataset^\star} \}  \bigr] 
 \geq  \sup_{\patt \in \lang} \bigl\{ \E_{\T^\star} \bigl[ \squal{\patt}{\dataset^\star}  \bigr] \bigr\} \geq 0. 
\end{align*}
Then, we prove that $Z - Z_j \geq 0$. Define $\hat{f}$ as one of the functions that attain the supremum for $Z_j$, for any choice of $\T^\star$. 
We have 
\begin{align*}
Z_j - Z 
& \leq Z_j - \E_{\T^\star} \biggl[ \frac{1}{m} \sum_{i=1}^m \hat{f}(s_i)(\xi_i - \mu) \biggr] \\
& = \E_{\xi_j} \biggl[ - \frac{1}{m} \hat{f}(s_j)(\xi_j - \mu) \biggr] = 0.
\end{align*}
We now show that $Z - Z_j \leq \mu (1 - \mu)/m$. We have
\begin{align*}
Z & \leq \E_{\T^\star} \biggl[  \sup_{\patt \in \lang} \biggl\{ \frac{1}{m} \sum_{i=1 , i \neq j} f_\patt (s_i)( \xi_i - \mu ) \biggr\} \\ &\; \; \; \; \; \; \;  + \sup_{\patt \in \lang} \brpars{ \frac{1}{m} f_\patt (s_j)( \xi_j - \mu )  } \biggr] \\
 & \leq Z_j + \frac{ \mu (1 - \mu) }{m} .
\end{align*}
We now prove that $\sum_{j=1}^m Z - Z_j \leq Z$.
It holds 
\begin{align*}
\sum_{j=1}^m Z_j
&\geq \E_{\T^\star} \biggl[ \sup_{\patt \in \lang} \biggl\{ \frac{1}{m} \sum_{j=1}^m \sum_{i=1 , i \neq j}^m f_\patt(s_i) (\xi_i - \mu) \biggr\} \biggr] \\
&= \E_{\T^\star} \biggl[ \sup_{\patt \in \lang} \biggl\{ \frac{m-1}{m} \sum_{i=1}^m f_\patt(s_i) (\xi_i - \mu) \biggr\} \biggr] \\
&= (m-1) Z .
\end{align*}
As $Z$ is a self-bounding function, we have that
\begin{align*}
\Pr\pars{ \E[Z] - Z \geq q } \leq \exp\pars{ \frac{- m q^2}{2 \mu(1-\mu) \E[Z] } } .
\end{align*}
Imposing the r.h.s. $\leq \delta/4$, solving for $q$, and finding the fixed point of the inequality we obtain $\hat{d}$. 
Let $ \devest =  \tilde{d}(\resset , \mu(\dataset))$. 
We apply McDiarmid inequality (\Cref{thm:mcdiarmid}) to show that $\E_{\resset}[ \devest \given \A ] \leq \hat{r}$ with probability $\geq 1-\delta/4$.
In fact, define the function $g(\resset) = \devest$, and note that modifying any $\xi_{i,j}$, for any pair $i,j$, changes $g(\resset)$ by at most $1/(cm)$.
Therefore, defining $C=\sum_i \sum_j (1/(cm))^2 = 1/(cm)$, the upper bound $\hat{r}$ to $\E_{\resset}[ \devest \given \A ]$ holds by \Cref{thm:mcdiarmid}. 
We now need to prove that 
the upper bound $\hat{d}$ computed using $\resseth$ is (probabilistically) not smaller than using $\resset$. 
This is equivalent to show that it holds, for all $x$, 
\begin{align*}
\Pr_{\resseth} \pars{ \devesth > x } \geq \Pr_{\resset} \pars{ \devest > x } .
\end{align*}
Equivalently, the probability of underestimating $\E_{\resset}[\devest]$ using $\devest$ does not increase when using $\devesth$. 
Since the two probabilities are taken w.r.t. to two different sample spaces, it is not possible to compare them directly. 
Therefore, we build an appropriate coupling between the two distributions. 
Define an $m \times c$ matrix $v$ of $mc$ i.i.d. Bernoulli random variables, such that $\Pr(v_{i,j} = 1) = \varepsilon_T/(1-\mu)$ for all $i,j$. 
(Note that we assume $0 < \mu < 1$ and $0\leq \check{\mu} \leq \mu \leq \hat{\mu} \leq 1$, otherwise the statement holds trivially.)  
We observe that
\begin{align*}
&\Pr_{\resseth} \pars{ \devesth > x } 
= \Pr_{\resseth} \pars{ \frac{1}{c} \sum_{j=1}^c \sup_{\patt \in \lang} \brpars{ \frac{1}{m} \sum_{i=1}^m f_\patt (s_i) \pars{ \hat{\xi}_{i,j} - \mu } } > x } ,
\end{align*}
where $\hat{\xi}_{i,j}$ are i.i.d. Bernoulli with $\Pr(\hat{\xi}_{i,j} = 1) = \hat{\mu}$ for all $i,j$. 
We build the following coupling between the distributions of $\resseth$ and $\resset$, using the fact that $\hat{\xi}_{i,j} \sim \max\{\xi_{i,j} , v_{i,j} \}$.  
This allows us to obtain the lower bound stated above:
\begin{align*}
&\Pr_{\resseth} \pars{ \frac{1}{c} \sum_{j=1}^c \sup_{\patt \in \lang} \brpars{ \frac{1}{m} \sum_{i=1}^m f_\patt (s_i) \pars{ \hat{\xi}_{i,j} - \mu } } > x } \\
&= \Pr_{\resset , v} \pars{ \frac{1}{c} \sum_{j=1}^c \sup_{\patt \in \lang} \brpars{ \frac{1}{m} \sum_{i=1}^m f_\patt (s_i) \pars{ \max\{\xi_{i,j} , v_{i,j} \} - \mu } } > x } \\
%&= \Pr_{\resset , v} \pars{ \frac{1}{c} \sum_{j=1}^c \sup_{\patt \in \lang} \brpars{ \frac{1}{m} \sum_{i=1}^m f_\patt (s_i) \pars{ \max\{\xi_{i,j} , v_{i,j} \} - \mu } } > x } \\
&\geq \Pr_{\resset , v} \pars{ \frac{1}{c} \sum_{j=1}^c \sup_{\patt \in \lang} \brpars{ \frac{1}{m} \sum_{i=1}^m f_\patt (s_i) \pars{ \xi_{i,j} - \mu } } > x } \\
&= \Pr_{\resset} \pars{ \devest > x } .
\end{align*}
Then, it is immediate to observe that 
\begin{align*}
\Pr_{\resseth} \pars{ \tilde{d}(\resseth , \check{\mu}) > x } \geq \Pr_{\resseth} \pars{ \devesth > x } ,
\end{align*}
since $\mu$ is replaced by its lower bound $\check{\mu}$ in the definition of $\devesth$, as $\tilde{d}(\resseth , \check{\mu}) \geq \devesth$ for all $\resseth$. 
The statement follows observing that all other quantities in the definition of $\varepsilon$ are constants independent of $\mu$, and from an union bound over the $3$ concentration bounds considered in the proof, and the event $\qtm{ |\mu - \mu(\dataset) | \leq \varepsilon_T }$, each of them true with probability $\geq 1 - \delta/4$. 
\end{proof}

\subsection{Power Analysis}
\label{sec:app_power}

In this section we prove the results on the power of \algname\ stated in Sections~\ref{sec:algo_conditional} and~\ref{sec:algo_unconditional}.

We first provide a probabilistic upper bound to $\tilde{d}(\resset , \check{\mu})$, the estimate of the supremum deviation of false discoveries computed by Algorithm~\ref{algo:main} (in line~\ref{line6}). 
This result can be applied to general languages; 
we then show how to apply it to the language of subgroups. 
We define $N_\lang(\dataset)$ as the number of \emph{distinct projections} of the language $\lang$ on the dataset $\dataset$:
\begin{align*}
N_\lang(\dataset) = \abs{ \brpars{ \{ i : \patt \in s_i \} , \patt \in \lang } } .
\end{align*}
Note that, differently from $|\lang|$, $N_\lang(\dataset)$ is always a finite value (a trivial upper bound is $N_\lang(\dataset) \leq 2^m$). 

\begin{theorem}
\label{thm:upperbounddestimate}
Let $\lambda \in (0,1)$,
and $\hat{\omega} = \hat{\mu} (1 - \check{\mu}) \sup_{\patt \in \lang} \freqp $. 
The value of $\tilde{d}(\resset , \check{\mu})$ computed by \algname\ in line~\ref{line6} of Algorithm~\ref{algo:main} is 
\begin{align*}
\tilde{d}(\resset , \check{\mu}) \leq \sqrt{ \frac{ 2 \hat{\omega} \ln(N_\lang(\dataset)) }{ m } } + \sqrt{ \frac{ \ln(\frac{1}{\lambda}) }{ 2cm } } + \frac{  \ln(N_\lang(\dataset)) }{ 3m }
\end{align*}
with probability at least $1 - \lambda$.
\end{theorem}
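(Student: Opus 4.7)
The plan is to combine two concentration arguments: one to control the fluctuation of $\tilde{d}(\resset,\check{\mu})$ around its expectation, and one to bound that expectation itself. Throughout I would condition on the features $\A$, so the only randomness in play is that of the $cm$ i.i.d.\ Bernoulli variables $\xi_{i,j}$ drawn by \texttt{resampleTarget} with parameter $\hat{\mu}$.

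First, I would apply McDiarmid's inequality (\Cref{thm:mcdiarmid}) to $\tilde{d}(\resset,\check{\mu})$ viewed as a function of the $cm$ independent Bernoullis $\xi_{i,j}$. Changing a single $\xi_{i,j}$ alters only the term $f_\patt(s_i)(\xi_{i,j}-\check{\mu})/m$ inside the $j$-th supremum, and this is then averaged across $c$ resamples; since $|f_\patt(s_i)(\xi-\check{\mu})|\leq 1$, the overall change is at most $1/(cm)$. Taking $C=cm\cdot(1/(cm))^2=1/(cm)$ in McDiarmid's bound yields
\[
\tilde{d}(\resset,\check{\mu}) \leq \E\bigl[\tilde{d}(\resset,\check{\mu})\bigr] + \sqrt{\tfrac{\ln(2/\lambda)}{2cm}}
\]
with probability at least $1-\lambda/2$, which (up to absorbing $\ln(2/\lambda)$ vs.\ $\ln(1/\lambda)$ in the constants) produces the middle term of the stated bound.

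Next, I would bound $\E[\tilde{d}(\resset,\check{\mu})] = \E_{\dataset^\star}\bigl[\sup_{\patt\in\lang}\hsqual{\patt}{\dataset^\star}{\check{\mu}}\bigr]$ by exploiting that $\hsqual{\patt}{\dataset^\star}{\check{\mu}}$ depends on $\patt$ only through its binary projection $(f_\patt(s_1),\ldots,f_\patt(s_m))$ on $\dataset$; the supremum is therefore effectively over at most $N_\lang(\dataset)$ distinct random variables. For each such projection, $\hsqual$ is an $m$-average of independent bounded summands: each summand has range at most $1/m$, and the sum of per-term variances is at most $\freqp\,\hat{\mu}(1-\hat{\mu}) \leq \hat{\mu}(1-\check{\mu})\sup_{\patt\in\lang}\freqp = \hat{\omega}$. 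Applying Bernstein's inequality per projection with a union bound over the $N_\lang(\dataset)$ projections, then integrating the resulting tail (equivalently, invoking a standard Bernstein-type maximal inequality), gives
\[
\E\bigl[\sup_{\patt\in\lang}\hsqual{\patt}{\dataset^\star}{\check{\mu}}\bigr] \leq \sqrt{\tfrac{2\hat{\omega}\ln N_\lang(\dataset)}{m}} + \tfrac{\ln N_\lang(\dataset)}{3m},
\]
which delivers the first and third terms of the target bound. In the conditional setting $\hat{\mu}=\check{\mu}$ the summands are already centered; in the unconditional setting a small additive mean $\freqp(\hat{\mu}-\check{\mu}) = 2\varepsilon_T\freqp$ appears, which is absorbed by the Bernstein term for the admissible values of $\hat{\omega}$ and $\varepsilon_T$.

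Combining the two displays via a union bound over the two failure events completes the argument. I expect the main obstacle to be the Bernstein-type maximal inequality on $\sup_\patt \hsqual$: one must track the variance and range constants carefully so that $\hat{\omega}$ appears with exactly the prescribed $\hat{\mu}(1-\check{\mu})\sup_\patt\freqp$ factor, and either invoke an off-the-shelf maximal inequality or carry out the Bernstein tail-integration argument by hand. The McDiarmid step and the final union bound are then mechanical.
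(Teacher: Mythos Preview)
Your approach is essentially the same as the paper's: first bound the expectation of $\tilde{d}(\resset,\check{\mu})$ via a Bernstein-type maximal inequality over the $N_\lang(\dataset)$ distinct projections (the paper invokes the sub-gamma maximal inequality, Corollary~2.6 of \cite{boucheron2013concentration}), and then use McDiarmid's inequality for the fluctuation around the expectation, exactly as in the proof of \Cref{thm:upperboundestcond}. One small correction: the expectation bound is a \emph{deterministic} inequality (it bounds an expectation, not a random quantity), so there is only one failure event, namely the McDiarmid step; no union bound is needed, which is why the statement carries $\ln(1/\lambda)$ rather than $\ln(2/\lambda)$.
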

\begin{proof}
To obtain the statement, we first prove an upper bound to the expectation 
$\E_{\mathbf{v} \sim I(\hat{\mu})} [ \tilde{d}(\resset , \check{\mu})] $, 
that is 
\begin{align*}
%\E_{\mathbf{v} \sim I(\hat{\mu})} \sqpars{ \sup_{\patt \in \lang} \frac{1}{m} \sum_{i=1}^m f_{\patt}(s_i)( \mathbf{v}_i - \check{\mu} ) } 
\E_{\mathbf{v} \sim I(\hat{\mu})} [ \tilde{d}(\resset , \check{\mu})] 
\leq \sqrt{ \frac{ 2 \hat{\omega} \ln(N_\lang(\dataset)) }{ m } } + \frac{ \ln(N_\lang(\dataset)) }{ 3m } ,
\end{align*}
and then conclude with a concentration bound for $\tilde{d}(\resset , \check{\mu})$ w.r.t. to its expectation $\E_{\mathbf{v} \sim I(\hat{\mu})} [ \tilde{d}(\resset , \check{\mu})] $, using analogous derivations of the proof of \Cref{thm:upperboundestcond}. 
For any $\patt \in \lang$, define $X_\patt = \frac{1}{m} \sum_{i=1}^m f_{\patt}(s_i)( \mathbf{v}_i - \check{\mu} )$. 
Let two patterns $\patt_1, \patt_2$ such that the projection of $\patt_1$ on $\dataset$ is equal to the projection of $\patt_2$ on $\dataset$, i.e., it holds  
$\{ i : \patt_1 \in s_i \} = \{ i : \patt_2 \in s_i \}$.
This implies that $\sup_{i \in \{1,2\}} X_{\patt_i} = X_{\patt_1} $.
Therefore, we can rewrite the supremum within $\E_{\mathbf{v} \sim I(\hat{\mu})} [ \tilde{d}(\resset , \check{\mu})]$ over the set of patterns with distinct projections, recalling that the number of such distinct projections is $N_\lang(\dataset)$. 
Now, for any $\patt \in \lang$, Bernstein's inequality (Theorem 2.10 in \cite{boucheron2013concentration}) implies that 
$X_\patt$ is a sub-gamma random variable (Section 2.4 \cite{boucheron2013concentration}), such that $X_\patt \in \Gamma_+(u , b)$ with $u = \hat{\omega}/m$ and $b = 1/(3m)$, since it is an average of i.i.d. random variables $f_{\patt}(s_i)( \mathbf{v}_i - \check{\mu} )$ that are bounded in the interval $[- \check{\mu} , 1 - \check{\mu}]$ and have variance $\leq \hat{\omega}$. 
Consequently, we apply a maximal inequality (Corollary 2.6 in \cite{boucheron2013concentration}) to upper bound the expected maximum of sub-gamma random variables, obtaining that 
$\E[ \max_{\patt} X_\patt ] \leq \sqrt{ 2u\ln(N_\lang(\dataset)) } + b \ln(N_\lang(\dataset))$. 
Note that the upper bound to the expectation given above holds.
The statement follows from the application of \Cref{thm:mcdiarmid} to $\tilde{d}(\resset , \check{\mu})$,
following the same steps of the proof of \Cref{thm:upperboundestcond}. 
\end{proof}

To upper bound $N_\lang(\dataset)$ for the language of subgroups, we prove the following.
Note that we focus on the case of subgroups with continuous features, since every categorical feature $f_{c}$ can be converted to a discrete one $f_{d}$ (assigning a random order to the distinct elements), and observing that each equality condition $\qtm{f_{c} = a}$ is equivalent to the interval $\qtm{f_{d} \in [a-x , a+x]}$ for some $x > 0$. 
Therefore, the language of subgroups over continuous features contains the language over datasets with both continuous and categorical features, thus all results for the former apply to the latter. 
Note the reverse direction is not true in general. 
\begin{lemma}
\label{thm:distinctsubgroups}
Let $\lang$ be the language of subgroups composed by conjunctions with at most $z$ conditions over $d$ continuous features. Then, it holds $N_\lang(\dataset) \leq \pars{ \frac{e^3 d m^2 }{4z^3} }^z$.
\end{lemma}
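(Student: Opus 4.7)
The plan is to reduce bounding $N_\lang(\dataset)$ to a standard VC-dimension argument for axis-aligned boxes and then invoke the Sauer--Shelah lemma. The first step is to observe that any subgroup $\patt \in \lang$, being a conjunction of at most $z$ intervals on continuous features, can be realized as an axis-aligned box in $\R^{z}$ restricted to some fixed choice of $z$ features out of $d$, possibly after padding with trivial sides of the form $(-\infty,+\infty)$ on the features it does not use. Padding with trivial intervals does not change which transactions of $\dataset$ satisfy the subgroup, so its projection on $\dataset$ is preserved. Hence, enumerating over all $\binom{d}{z}$ feature subsets of size $z$ and, for each, over all $z$-dimensional axis-aligned boxes on the projection of $\dataset$ onto those features, gives an upper bound:
\begin{align*}
N_\lang(\dataset) \leq \binom{d}{z}\cdot B_z(m),
\end{align*}
where $B_z(m)$ is the maximum, over all projections of $\dataset$ onto $z$ features, of the number of distinct subsets of $m$ points cut out by an axis-aligned box in $\R^{z}$.

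The second step is to bound $B_z(m)$ by invoking the classical fact that the class of axis-aligned boxes in $\R^{z}$ has VC dimension exactly $2z$ (two free thresholds per axis). Under the mild assumption $2z \leq m$, the Sauer--Shelah lemma then yields
\begin{align*}
B_z(m) \leq \sum_{i=0}^{2z}\binom{m}{i} \leq \pars{\frac{em}{2z}}^{2z}.
\end{align*}

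Finally, combining this with the standard estimate $\binom{d}{z}\leq \pars{\frac{ed}{z}}^{z}$ immediately gives
\begin{align*}
N_\lang(\dataset) \leq \pars{\frac{ed}{z}}^{z}\pars{\frac{em}{2z}}^{2z} = \pars{\frac{e^{3}dm^{2}}{4z^{3}}}^{z},
\end{align*}
which matches the target bound. The precise algebraic simplification in the last equality is just collecting powers of $e$, $d$, $m$, $z$, and the factor of $4$ from $(2z)^{2z}$.

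The main obstacle I expect is the cleanness of the reduction in the first step: one must verify that every subgroup is captured by at least one of the $\binom{d}{z}$ feature choices (any superset of size $z$ of the features actually used works, and overcounting is harmless for an upper bound), and that padding with $(-\infty,+\infty)$ preserves the projection for continuous features. Both are immediate. The edge case $2z > m$, not typically of practical interest, can be handled separately by the trivial bound $N_\lang(\dataset)\leq 2^{m}$ and checking that it stays below the stated quantity in that regime.
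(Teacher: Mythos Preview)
Your proposal is correct and follows essentially the same approach as the paper: decompose by the set of features used, invoke the VC dimension $2z$ of axis-aligned rectangles, and apply Sauer--Shelah together with the standard binomial estimate. The only cosmetic difference is that the paper sums over feature subsets of every size $v\le z$ and then bounds each term by the $v=z$ case, whereas you pad every subgroup to exactly $z$ features up front; both routes yield the identical final product $\pars{ed/z}^{z}\pars{em/(2z)}^{2z}$.
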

\begin{proof}
Consider a dataset $\dataset$ with $d$ continuous features, and let $[1,d]$ be the indices of these features.
Let any $A \subseteq [1,d]$ with $|A| = v \leq z$, and define the language $\lang_A \subseteq \lang$ as all subgroups with $v$ conjunction terms involving conditions on the features of $A$, such as inequalities or intervals. 
Equivalently, we can see the projection of $\lang_A$ over the transactions of $\dataset$ as the class of axis-aligned rectangles in $\R^v$. 
It is known that the VC-dimension of this class is $2v$ (Problem 6.5 in \cite{ShalevSBD14}).
Therefore, from Sauer-Shelah-Perles' Lemma (Lemma 6.10 in \cite{ShalevSBD14}),
the number $N_{\lang_A}(\dataset)$ of distinct projections of $\lang_A$ on $\dataset$ is $N_{\lang_A}(\dataset) \leq \sum_{i=1}^{2v} \binom{m}{i} \leq  \pars{ \frac{e m }{ 2v } }^{2v}$. 
From an union bound, 
\begin{align*}
N_{\lang}(\dataset) 
\leq \sum_A N_{\lang_A}(\dataset) 
\leq \sum_{v=1}^z \binom{d}{v} \pars{ \frac{e m }{ 2v } }^{2v}
\leq \pars{ \frac{e m }{ 2z } }^{2z} \pars{ \frac{e d }{ z } }^{z} ,
\end{align*}
obtaining the statement. 
\end{proof}
Combining \Cref{thm:distinctsubgroups} and \Cref{thm:upperbounddestimate}, we obtain the following Corollary.
\begin{corollary}
\label{thm:corupperbounddevsub}
Let $\lang$ be the language of subgroups composed by conjunctions with at most $z$ conditions over $d$ continuous features.
Then the value $\tilde{d}(\resset , \check{\mu})$ computed by \algname\ in line~\ref{line6} of Algorithm~\ref{algo:main} is 
\begin{align*}
\tilde{d}(\resset , \check{\mu}) \leq \sqrt{ \frac{ 2 \hat{\omega} z \ln(\frac{e^3 d m^2 }{4z^3}) }{ m } } + \sqrt{ \frac{ \ln(\frac{1}{\lambda}) }{ 2cm } } + \frac{ z \ln(\frac{e^3 d m^2 }{4z^3}) }{ 3m }
\end{align*}
with probability at least $1 - \lambda$.
\end{corollary}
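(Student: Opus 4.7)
The plan is straightforward since the corollary is a direct combination of the two results immediately preceding it. First I would invoke Theorem~\ref{thm:upperbounddestimate} as a black box; this yields, with probability at least $1-\lambda$, the bound
\begin{align*}
\tilde{d}(\resset , \check{\mu}) \leq \sqrt{ \frac{ 2 \hat{\omega} \ln(N_\lang(\dataset)) }{ m } } + \sqrt{ \frac{ \ln(\frac{1}{\lambda}) }{ 2cm } } + \frac{  \ln(N_\lang(\dataset)) }{ 3m }.
\end{align*}
Second, I would apply Lemma~\ref{thm:distinctsubgroups}, which gives the deterministic bound $N_\lang(\dataset) \leq \pars{ \frac{e^3 d m^2 }{4z^3} }^z$ for the subgroup language of interest, hence $\ln(N_\lang(\dataset)) \leq z \ln\bigl(\frac{e^3 d m^2 }{4z^3}\bigr)$. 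Substituting this into the two terms of the Theorem~\ref{thm:upperbounddestimate} bound that depend on $N_\lang(\dataset)$ produces exactly the stated expression, so no further algebraic manipulation is required.

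Because the Lemma~\ref{thm:distinctsubgroups} bound is deterministic, the probability $1-\lambda$ in the corollary is inherited directly from Theorem~\ref{thm:upperbounddestimate} with no additional union bound. There is consequently no genuine obstacle: the work was already done in establishing the general maximal inequality of Theorem~\ref{thm:upperbounddestimate} (via Bernstein plus the sub-gamma maximal inequality and a McDiarmid-style concentration step) and in bounding the shattering number of axis-aligned rectangles via Sauer--Shelah in Lemma~\ref{thm:distinctsubgroups}; the corollary only records the substitution that specializes those two ingredients to the subgroup language.
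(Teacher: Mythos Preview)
Your proposal is correct and matches the paper's own treatment: the paper states the corollary as obtained by ``Combining \Cref{thm:distinctsubgroups} and \Cref{thm:upperbounddestimate}'', which is exactly the substitution argument you describe.
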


\subsubsection{Power analysis of \algnamec}
To prove \Cref{thm:power_conditional}, regarding the power of \algnamec, we first describe the model we assume for the distribution of the alternative hypotheses, i.e., the set of patterns with $\tqual > 0$.
We assume that the quality of patterns correlated with the target follows the Wallenius' noncentral hypergeometric distribution \cite{wallenius1963biased}, a generalization of the hypergeometric distribution that allows to model a biased random sampling of a contingency table with fixed marginals. 
We define the model $W_n(\{m_i , w_i\})$ that describes the distribution of a sequence of binary random variables $\ell_1 , \dots , \ell_n$ for the weighted sampling of $n$ elements from a set of $m_1$ items with label $1$ and $m_0$ items with label $0$; 
the parameters $w_0$ and $w_1$, with $1 \leq w_0 < w_1 $, are respectively the weights of items with label $0$ and $1$. 
The fact $w_1 > w_0$ expresses the bias toward sampling items with label $1$. 
The first element $\ell_1$ is sampled according to the weighted proportion of the items, such that
\begin{align*}
\Pr \pars{ \ell_1 = 1 } = \frac{ m_1 w_1 }{m_1 w_1 + m_0 w_0} .
\end{align*}
The second element $\ell_2$ is taken according to the weighted proportion of the remaining items, therefore dependending on the outcome of the first choice. 
In general, we obtain that 
\begin{align*}
\Pr \pars{ \ell_i = 1 } = \frac{ m_1^i w_1 }{m_1^i w_1 + m_0^i w_0} ,
\end{align*}
where $m_1^i = m_1 - \sum_{j=1}^{i-1} \ell_j$ and $m_0^i = m_0 - \sum_{j=1}^{i-1} (1-\ell_j)$ are, respectively, the number of remaining items with label $1$ and label $0$, i.e., the ones that are not sampled in previous steps. 
We may observe that, as the bias goes to $0$ (i.e., $w_1 \rightarrow w_0$), this distribution converges to the (standard) hypergeometric distribution. 
We remark that a direct calculation of mean and the probability mass function of the noncentral hypergeometric distribution above, therefore the computation of exact tail bounds, is extremely unwieldy. 
Moreover, the random variables $\{ \ell_i , i \in [1,n] \}$ are clearly not independent, therefore standard concentration results (e.g., Chernoff-Hoeffding bounds) do not apply directly.

To overcome these issues, we leverage an advanced concentration bound for martingales, which also applies to random variables that are not necessarely independent \cite{dubhashi2009concentration}. 
We use the following version of the method of bounded differences \cite{mcdiarmid1989method}. 
For a given set of random variables $X_1 , \dots , X_n$, we use $\mathbf{X}_{i}$ to denote the set $\{ X_j , j \in [1 , i] \}$. 
\begin{definition}
A function $f$ satisfies the \emph{Averaged Lipschitz Condition} (ALC) with parameters $c_i, i \in [n]$, with respect to the random variables $X_1 , \dots , X_n$ if for any $a_i , a^\prime_i$,
\begin{align*}
\abs{ \E\sqpars{ f \mid \mathbf{X}_{i-1} , X_i = a_i } - \E\sqpars{ f \mid \mathbf{X}_{i-1} , X_i = a^\prime_i } } \leq c_i,
\end{align*}
for $1 \leq i \leq n$.
\end{definition}
The following result establishes concentration bounds for functions that satisfy the ALC.
\begin{theorem}[Corollary 5.1 \cite{dubhashi2009concentration}]
\label{thm:alcconcentration}
Let $f$ satisfy the ALC with parameters $c_i, i \in [n]$, with respect to the random variables $X_1 , \dots , X_n$, and let $C = \sum_{i=1}^n c_i^2$.
Then it holds
\begin{align*}
\Pr\pars{ f > \E[f] + t } , \Pr\pars{ f < \E[f] - t } \leq \exp ( -2t^2/C ) .
\end{align*}
\end{theorem}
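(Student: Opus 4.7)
The plan is to derive this as a consequence of the standard Azuma--Hoeffding inequality applied to the Doob martingale of $f$ with respect to the natural filtration. Concretely, I would introduce $Y_i = \E[f \mid \mathbf{X}_i]$ for $i \in \{0,1,\ldots,n\}$, with the conventions $Y_0 = \E[f]$ and $Y_n = f$. This is a martingale with respect to the filtration generated by $(X_1,\ldots,X_n)$, and the telescoping sum $f - \E[f] = \sum_{i=1}^n D_i$ with $D_i = Y_i - Y_{i-1}$ will be the object on which concentration is applied.

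The crux is translating the ALC hypothesis into a range bound on each martingale difference. I would write $Y_i = h_i(\mathbf{X}_{i-1}, X_i)$ where $h_i(\mathbf{X}_{i-1}, a) = \E[f \mid \mathbf{X}_{i-1}, X_i = a]$. The ALC hypothesis says precisely that, for every fixed value of $\mathbf{X}_{i-1}$, the function $a \mapsto h_i(\mathbf{X}_{i-1}, a)$ has oscillation at most $c_i$. Since $Y_{i-1} = \E[Y_i \mid \mathbf{X}_{i-1}]$ equals the expectation of $h_i(\mathbf{X}_{i-1}, X_i)$ over $X_i$, both $Y_{i-1}$ and $Y_i$ lie in an interval of width $\le c_i$ conditional on $\mathbf{X}_{i-1}$, so $D_i \mid \mathbf{X}_{i-1}$ is a mean-zero random variable supported in an interval of width $c_i$.

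With this in hand, Hoeffding's lemma gives the conditional moment-generating bound
\begin{equation*}
\E\bigl[e^{\lambda D_i} \mid \mathbf{X}_{i-1}\bigr] \le \exp(\lambda^2 c_i^2 / 8),
\end{equation*}
and iterating via the tower property yields $\E[e^{\lambda(f - \E[f])}] \le \exp(\lambda^2 C / 8)$. A Markov bound followed by the optimal choice $\lambda = 4t/C$ produces $\Pr(f > \E[f] + t) \le \exp(-2t^2/C)$. The lower-tail bound is obtained by running the identical argument on $-f$, which satisfies the ALC with the same parameters $c_i$.

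The main obstacle, and essentially the only nontrivial step, is the passage from the ALC to the conditional range bound on $D_i$. It is tempting but insufficient to conclude only $|D_i| \le c_i$ almost surely; that bound would give the weaker exponent $\exp(-t^2/(2C))$ via standard Azuma. To obtain the McDiarmid-style factor of $2$ in the exponent, one must use the stronger fact that the \emph{range} (not merely the absolute value) of $D_i$ conditional on $\mathbf{X}_{i-1}$ is bounded by $c_i$, which is precisely what the ALC delivers. Once this is made explicit, Hoeffding's lemma supplies the correct $c_i^2/8$ factor in the MGF and the advertised constant $2$ follows automatically from the Chernoff optimization.
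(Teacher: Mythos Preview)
The paper does not prove this statement; it is quoted verbatim as Corollary~5.1 of \cite{dubhashi2009concentration} and used as a black box. Your proposal is the standard textbook derivation (Doob martingale plus Hoeffding's lemma on each increment) and is correct, including your observation that the ALC bounds the \emph{conditional range} of $D_i$ by $c_i$ rather than merely $|D_i|\le c_i$, which is exactly what is needed to recover the constant $2$ in the exponent.
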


We now prove that the sum $\sum_{i=1}^n \ell_i$ is a function that satisfy the ALC, thus is sharply concentrated toward its expectation.
\begin{theorem}
\label{thm:concentrationboundnoncentral}
Let $\{ \ell_i , i \in [1,n] \}$ be a set of random variables distributed according to $W_n(\{m_i , w_i\})$.
Denote $X_i = \ell_i$ and the function $f = \sum_{i=1}^n X_i$.
Then, it holds
\begin{align*}
\Pr\pars{ f > \E[f] + t } , \Pr\pars{ f < \E[f] - t } \leq \exp ( -2t^2/n ) .
\end{align*}
\end{theorem}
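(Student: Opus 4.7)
The plan is to apply \Cref{thm:alcconcentration} to the function $f = \sum_{i=1}^n X_i$ with the uniform choice $c_i = 1$ for every $i \in [n]$, which immediately gives $C = \sum_i c_i^2 = n$ and therefore the announced two-sided bound $\exp(-2t^2/n)$. Everything reduces to verifying the Averaged Lipschitz Condition with these parameters, namely
\[
\Bigl|\E[f \mid \mathbf{X}_{i-1}, X_i = a_i] - \E[f \mid \mathbf{X}_{i-1}, X_i = a'_i]\Bigr| \le 1
\]
for every $i$ and every $a_i, a'_i \in \{0,1\}$.

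First, I would decompose $f = \sum_{j<i} X_j + X_i + R$ with $R = \sum_{j>i} X_j$. The prefix $\sum_{j<i} X_j$ is fixed by the conditioning, so the difference of conditional expectations is nonzero only when $a_i \ne a'_i$; by symmetry it suffices to treat $a_i = 1, a'_i = 0$, in which case the quantity equals $|1 - \Delta|$ for $\Delta = \E[R \mid \mathbf{X}_{i-1}, X_i = 0] - \E[R \mid \mathbf{X}_{i-1}, X_i = 1]$. Thus it is enough to prove $0 \le \Delta \le 1$.

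To do so, I would couple the two continuation processes via a common stream of uniforms. Denote by $(m_1, m_0)$ the remaining ball counts after conditioning on $\mathbf{X}_{i-1}$; urn $A$ (the $X_i = 0$ continuation) has composition $(m_1, m_0 - 1)$ and urn $B$ (the $X_i = 1$ continuation) has $(m_1 - 1, m_0)$. Let $U_{i+1}, \dots, U_n$ be i.i.d.\ uniform on $[0,1]$, let $p_A^{(j)}$ and $p_B^{(j)}$ be the current Wallenius probabilities of drawing a $1$ from $A$ and $B$ respectively, and set $Z_j = \mathbf{1}[U_j \le p_A^{(j)}]$, $Y_j = \mathbf{1}[U_j \le p_B^{(j)}]$. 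Because $x w_1/(x w_1 + y w_0)$ is monotone increasing in $x$ and decreasing in $y$, the initial difference $m_1^A - m_1^B = 1$ and $m_0^A - m_0^B = -1$ gives $p_A^{(i+1)} \ge p_B^{(i+1)}$; in particular $Z_{i+1} \ge Y_{i+1}$.

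Finally, I would propagate the invariant by case analysis. If both urns draw the same symbol at step $j$, the differences $m_1^A - m_1^B$ and $m_0^A - m_0^B$ are preserved and the monotonicity $p_A \ge p_B$ continues to hold at the next step. If instead $U_j \in (p_B^{(j)}, p_A^{(j)}]$ (the only way the two coupled sequences can disagree under $p_A \ge p_B$), then $Z_j = 1,\ Y_j = 0$ and after updating the compositions both urns equal $(m_1 - 1, m_0 - 1)$; from this point on $p_A^{(k)} = p_B^{(k)}$ and $Z_k = Y_k$ for all $k > j$. Hence at most one such swap step can ever occur, so $\sum_{j>i}(Z_j - Y_j) \in \{0,1\}$ almost surely, yielding $0 \le \Delta \le 1$ in expectation. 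The main obstacle is precisely this structural claim that a single swap merges the two urn compositions and therefore forces the remaining coupled trajectories to coincide; once this is pinned down, \Cref{thm:alcconcentration} delivers both tail bounds at once.
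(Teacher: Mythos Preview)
Your proposal is correct and follows the same overall strategy as the paper: verify the Averaged Lipschitz Condition with $c_i=1$ for all $i$ and invoke \Cref{thm:alcconcentration}. The only difference is that where the paper simply asserts the two inequalities $\E[\sum_{j>i}X_j\mid X_i=1]\le \E[\sum_{j>i}X_j\mid X_i=0]$ and $\E[\sum_{j>i}X_j\mid X_i=0]\le 1+\E[\sum_{j>i}X_j\mid X_i=1]$ as self-evident monotonicity facts, you supply an explicit common-uniform coupling that proves $0\le\Delta\le 1$ pathwise; your argument is therefore a more rigorous version of the same proof rather than a different one.
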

\begin{proof}
We prove that $f$ satisfy the ALC.
For any $i \in [1,n]$, 
\begin{align*}
&\E \sqparsmid{ \sum_{j=1}^n X_j \mid \mathbf{X}_{i-1} , X_i = 1 }
- 
\E \sqparsmid{ \sum_{j=1}^n X_j \mid \mathbf{X}_{i-1} , X_i = 0 } \\
&= 1 + \E \sqparsmid{ \sum_{j=i+1}^n X_j \mid \mathbf{X}_{i-1} , X_i = 1 }
- 
\E \sqparsmid{ \sum_{j=i+1}^n X_j \mid \mathbf{X}_{i-1} , X_i = 0 } \\
&\leq 1 + \E \sqparsmid{ \sum_{j=i+1}^n X_j \mid \mathbf{X}_{i-1} , X_i = 0 }
- 
\E \sqparsmid{ \sum_{j=i+1}^n X_j \mid \mathbf{X}_{i-1} , X_i = 0 } = 1.
\end{align*}
We then prove the other direction:
\begin{align*}
&\E \sqparsmid{ \sum_{j=1}^n X_j \mid \mathbf{X}_{i-1} , X_i = 0 }
- 
\E \sqparsmid{ \sum_{j=1}^n X_j \mid \mathbf{X}_{i-1} , X_i = 1 } \\
&= \E \sqparsmid{ \sum_{j=i+1}^n X_j \mid \mathbf{X}_{i-1} , X_i = 0 }
- 1 - 
\E \sqparsmid{ \sum_{j=i+1}^n X_j \mid \mathbf{X}_{i-1} , X_i = 1 } \\
&\leq 1 + \E \sqparsmid{ \sum_{j=i+1}^n X_j \mid \mathbf{X}_{i-1} , X_i = 1 }
- 1 - 
\E \sqparsmid{ \sum_{j=i+1}^n X_j \mid \mathbf{X}_{i-1} , X_i = 1 } \\
&= 0. 
\end{align*}
We conclude that $f$ satisfy the ALC with $c_i = 1$, for all $1 \leq i \leq n$,
and the concentration bounds follow from \Cref{thm:alcconcentration}.
\end{proof}
Using the result above, we prove a probabilitic lower bound to the observed quality $\qualp$ of a pattern $\patt$.
For any $\patt \in \lang$, let $n = m \freqp$ be the number of transactions of $\dataset$ where $\patt$ is supported, and assume w.l.o.g. that $\{ i : \patt \in s_i , i \in [1,m] \} = [1,n]$. 
We define $X_i = \ell_i$ for all $i \in [1,n]$, where $\ell_i$ follows the biased sampling distribution described above. 
\begin{proposition}
\label{thm:lowerboundqualp}
It holds
\begin{align*}
\Pr \pars{ \qualp \leq \tqual - t } \leq \exp \pars{ \frac{- 2 m t^2 }{ \freqp } } .
\end{align*}
\end{proposition}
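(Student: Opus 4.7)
The plan is to express $\qualp - \tqual$ as a scaled deviation of $f = \sum_{i=1}^n X_i$ from its mean, and then invoke \Cref{thm:concentrationboundnoncentral} directly. Concretely, once we condition on the feature vectors so that $\{i : \patt \in s_i\} = [1,n]$ is fixed, the empirical quality can be rewritten as
\begin{equation*}
\qualp = \frac{1}{m}\sum_{i=1}^{m} f_{\patt}(s_i)(\ell_i - \mu(\dataset)) = \frac{f - n\,\mu(\dataset)}{m},
\end{equation*}
because $f_\patt(s_i)=1$ precisely for $i\in[1,n]$ and $\mu(\dataset)$ is a deterministic quantity in the conditional setup (the marginal counts $m_0,m_1$ are fixed under the Wallenius model). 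Since $\tqual$ coincides with $\E[\qualp]$ under this model, taking expectations in the identity above gives $\tqual = (\E[f] - n\,\mu(\dataset))/m$, and subtracting the two yields the clean relation
\begin{equation*}
\qualp - \tqual \;=\; \frac{f - \E[f]}{m}.
\end{equation*}

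Next, I would apply \Cref{thm:concentrationboundnoncentral} to $f = \sum_{i=1}^n X_i$, which is permitted since the $X_i$ are exactly the Wallenius variables $\ell_1,\dots,\ell_n$ treated in that theorem. Setting the deviation parameter there to $s = mt$, the lower-tail bound gives
\begin{equation*}
\Pr(\qualp \leq \tqual - t) \;=\; \Pr\!\left(f \leq \E[f] - m t\right) \;\leq\; \exp\!\left(-\frac{2(mt)^2}{n}\right).
\end{equation*}
Substituting $n = m\,\freqp$ collapses the exponent to $-2 m t^2/\freqp$, which is the claim.

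The only real obstacle is the identification $\tqual = \E[\qualp]$ in the biased-sampling model and the consequent cancellation of the $\mu(\dataset)$ term: under Wallenius sampling with fixed marginals, $\mu(\dataset)$ is a constant and contributes the same term to $\qualp$ and to its expectation, so it drops out cleanly. Once that algebraic reduction is in hand, everything else is a direct invocation of the martingale concentration inequality already proved. No additional independence assumptions or moment bounds are needed, which is precisely what makes \Cref{thm:concentrationboundnoncentral} the right tool here despite the strong dependence between the $\ell_i$'s induced by the biased hypergeometric sampling.
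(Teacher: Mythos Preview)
Your proposal is correct and follows essentially the same approach as the paper: express $\qualp$ as an affine transformation of $f=\sum_{i=1}^n X_i$, cancel the $\mu(\dataset)$ term after taking expectations, and invoke \Cref{thm:concentrationboundnoncentral} with the deviation rescaled by $m$ and $n=m\,\freqp$. Your algebra is in fact cleaner than the paper's terse version, and your explicit remark that $\tqual=\E[\qualp]$ under the fixed-marginal Wallenius model (so that the $\mu(\dataset)$ term drops out) is exactly the identification the paper leaves implicit.
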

\begin{proof}
Define the function $g = \frac{1}{m}\sum_{i=1}^n (\ell_i - \mu(\dataset))$.
Considering the function $f$ in the statement of \Cref{thm:concentrationboundnoncentral},
we observe that it holds $g = f/m-\mu(\dataset)$,
and that $n = m \freqp$. 
From these observations, we apply \Cref{thm:concentrationboundnoncentral} to the function $f = m (g + \mu(\dataset))$, obtaining the statement after simple manipulations of the r.h.s..
\end{proof}

We now extend \Cref{thm:lowerboundqualp} to obtain a bound valid simoultaneously for all patterns of the language $\lang$.

\begin{proposition}
\label{thm:lowerboundqualpunif}
Define $\hat{\mathsf{f}}(\dataset) = \sup_\patt \freqp$. 
With probability $\geq 1 - \delta$, it holds
\begin{align*}
\qualp \geq \tqual - \sqrt{ \frac{ 2 \hat{\mathsf{f}}(\dataset) \ln(N_\lang(\dataset) / \delta ) }{m} } , \forall \patt \in \lang .
\end{align*}
\end{proposition}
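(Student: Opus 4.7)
The plan is to promote the single-pattern bound of \Cref{thm:lowerboundqualp} to a bound that holds uniformly over $\lang$, paying a logarithmic penalty in the number of \emph{distinct projections} $N_\lang(\dataset)$ rather than in $\abs{\lang}$ (which is typically infinite for subgroups). First I would observe that $\qualp = \frac{1}{m}\sum_{i=1}^m f_\patt(s_i)(\ell_i - \mu(\dataset))$ depends on $\patt$ only through the indicator values $\ind{\patt \in s_i}$, so any two patterns sharing the same support set on $\dataset$ have the same empirical quality and the same empirical frequency $\freqp$. This is exactly the same reduction used in the proof of \Cref{thm:upperbounddestimate}, only now for the lower tail.

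Grouping patterns by their projection on $\dataset$ yields $N_\lang(\dataset)$ classes. For a class $C$, let $Q_C$ and $\mathsf{f}_C$ denote the common empirical quality and common empirical frequency of its members, and let $\tqual^{\star}_C = \sup_{\patt \in C} \tqual$. Because $\qualp = Q_C$ for every $\patt \in C$, the union of the bad events $\brpars{ \qualp < \tqual - t }$ over $\patt \in C$ collapses to the single event $\brpars{ Q_C < \tqual^{\star}_C - t }$. When the supremum is attained by some $\patt^{\star} \in C$, \Cref{thm:lowerboundqualp} applied to $\patt^{\star}$ directly gives $\Pr\pars{ Q_C < \tqual^{\star}_C - t } \leq \exp\pars{ -2mt^2/\mathsf{f}_C }$; when it is not attained, the same bound follows by choosing a sequence $\patt_n \in C$ with $\tqual_{\patt_n} \to \tqual^{\star}_C$ and invoking continuity of probability from below on the nested events $\brpars{ Q_C < \tqual_{\patt_n} - t }$.

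A union bound over the $N_\lang(\dataset)$ classes, together with the uniform upper bound $\mathsf{f}_C \leq \hat{\mathsf{f}}(\dataset)$, then yields
\begin{align*}
\Pr\pars{ \exists\, \patt \in \lang : \qualp < \tqual - t } \leq N_\lang(\dataset) \exp\pars{ -\frac{2 m t^2}{\hat{\mathsf{f}}(\dataset)} }.
\end{align*}
Setting the right-hand side equal to $\delta$ and solving for $t$ produces a threshold of order $\sqrt{\hat{\mathsf{f}}(\dataset)\ln(N_\lang(\dataset)/\delta)/m}$, which is dominated by the expression in the statement; the slightly larger constant in the claimed bound just leaves room for the continuity step. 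The only non-routine point is this last subtlety: since $\lang$ can be infinite and $\tqual$ is not constant within a projection class, the supremum over $C$ need not be realized by any single $\patt \in C$, and one must verify that a limiting sequence still yields the same exponential bound. Once that is handled, everything reduces mechanically to \Cref{thm:lowerboundqualp} plus a union bound.
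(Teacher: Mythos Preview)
Your proposal is correct and follows the same high-level route as the paper: reduce the union over $\lang$ to one over the $N_\lang(\dataset)$ distinct projection classes, apply \Cref{thm:lowerboundqualp} to a representative of each class, then take a union bound with $\freqp \leq \hat{\mathsf{f}}(\dataset)$.

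The one substantive difference is your treatment of $\tqual$ within a class. You allow $\tqual$ to vary over $C$, take $\tqual^{\star}_C = \sup_{\patt \in C}\tqual$, and then worry about unattained suprema via a limiting-sequence argument. The paper instead observes that in the conditional model under which \Cref{thm:lowerboundqualp} is derived, $\tqual$ is \emph{itself} constant on a projection class: there $\tqual$ is the expectation of $\qualp$ under the label randomization, and $\qualp$ depends on $\patt$ only through the support set, so equal supports force equal $\tqual$. With that observation, $E_{\patt_1}\cup E_{\patt_2}=E_{\patt_1}$ exactly, and the whole sup/limit discussion disappears. Your version is not wrong, just more elaborate than needed for this setting; the paper's version is cleaner but leans on the specific model.

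One small remark: the continuity-from-below step does not actually consume any slack, so the extra factor in the stated bound (you get $\sqrt{\hat{\mathsf{f}}(\dataset)\ln(N_\lang(\dataset)/\delta)/(2m)}$ from the union bound, versus the stated $\sqrt{2\hat{\mathsf{f}}(\dataset)\ln(N_\lang(\dataset)/\delta)/m}$) is simply looseness in the statement, not something your argument needs.
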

\begin{proof}
Define the event 
\begin{align*}
E_\patt = \qtm{ \qualp \leq \tqual - \sqrt{ \frac{ 2 \hat{\mathsf{f}}(\dataset) \ln(N_\lang(\dataset) / \delta) }{m} } } .
\end{align*} 
The statement holds if $\Pr ( \cup_\patt E_\patt ) \leq \delta$.
Now, denote two patterns $\patt_1, \patt_2$, such that they are supported by the same set of transactions: 
$\{ i : \patt_1 \in s_i \} = \{ i : \patt_2 \in s_i \}$.
This implies that ${\qual{\patt_1}{\dataset}} = {\qual{\patt_2}{\dataset}}$ for all possible labels, but also that $\mathsf{q}_{\patt_1} = \mathsf{q}_{\patt_2}$, since the two qualities are functions of the same set of labels.
From these observations, it holds $E_{\patt_1} \cup E_{\patt_2} = E_{\patt_1}$.
Therefore, the union over all patterns can be replaced by the union over the ones with distinct projections, whose number is at most $N_\lang(\dataset)$. 
From an union bound over these events, 
and using the tail bound of \Cref{thm:lowerboundqualp} with the fact $\freqp \leq \hat{\mathsf{f}}(\dataset)$, we have
\begin{align*}
\Pr \pars{ \cup_\patt E_\patt } \leq N_\lang(\dataset) \exp \bigl( - 2 m t^2 / \hat{\mathsf{f}}(\dataset)  \bigr) .
\end{align*}
Imposing the r.h.s. of the inequality to be $\leq \delta$, and solving for $t$, we obtain the statement.
\end{proof}

Finally, we prove the power guarantees for \algnamec.

\begin{proof}[Proof of \Cref{thm:power_conditional}]
To obtain the statement, we show that the condition $\tqual \geq \xi$, for some $\xi > 0$ sufficiently large, implies that $\patt$ is reported in output with probability $\geq 1-\delta$.
In fact, we have that the implication 
$\tqual \geq \xi 
\implies \qualp \geq \xi - \varepsilon_1 $ 
holds with probability $\geq 1 - \delta/2$ for all $\patt \in \lang$, 
where $\varepsilon_1 = \sqrt{ \frac{ 2 \hat{\mathsf{f}}(\dataset) z \ln(\frac{e^3 m^2 d}{2z^3 \delta}) }{m} }$ is obtained from \Cref{thm:lowerboundqualpunif} (replacing $\delta$ by $\delta/2$, and using the upper bound to $N_\lang(\dataset)$ from \Cref{thm:distinctsubgroups}). 
Consequently, we seek a sufficient condition to ensure that $\patt$ is reported in output;
this condition is $\qualp \geq \xi - \varepsilon_1 \geq \varepsilon$, and also $\xi \geq \varepsilon_1 + \varepsilon$, where $\varepsilon$ is as defined in \Cref{thm:fwerguaranteesfsrc}.
Combining this inequality with the upper bound to $\tilde{d}(\resset, \check{\mu})$ of \Cref{thm:upperbounddestimate} (setting $\lambda = \delta/2$), we obtain the lower bound to $\tqual$ in the statement, which holds with probability $\geq 1 - \delta$ from a union bound. 
\end{proof}

\subsubsection{Power analysis of \algnameu}

The following is a restatement of Theorem~5 of~\cite{li2001improved},
that provides uniform convergence bounds for families of functions with bounded pseudodimension \cite{pollard2012convergence,ShalevSBD14}. 
\begin{theorem}
\label{thm:pseudoappxthm}
Let $\F$ be a family of functions from a domain $\X$ to $[a,b] \subset \R$ with pseudodimention $P(\F) \leq r$. 
Let $\sample = \{ s_{1} , \dots , s_{m}\}$ be a random sample of size $m$ taken i.i.d. from a distribution $\probdist$. It holds  
\begin{align*}
\Pr\pars{ \sup_{f \in \F} \left\lvert \E_{\sample} \sqpars{ \frac{1}{m} \sum_{i=1}^{m} f(s_{i}) } - \frac{1}{m} \sum_{i=1}^{m} f(s_{i}) \right\rvert > t } \leq \exp\pars{ r - \frac{t^2 m }{ \tilde{c} (b-a)^2 } } ,
\end{align*}
where $\tilde{c}$ is an absolute constant. 
\end{theorem}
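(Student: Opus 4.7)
The plan is to follow the classical template for pseudodimension-based uniform convergence, combining a symmetrization step, a conditional reduction to a finite effective class via a pseudo-Sauer-type counting lemma, and a concentration union bound. The delicate part is obtaining the specific form $\exp(r - t^2 m/(\tilde{c}(b-a)^2))$ in which $r$ enters \emph{additively} in the exponent, rather than being multiplied by the logarithmic factor $\ln m$ that a naive union bound would produce.

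First I would carry out a ghost-sample symmetrization: introducing an independent copy $\sample'$ of $\sample$ and applying Chebyshev's inequality to the easy deviation of the empirical mean on $\sample'$ from its expectation, one obtains, for $t$ large enough compared to $(b-a)/\sqrt{m}$,
\begin{align*}
\Pr\pars{ \sup_{f \in \F} \abs{ \E_{\sample}\sqpars{ \tfrac{1}{m}\textstyle\sum_{i} f(s_i) } - \tfrac{1}{m}\textstyle\sum_{i} f(s_i) } > t } \leq 2 \Pr\pars{ \sup_{f \in \F} \abs{ \tfrac{1}{m}\textstyle\sum_{i} ( f(s'_i) - f(s_i) ) } > t/2 }.
\end{align*}
I would then condition on the combined $2m$-sample $\sample \cup \sample'$ and exploit the exchangeability of the pairs $(s_i, s'_i)$ to introduce i.i.d.\ Rademacher signs $\sigma_1, \dots, \sigma_m$, reducing the task to bounding the tail of a symmetric empirical process restricted to the (finite) projection of $\F$ onto those $2m$ fixed points.

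With the sample frozen, I would invoke Haussler's pseudo-Sauer lemma: the number of $\epsilon$-distinguishable behaviors of a class with pseudodimension at most $r$ on $2m$ points is at most $(c_1 m/r)^r$ for an absolute constant $c_1$. Choosing $\epsilon$ of order $t$, applying Hoeffding's inequality (using that the summands lie in $[a,b]$, so they are sub-Gaussian with parameter $(b-a)$) separately for each function of this $\epsilon$-net, and then taking a union bound over the net would yield a tail of the shape $(c_1 m/r)^r \exp(-c_2 t^2 m/(b-a)^2)$.

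The main obstacle is shaving off the $r \ln(c_1 m/r)$ contribution from this naive union bound, since the theorem asks only for $r$ in the exponent. The standard remedy, which I would adopt, is Dudley-style chaining: instead of a single union bound at one resolution, construct a sequence of covers at geometrically decreasing scales $\epsilon_k = 2^{-k}(b-a)$, apply Hoeffding at each link of the chain, and sum the resulting sub-Gaussian tails. Because the entropy satisfies $\ln N(\epsilon_k, \F) \lesssim r \ln((b-a)/\epsilon_k)$ and is integrable against the sub-Gaussian tail, the chaining sum contributes only an additive $O(r)$ inside the exponent, which is then absorbed into the stated form by enlarging the absolute constant $\tilde{c}$ suitably.
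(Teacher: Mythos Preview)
The paper does not prove this statement: it is explicitly introduced as a restatement of Theorem~5 of Li, Long, and Srinivasan~\cite{li2001improved}, and is used as a black box in the subsequent power analysis. So there is no proof in the paper to compare your proposal against.

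Your sketch is broadly the route taken in the original reference (symmetrization, then covering-number control combined with chaining to avoid the extraneous $\ln m$ factor), and is therefore appropriate. One slip worth flagging: what you call ``Haussler's pseudo-Sauer lemma'' with the bound $(c_1 m/r)^r$ is actually the Sauer--Shelah--Perles growth bound, which depends on the sample size $m$ and would \emph{not} let you remove the $r\ln m$ term even with chaining. Haussler's packing lemma is the sample-size-independent bound $N(\epsilon,\F)\le e(r+1)(2e/\epsilon)^r$, which is precisely the entropy estimate $\ln N(\epsilon_k,\F)\lesssim r\ln((b-a)/\epsilon_k)$ you invoke later in the chaining step. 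With that correction your outline is sound and matches the argument in~\cite{li2001improved}.
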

We note that the absolute constant $\tilde{c}$ in the Theorem above is estimated to be at most $0.5$ \cite{loffler2009shape}.

Using the result above, we prove the following bounds to the supports and qualities for all subgroups.
\begin{proposition}
\label{thm:unifboundsfreqqual}
Let $\lang$ be the language of subgroups composed by conjunctions with at most $z$ conditions over $d$ continuous features, and define $\mathsf{f}_{\patt} = \E_\dataset[\freqp]$.  
With probability $\geq 1 - \delta$ w.r.t. $\dataset$ it holds, for all $\patt \in \lang$, 
\begin{align*}
\freqp &\leq \mathsf{f}_{\patt} + \sqrt{ \frac{ 2z + \ln \bigl( \sum_{i=1}^{z} \binom{d}{i} \bigr) + \ln \bigl( \frac{2}{\delta} \bigr) }{ 2m } } , \\
\squalp &\geq \tqual - \sqrt{ \frac{ 2z + \ln \bigl( \sum_{i=1}^{z} \binom{d}{i} \bigr) + \ln \bigl( \frac{2}{\delta} \bigr) }{ 2m } } .
\end{align*}
\end{proposition}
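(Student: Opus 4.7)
The plan is to derive both uniform inequalities from the pseudodimension-based concentration result \Cref{thm:pseudoappxthm}, after partitioning $\lang$ according to which subset of features the conjunction terms refer to. Let $N_z = \sum_{i=1}^{z} \binom{d}{i}$, and for each $A \subseteq [1,d]$ with $|A| = v \leq z$ write $\lang_A \subseteq \lang$ for the subgroups whose conditions only involve features in $A$, so that $\lang = \bigcup_{A} \lang_A$ with $N_z$ subfamilies in total. As already exploited in the proof of \Cref{thm:distinctsubgroups}, the projection of $\lang_A$ onto $\dataset$ coincides with the class of axis-aligned rectangles in $\R^v$, hence the family $\{f_\patt : \patt \in \lang_A\}$ has VC dimension (and therefore pseudodimension) at most $2v \leq 2z$.

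I would then apply \Cref{thm:pseudoappxthm} to $\{f_\patt : \patt \in \lang_A\}$ with pseudodimension $r = 2z$, range $[0,1]$, absolute constant $\tilde{c} = 1/2$, and target failure probability $\delta_A = \delta/(2N_z)$, obtaining that with probability at least $1 - \delta_A$,
\begin{align*}
\sup_{\patt \in \lang_A} \abs{\freqp - \mathsf{f}_\patt} \leq \sqrt{\frac{2z + \ln(2N_z/\delta)}{2m}}.
\end{align*}
A union bound over the $N_z$ subfamilies, together with the identity $\ln(2N_z/\delta) = \ln N_z + \ln(2/\delta)$, yields the first stated bound with probability at least $1 - \delta/2$. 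For the quality inequality I would repeat the argument applied to $\{g^*_\patt : \patt \in \lang_A\}$, noting that $g^*_\patt(s,\ell) = f_\patt(s)(\ell - \mu) \in [-\mu, 1-\mu]$ has range $1$ and that $\E_\dataset[\squalp] = \tqual$, so the two-sided tail bound on $|\squalp - \tqual|$ produces the desired one-sided control with failure probability $\delta/2$. Combining the two events by a final union bound gives probability at least $1-\delta$.

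The main technical point, which I expect to be the chief obstacle, is verifying that the pseudodimension of $\{g^*_\patt : \patt \in \lang_A\}$ is also at most $2v$. This reduces to a direct argument: pseudo-shattering a point $(s,\ell)$ by $\{g^*_\patt\}$ requires a threshold $r$ such that the event $\{g^*_\patt(s,\ell) > r\}$ is controlled by the $\{0,1\}$-value $f_\patt(s)$. Since $\ell - \mu$ only takes the two values $1-\mu > 0$ and $-\mu < 0$, choosing $r$ strictly between $0$ and $\ell - \mu$ (with the appropriate sign) makes this event equivalent to either $\{f_\patt(s) = 1\}$ or $\{f_\patt(s) = 0\}$. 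Therefore any pseudo-shattered subset of points corresponds to a subset of feature vectors shattered by $\{f_\patt\}$ under a possibly flipped labeling; since the VC shatter function is invariant under label complementation, the bound $2v$ transfers to the quality family, completing the argument.
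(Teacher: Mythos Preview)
Your proposal is correct and follows essentially the same route as the paper: partition $\lang$ by the subset $A$ of active features, use the VC dimension $2v$ of axis-aligned rectangles in $\R^v$ together with \Cref{thm:pseudoappxthm}, and take a union bound over the $\sum_{i=1}^z \binom{d}{i}$ subfamilies, doing this once for the frequencies and once for the qualities. The only noteworthy difference is that, for the pseudodimension of $\{g^*_\patt : \patt \in \lang_A\}$, the paper simply invokes an external result (Lemma~3.6 of \cite{riondato2020misosoup}) stating that composing a binary class with a fixed real-valued map preserves the dimension, whereas you give a short direct argument via the two-valued structure of $g^*_\patt(s,\ell)\in\{0,\ell-\mu\}$; your argument is correct and makes the proof self-contained.
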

\begin{proof}
Consider a dataset $\dataset$ with $d$ continuous features, and let $[1,d]$ be the indices of these features.
Let any $A \subseteq [1,d]$ with $|A| = v \leq z$, and define the language $\lang_A \subseteq \lang$ as all subgroups with $v$ conjunction terms involving conditions on the features of $A$, such as inequalities or intervals. 

We first prove the bound for $\freqp$. 
For any $\patt \in \lang_A$, recall the functions $f_\patt(s) = \ind{\patt \in s}$, and let $\F_A = \{ f_\patt : \patt \in \lang_A \}$ be the family of such functions. 
It is immediate to observe that the average value of $f_\patt$ over $\dataset$ is equal to $\freqp$. 
As discussed previously, $\F_A$ is equivalent to the class of axis-aligned rectangles in $\R^v$, with VC-dimension $2v$ (Problem~6.5 in \cite{ShalevSBD14}).
We apply \Cref{thm:pseudoappxthm} to the particular case of the binary function family $\F_A$, obtaining that 
\begin{align*}
\Pr \Bigl( \sup_{f_{\patt} \in \F_A} \left\lvert \mathsf{f}_{\patt} - \freqp \right\rvert > t \Bigr) \leq \exp ( 2v - 2 t^2 m ) .
\end{align*}
From an union bound over all subsets $A \subseteq [1,d]$ with cardinality $\leq z$, we have
\begin{align*}
&\Pr \Bigl( \sup_{A} \sup_{f_{\patt} \in \F_A} \left\lvert \mathsf{f}_{\patt} - \freqp \right\rvert > t \Bigr) \\
&\leq \sum_A \Pr \Bigl( \sup_{f_{\patt} \in \F_A} \left\lvert \mathsf{f}_{\patt} - \freqp \right\rvert > t \Bigr) 
\leq \sum_{i=1}^z \binom{d}{i} \exp ( 2i - 2 t^2 m ) .
\end{align*}
Setting the r.h.s. to be $\leq \delta/2$, we obtain the first bound. 

We now focus on the bound for $\squalp$.
For any $\patt \in \lang_A$, recall the functions $g_\patt(s,\ell) = \ind{\patt \in s}(\ell - \mu)$, and let $\mathcal{G}_A = \{ g_\patt : \patt \in \lang_A \}$ be the family of such functions. 
It is immediate to observe that the average value of $g_\patt$ over $\dataset$ is equal to $\squalp$. 
By using Lemma~3.6 of \cite{riondato2020misosoup}, 
we obtain that the pseudodimension of $\mathcal{G}_A$ is equal to the VC-dimension of $\F_A$, which is $2v$.
Therefore, we apply \Cref{thm:pseudoappxthm} to $\mathcal{G}_A$, having
\begin{align*}
\Pr \Bigl( \sup_{g_\patt \in \mathcal{G}_A} \left\lvert \tqual - \squalp \right\rvert > t \Bigr) \leq \exp ( 2v - 2 t^2 m ) .
\end{align*}
We complete the proof following the same steps for the frequencies. 
\end{proof}

We are now ready to prove \Cref{thm:power_unconditional}. 
\begin{proof}[Proof of \Cref{thm:power_unconditional}]
From the guarantees of \Cref{thm:corupperbounddevsub}, the value of $\varepsilon$ defined in the statement is an upper bound to the value of $\varepsilon$ computed by \algnameu\ in Algorithm~\ref{algo:main} with probability $\geq 1 - \delta/3$ (since $\lambda = \delta/3$ in the definition of $\hat{r}$).
We now prove a condition on the quality $\tqual$ for a pattern to be reported in output.
From \Cref{thm:unifboundsfreqqual} (replacing $\delta/2$ by $\delta/3$), 
a pattern with $\tqual \geq \xi$ has, with probability $\geq 1 - \delta/3$,
its estimate $\squalp \geq \xi - \varepsilon_1$, where 
\begin{align*}
\varepsilon_1 = \sqrt{ \frac{ 2z + \ln \bigl( \sum_{i=1}^{z} \binom{d}{i} \bigr) + \ln \bigl( \frac{3}{\delta} \bigr) }{ 2m } } .
\end{align*}
Moreover, from the relation between $\squalp$ and $\qualp$, it holds 
$\qualp \geq \xi - \varepsilon_1 - \varepsilon_T \freqp$.
Consequently, $\patt$ is reported in output if
$\qualp \geq \xi - \varepsilon_1 - \varepsilon_T \freqp \geq \varepsilon + \varepsilon_T \freqp$, so if
$\xi \geq \varepsilon_1 + 2\varepsilon_T \freqp + \varepsilon$.
Using the upper bound to $\freqp$ proved in \Cref{thm:unifboundsfreqqual} (replacing $\delta/2$ by $\delta/3$), the condition
$\tqual \geq \xi \geq \varepsilon_1 + 2\varepsilon_T (\tfreq + \varepsilon_1) + \varepsilon$
is sufficient to guarantee that, with probability $\geq 1 - \delta$, $\patt$ is reported in output, obtaining the statement. 
\end{proof}

\subsection{Baseline methods}
\label{sec:baselines}
In this Section we provide additional details on the baseline algorithm 
\algnameuub\ 
considered in our experimental evaluation. 
As anticipated in \Cref{sec:experiments}, it is not possible to apply standard statistical techniques (e.g., Bonferroni correction) to bound the FWER when the number $|\lang|$ of tested hypothesis is unbounded. 
Therefore, to overcome this issue we proceed as follows. 
Recall that $N_\lang(\dataset)$ is the number of distinct projections of the language $\lang$ on the dataset $\dataset$ (defined in \Cref{sec:app_power}). 
The key idea for \algnameuub\ is to apply a Bonferroni correction on the $N_\lang(\dataset)$ distinct tested hypothesis only. 
We do so with the following result, that defines the value of $\varepsilon$ used by \algnameuub\ in Algorithm~1 to bound the FWER. 
Note that \algnameuub\ does not use any resamples, but uses $N_\lang(\dataset)$ to obtain an analytical value for $\varepsilon$ instead. 
\begin{restatable}{theorem}{thmboundsupdevub}
\label{thm:bound_dev_ub}
Let $\dataset$ be a dataset of $m$ samples taken i.i.d. from a distribution $\probdist$. 
For any $\delta \in (0,1)$, define $\nu_T \geq \mu (1 - \mu)$, $\nu \geq \nu_T \sup_{\patt \in \lang^\star} \brpars{ \E_\dataset\sqpars{ \freqp } }$, $\hat{N} \geq N_\lang(\dataset)$, and  $\varepsilon$ as
\begin{align*}
& \hat{r} = \sqrt{ \frac{ \ln ( \frac{4 \hat{N} }{\delta} ) }{ 2m } } \\
& \hat{d} = \hat{r} + \sqrt{ \pars{\frac{2 \nu_T \unionbdelta }{ m }}^2 + \frac{ 2 \hat{r} \unionbdelta }{ m }} + \frac{2 \nu_T \unionbdelta }{ m } \\
& \varepsilon \doteq \hat{d} + \sqrt{\frac{2 \unionbdelta
    \left( \nu + 2\hat{d} \right)}{m}}
        + \frac{ \unionbdelta}{3m} . % \numberthis \label{eq:epsdevub} . 
\end{align*}
With probability at least $ 1-\delta$ over $\dataset$ it holds
\begin{align*}
\sup_{\patt \in \lang^\star} \brpars{ \squalp  } \leq \varepsilon .
\end{align*}
\end{restatable}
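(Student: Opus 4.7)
The plan is to mirror the proof of \Cref{thm:bound_dev}, substituting only the step that controls the intermediate quantity $\hat{r}$: instead of estimating $\hat{r}$ from $c$ resampled datasets via McDiarmid's inequality, here I would derive an analytical upper bound that exploits the finiteness of the projection count $N_\lang(\dataset) \leq \hat{N}$. The outer structure is unchanged: setting $Y = \E_\dataset[\sup_{\patt \in \lang^\star} \squalp]$ and using $\sup_{\patt \in \lang^\star} \mathrm{Var}_\dataset(\squalp) \leq \nu$, Bousquet's inequality gives the relation between $\varepsilon$ and $\hat{d}$ (holding with probability at least $1 - \delta/4$), and the same self-bounding concentration used in \Cref{thm:bound_dev} gives $\E_\A[Z] \leq \hat{d}$ with probability at least $1 - \delta/4$ over $\A$, where $Z = \E_{\T^\star}[\sup_{\patt \in \lang} \squal{\patt}{\dataset^\star} \mid \A]$. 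Together these reduce the theorem to producing an upper bound $\hat{r}$ on $Z$.

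The key new step is to derive $Z \leq \hat{r} = \sqrt{\ln(4\hat{N}/\delta)/(2m)}$ analytically. For any fixed $\patt$, the quantity $\squal{\patt}{\dataset^\star} = \frac{1}{m} \sum_{i=1}^m f_\patt(s_i)(\xi_i - \mu)$ is an average of $m$ independent zero-mean random variables with range at most $1$, so Hoeffding's inequality yields $\Pr_{\T^\star}(\squal{\patt}{\dataset^\star} > t \mid \A) \leq \exp(-2mt^2)$. Any two patterns inducing the same projection on $\dataset$ produce identical statistics, so a union bound over the at most $\hat{N}$ distinct projections gives $\Pr_{\T^\star}(\sup_{\patt \in \lang} \squal{\patt}{\dataset^\star} > \hat{r} \mid \A) \leq \delta/4$. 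Converting this tail bound into a bound on the expectation $Z$ (either by integrating $\int_{\hat{r}}^\infty \hat{N} e^{-2mt^2}\, dt$ or by using the trivial envelope $\sup \leq 1$ on the exceptional event) yields $Z \leq \hat{r}$ up to additive terms absorbable into the slack already present in $\hat{r}$. The mismatch between the true parameter $\mu$ and its data-dependent bounds $\hat{\mu},\check{\mu}$ is handled by the same coupling used in the proof of \Cref{thm:bound_dev}, and a union bound over the three high-probability events (Bousquet concentration, self-bounding concentration of $Z$, and the Hoeffding + union bound) gives the claimed overall failure probability $\leq \delta$.

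I expect the main obstacle to be ensuring that the Hoeffding + union bound step cleanly produces a deterministic upper bound on the expectation $Z$ with constants matching $\hat{r} = \sqrt{\ln(4\hat{N}/\delta)/(2m)}$ as displayed, since Hoeffding naturally yields a high-probability bound on the supremum rather than on its mean. Because no empirical proxy for $Z$ is available without resamples, the analytical bound must carry the full $\ln \hat{N}$ cost of the supremum through the union bound, so $\hat{r}$ scales as $\sqrt{\ln \hat{N}/m}$ rather than enjoying the $1/\sqrt{cm}$ rate of the resample estimator used in \algnameu; this is the inherent cost of the baseline and is consistent with the looser bounds observed empirically for \algnameuub.
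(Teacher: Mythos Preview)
Your proposal is correct and follows essentially the same route as the paper: both reuse the structure of the proof of \Cref{thm:bound_dev} (Bousquet's inequality plus the self-bounding argument) and replace only the step that bounds $\hat{r}$, obtaining the analytical value $\sqrt{\ln(4\hat{N}/\delta)/(2m)}$ via Hoeffding's inequality together with a union bound over the at most $N_\lang(\dataset) \leq \hat{N}$ distinct projections. You also correctly flag the one delicate point, namely turning the tail bound on $\sup_{\patt}\squal{\patt}{\dataset^\star}$ into a bound on its expectation $Z$; the paper handles this just as loosely as you do, simply asserting that the high-probability bound transfers to $\E_{\resset}[\devest\mid\A]$, and the clean way to close the gap is the sub-Gaussian maximal inequality, which gives $Z \leq \sqrt{\ln N_\lang(\dataset)/(2m)} \leq \hat{r}$ deterministically. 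One small over-inclusion: the coupling argument from \Cref{thm:bound_dev} is not needed here, since no resamples with parameter $\hat{\mu}$ are generated and $\hat{r}$ does not depend on $\mu$ at all; the paper's proof accordingly omits it.
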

\begin{proof}
We follow similar steps taken in the proof of \Cref{thm:bound_dev}; however, the key difference is in the upper bound $\hat{r}$ to $\E_{\resset}[ \devest \given \A ]$. 
We have, using Hoeffding's inequality and an union bound, that for any $t \geq 0$  
\begin{align*}
\Pr_{\resset}\pars{ \devest \geq t }
\leq N_\lang(\dataset) \exp\pars{ - 2 m t^2 }
\leq \hat{N} \exp\pars{ - 2 m t^2 } .
\end{align*}
This implies that, with probability $\geq 1-\delta/4$ over $\resset$, it holds
\begin{align*}
\devest \leq \sqrt{ \frac{ \ln ( \frac{4 \hat{N} }{\delta} ) }{ 2m } } .
\end{align*}
Taking the expectation w.r.t. $\resset$, conditionally on $\A$, proves that 
$\E_{\resset}[ \devest \given \A ] \leq \hat{r}$ and gives the statement. 
\end{proof}

\subsection{Details on CNN interpretation}
\label{sec:mnistexperiment}

In this section we provide all details regarding the application of \algname\ to interpret the activation of neurons in a CNN.
We considered the MNIST handwritten digit dataset, and train a CNN to predict the digit contained in each image.
Following the experimental setup of~\cite{fischer21b}, 
we employ a simple CNN composed by: 
two convolutional layers, with respectively $10$ and $40$ filters and $3\times 3$ kernels;
each convolutional layer is followed by $2 \times 2$ maxpooling and dropout of rate $0.25$;
a fully connected layer with $64$ nodes and \textit{ReLu} activations;
a dropout layer with rate $0.5$;
the output layer of size $10$ with softmax activations. 
We trained the network over the $60000$ images of the training set,
using SGD based on categorical cross entropy loss, with 
learning rate $0.01$, 
momentum $0.9$,
$12$ epochs, and batch size of $128$,
obtaining an accuracy of $0.987$ on the $10000$ holdout instances.
Then, analogously to~\cite{fischer21b}, we computed the activations 
of neurons in the first filter of the first convolutional layer on the $10000$ testing instances, obtaining $d = 676$ continuous features for $m=10000$ transactions. 
As binary target, we use the value $1$ for all digits containing straight lines only (the digits $1$ and $7$), and $0$ for all other digits.
Doing so, we obtain a fraction $\mu(\dataset)=0.216$ of samples with label $1$.
We ran \algnamec\ on this dataset, using $c=10$ resamples and $z=1$, to identify significant subgroups with FWER $\leq 0.05$.
For each neuron, we identify the most significant subgroup containing a condition over the corresponding continuous feature $x$.
We focus on conditions of the type $x \geq t$ and $x < t$, where $t$ is any real valued threshold; 
the condition $x \geq t$ denotes that the neuron $x$ is activated, with values at least $t$, while $x < t$ means that $x$ is inactive, with a value at most $t$.

In \Cref{fig:mnist} we show the results of this experiment. 
Figures~\ref{fig:mnist}-(a) and (b) show the average activation values of all neurons with label $1$ (corresponding to the digits $1$ and $7$) and label $0$ (all other digits). 
Then, in \Cref{fig:mnist}-(c) we show the neurons with activation conditions significantly associated to the target label $1$:
neurons that are significantly activated, i.e., with conditions $x \geq t$, are shown in red, while neurons that are significantly inactive, i.e., with conditions $x < t$, are shown in blue.
The color intensity (i.e., toward red or blue) depends on the threshold $t$: 
for activated neurons it saturates at the maximum activation value,
while for inactive neurons at the minimum. 
Interestingly, we observe that the neurons in this filter are significantly activated in the presence of a central vertical stroke (red pixels), that specifically characterizes the digits $1$ and $7$ (Figure~\ref{fig:mnist}-(a)). 
We remark, however, that this observation is not immediate from the average activations alone, and may be lost when the activation values are binarized, since \algname\ identifies subgroups that precisely separate the two classes of digits by using granular threshould values. 
On the countrary, neurons that are significantly inactive (blue pixels) are the ones surrounding this vertical area, forming two rounded shapes, which are typical of the other digits (Figure~\ref{fig:mnist}-(b)). 
Interestingly, these blue regions seem to shape the negative of the digits $1$ and $7$.

From this experiment we conclude that \algname\ can be effectively applied to identify complex patterns, such as the ones for the interpretation of the activations within a neural layer.

\begin{figure*}[ht]
\setlength{\fboxsep}{0pt}%
\setlength{\fboxrule}{0.8pt}%
\begin{subfigure}{.04\textwidth}
  \centering
  \includegraphics[width=\textwidth]{./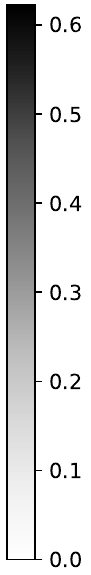}
  \vspace{0.2cm}
\end{subfigure}
\hspace{0.05cm}
\begin{subfigure}{.28\textwidth}
  \centering
  \fbox{\includegraphics[width=\textwidth]{./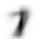}}
  \caption{}
\end{subfigure}
\hspace{0.5cm}
\begin{subfigure}{.28\textwidth}
  \centering
  \fbox{\includegraphics[width=\textwidth]{./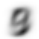}}
  \caption{}
\end{subfigure}
\hspace{0.5cm}
\begin{subfigure}{.28\textwidth}
  \centering
  \fbox{\includegraphics[width=\textwidth]{./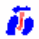}}
  \caption{}
\end{subfigure}
\caption{ 
Average activation of neurons of the first filter in the first convolutional layer, for the digits $1$ and $7$ (a) and for the other digits (b).
(c): neurons that are significantly activated for the digits $1$ and $7$ (in red) and inactive (in blue) identified by \algname. 
}
\label{fig:mnist}
\end{figure*}

\fi

\clearpage

\end{document}
\endinput